\newtheorem{fact}{Fact}
\newcommand{\mbf}[1]{\mathbf{#1}}
\newcommand{\zb}{\mbf{z}}
\newcommand{\yb}{\mbf{y}}
\newcommand{\mub}{\bm{\mu}}
\newcommand{\lambdabhat}{\bm{\rho}}
\newcommand{\bb}{\mbf{b}}
\newcommand{\ub}{\mbf{u}}
\newcommand{\lb}{\mbf{l}}
\newcommand{\xb}{\mbf{x}}
\newcommand{\balpha}{\bm{\alpha}}
\newcommand{\bbeta}{\bm{\beta}}
\newcommand{\bzeta}{\bm{\zeta}}
\newcommand{\ubrv}{\check{U}}
\newcommand{\lbrv}{\check{L}}
\newcommand{\xbhat}{\mbf{\hat{x}}}
\newcommand{\mtt}{\mathtt}
\newcommand{\sig}[1]{\sigma\left(#1\right)}
\DeclareMathOperator*{\argmax}{\arg\!\max}
\DeclareMathOperator*{\argmin}{\arg\!\min}
\newcommand{\xelim}{\xb_{k-1}^0[0]}
\newcommand{\welim}{\boldsymbol{w}_{i, k}[0]}
\newcommand{\wxrem}{\sum_{j>1} \boldsymbol{w}_{i, k}[j] \xb_{k-1}^0[j]}
\newcommand{\wcurr}{\boldsymbol{w}_{i, k}}
\newcommand\Tstrut{\rule{0pt}{2.6ex}}       
\newcommand\Bstrut{\rule[-0.9ex]{0pt}{0pt}} 
\newcommand{\TBstrut}{\Tstrut\Bstrut} 
\begin{document}

\title{Scaling the Convex Barrier with Sparse Dual Algorithms}

\author{\name Alessandro De Palma \email adepalma@robots.ox.ac.uk\\
			\name Harkirat Singh Behl \email harkirat@robots.ox.ac.uk\\
			\name Rudy Bunel \email bunel.rudy@gmail.com\\
			\name Philip H.S. Torr \email phst@robots.ox.ac.uk \\
			\name M. Pawan Kumar \email pawan@robots.ox.ac.uk \\
	        \addr Department of Engineering Science\\
	        University of Oxford\\
	        Oxford OX1 3PJ}

\editor{Simon Lacoste-Julien}

\maketitle

\begin{abstract}
Tight and efficient neural network bounding is crucial to the scaling of neural network verification systems. 
Many efficient bounding algorithms have been presented recently, but they are often too loose to verify more challenging properties.
This is due to the weakness of the employed relaxation, which is usually a linear program of size linear in the number of neurons.
While a tighter linear relaxation for piecewise-linear activations exists, it comes at the cost of exponentially many constraints and currently lacks an efficient customized solver.
We alleviate this deficiency by presenting two novel dual algorithms: 
one operates a subgradient method on a small active set of dual variables, the other exploits the sparsity of Frank-Wolfe type optimizers to incur only a linear memory cost.  
Both methods recover the strengths of the new relaxation: tightness and a linear separation oracle. 
At the same time, they share the benefits of previous dual approaches for weaker relaxations: massive parallelism, GPU implementation, low cost per iteration and valid bounds at any time.
As a consequence, we can obtain better bounds than off-the-shelf solvers in only a fraction of their running time, attaining significant formal verification~speed-ups.
\end{abstract}

\begin{keywords}
neural network verification, adversarial machine learning, convex optimization, branch and bound
\end{keywords}

\section{Introduction}
Verification requires formally proving or disproving that a given property of a neural network holds over all inputs in a specified domain. We consider properties in their canonical form~\citep{Bunel2018}, which requires us to either: (i) prove that no input results in a negative output (property is true); or (ii) identify a counter-example (property is false). The search for counter-examples is typically performed by efficient methods such as random sampling of the input domain~\citep{Webb2019}, or projected gradient descent~\citep{Kurakin2017,Carlini2017,Madry2018}. In contrast, establishing the veracity of a property requires solving a suitable convex relaxation to obtain a lower bound on the minimum output. 
If the lower bound is positive, the given property is true. 
If the bound is negative and no counter-example is found, either: 
(i) we make no conclusions regarding the property (incomplete verification); or (ii) we further refine the counter-example search and lower bound computation within a branch-and-bound framework until we reach a concrete conclusion (complete verification).

The main bottleneck of branch and bound is the computation of the lower bound for each node of the enumeration tree via convex optimization. While earlier works relied on off-the-shelf solvers~\citep{Ehlers2017,Bunel2018}, it was quickly established that such an approach does not scale-up elegantly with the size of the neural network. This has motivated researchers to design specialized dual solvers~\citep{Dvijotham2019,BunelDP20}, thereby providing initial evidence that verification can be realized in practice. However, the convex relaxation considered in the dual solvers is itself very weak~\citep{Ehlers2017}, hitting what is now commonly referred to as the ``convex barrier''~\citep{Salman2019}. In practice, this implies that either several properties remain undecided in incomplete verification, or take several hours to be verified exactly.

Multiple works have tried to overcome the convex barrier for piecewise linear activations~\citep{Raghunathan2018,Singh2019b}.
Here, we focus on the single-neuron Linear Programming (LP) relaxation by \citet{Anderson2020}.  
Unfortunately, its tightness comes at the price of exponentially many (in the number of variables) constraints. Therefore, existing dual solvers~\citep{Dvijotham2018,BunelDP20} are not easily applicable, limiting the scaling of the new relaxation.

We address this problem by presenting two specialized dual solvers for the relaxation by \citet{Anderson2020}, which realize its full potential by meeting the following desiderata:
\begin{itemize}
	\item Relying on an \emph{active set} of dual variables, we present a unified dual treatment that includes both a linearly sized LP relaxation~\citep{Ehlers2017} and the tighter formulation. 
	As a consequence, we obtain an inexpensive dual initializer, named Big-M, which is competitive with dual approaches on the looser relaxation \citep{Dvijotham2018,BunelDP20}. Moreover, by dynamically extending the active set, we obtain a subgradient-based solver, named Active Set, which rapidly overcomes the convex barrier and yields much tighter bounds if a larger computational budget is available.
	\item The tightness of the bounds attainable by Active Set depends on the memory footprint through the size of the active set. 
	By exploiting the properties of Frank-Wolfe style optimizers~\citep{Frank1956}, we present Saddle Point, a solver that deals with the exponentially many constraints of the relaxation by \citet{Anderson2020} while only incurring a linear memory cost.
	Saddle Point eliminates the dependency on memory at the cost of a potential reduction of the dual feasible space, but is nevertheless very competitive with Active Set in settings requiring tight bounds.
	\item Both solvers are \emph{sparse} and recover the strengths of the original primal problem \citep{Anderson2020} in the dual domain. In line with previous dual solvers, both methods yield valid bounds at anytime, leverage convolutional network structure and enjoy \emph{massive parallelism} within a GPU implementation, resulting in better bounds in an order of magnitude less time than off-the-shelf solvers \citep{gurobi-custom}. Owing to this, we show that both solvers can yield large complete verification gains compared to primal approaches \citep{Anderson2020} and previous dual algorithms.
\end{itemize}

Code implementing our algorithms is available as part of the OVAL neural network verification framework: \url{https://github.com/oval-group/oval-bab}.

A preliminary version of this work appeared in the proceedings of the Ninth International Conference on Learning Representations~\citep{DePalma2021}.
The present article significantly extends it by:
\begin{enumerate}
	\item Presenting Saddle Point ($\S$\ref{sec:sp}), a second solver for the relaxation by \citet{Anderson2020}, which is more memory efficient than both Active Set and the original cutting plane algorithm by \citet{Anderson2020}.
	\item Providing a detailed experimental evaluation of the new solver, both for incomplete and complete verification.
	\item Presenting an adaptive and more intuitive scheme to switch from looser to tighter bounding algorithms within branch and bound ($\S$\ref{sec:stratified}). 
	\item Investigating the effect of different speed-accuracy trade-offs from the presented solvers in the context of complete verification.
\end{enumerate}

\section{Preliminaries: Neural Network Relaxations} \label{sec:preliminaries}
We denote vectors by bold lower case letters (for example, $\xb$) and matrices by upper case letters (for example, $W$). 
We use $\odot$ for the Hadamard product, $\llbracket \cdot \rrbracket$ for integer ranges, $\mathds{1}_{{\mbf{a}}}$ for the indicator vector on condition $\mbf{a}$ and brackets for intervals ($[\lb_k,\ub_k]$) and vector or matrix entries ($\xb[i]$ or $W[i,j]$). In addition, $\text{col}_i(W)$ and $\text{row}_i(W)$ respectively denote the $i$-th column and the $i$-th row of matrix~$W$.
Finally, given $W \in \mathbb{R}^{m \times n}$ and $\xb \in \mathbb{R}^{m}$, we will employ $W \diamond \xb$ and $W \oblong \xb$ as shorthands for respectively $\sum_i \text{col}_i(W) \odot \xb$ and $\sum_i \text{col}_i(W)^T\xb$.

\vspace{3pt}
Let $\mathcal{C}$ be the network input domain. Similar to \citet{Dvijotham2018,BunelDP20}, we assume that the minimization of a linear function over $\mathcal{C}$ can be performed efficiently.
For instance, this is the case for $\ell_{\infty}$ and $\ell_2$ norm perturbations.
Our goal is to compute bounds on the scalar output of a piecewise-linear feedforward neural network.
The tightest possible lower bound can be obtained by solving the following optimization~problem:
\begin{subequations}
	\begin{alignat}{2}
	\smash{\min_{\xb, \xbhat}}\quad \hat{x}_{n} \qquad
	\text{s.t. }\quad& \xb_0 \in \mathcal{C},\label{eq:verif-inibounds}\span\\
	& \xbhat_{k+1} = W_{k+1} \xb_{k} + \mathbf{b}_{k+1} \quad && k \in \left\llbracket0,n-1\right\rrbracket,\label{eq:verif-linconstr}\\
	& \xb_{k} = \sig{\xbhat_k} \quad &&k \in \left\llbracket1,n-1\right\rrbracket,\label{eq:verif-relu}
	\end{alignat}
	\label{eq:noncvx_pb}
\end{subequations} 
where the activation function $\sig{\xbhat_k}$ is piecewise-linear, $\xbhat_{k}, \xb_{k} \in \mathbb{R}^{n_k}$ denote the outputs of the~$k$-th linear layer (fully-connected or convolutional) and activation function respectively, $W_{k}$ and $\mathbf{b}_{k}$ denote its weight matrix and bias, $n_k$ is the number of activations at layer k. 
We will focus on the ReLU case ($\sig{\xb} = \max\left( \xb, 0 \right)$), as common piecewise-linear functions can be expressed as a composition of ReLUs~\citep{Bunel2020}.

Problem \eqref{eq:noncvx_pb} is non-convex due to the activation function's non-linearity, that is, due to constraint \eqref{eq:verif-relu}.
As solving it is NP-hard~\citep{Katz2017}, it is commonly approximated by a convex relaxation (see $\S$\ref{sec:related-work}). 
The quality of the corresponding bounds, which is fundamental in verification, depends on the tightness of the relaxation. 
Unfortunately, tight relaxations usually correspond to slower bounding procedures.
We first review a popular ReLU relaxation in $\S$\ref{sec:relax-planet}. We then consider a tighter one in $\S$\ref{sec:relax-anderson}. 

\subsection{Planet Relaxation} \label{sec:relax-planet}
The so-called Planet relaxation~\citep{Ehlers2017} has enjoyed widespread use due to its amenability to efficient customized solvers~\citep{Dvijotham2018,BunelDP20} and is the ``relaxation of choice" for many works in the area~\citep{Salman2019,Singh2019a,Bunel2020,Balunovic2020,Lu2020Neural}.
Here, we describe it in its non-projected form $\mathcal{M}_k$, corresponding to the LP relaxation of the Big-M Mixed Integer Programming (MIP) formulation~\citep{Tjeng2019}. 
Applying $\mathcal{M}_k$ to problem \eqref{eq:noncvx_pb} results in the following linear program:
\begin{equation}
\begin{aligned}
\smash{\min_{\xb, \xbhat, \zb}}\quad \hat{x}_{n} \quad \text{s.t. }\quad& \xb_0
\in \mathcal{C}\span\\
& \xbhat_{k+1} = W_{k+1} \xb_{k} + \bb_{k+1}  && k \in \left\llbracket0, n-1 \right\rrbracket,\\
&\hspace{-8pt} \left. \begin{array}{l} 
\xb_{k} \geq \xbhat_{k}, \quad \xb_{k} \leq \hat{\ub}_k \odot \zb_{k}, \\
\xb_{k} \leq \xbhat_{k} - \hat{\lb}_k \odot  (1 - \zb_{k}),\\
(\xb_{k}, \xbhat_{k}, \zb_{k}) \in [\lb_k, \ub_k]  \times [\hat{\lb}_k, \hat{\ub}_k] \times [\mbf{0},\ \mbf{1}]  \end{array} \right\}  := \mathcal{M}_k &&  k \in \left\llbracket1, n-1\right\rrbracket,
\end{aligned}
\label{eq:primal-bigm}
\end{equation}
where $\hat{\lb}_k, \hat{\ub}_k$ and $\lb_k, \ub_k$ are \emph{intermediate bounds} respectively on pre-activation variables $\xbhat_{k}$ and post-activation variables $\xb_{k}$.
These constants play an important role in the structure of $\mathcal{M}_k$ and, together with the relaxed binary constraints on $\zb$, define box constraints on the variables.
We detail how to compute intermediate bounds in appendix \ref{sec:intermediate}.
Projecting out auxiliary variables $\zb$ results in the Planet relaxation (cf. appendix \ref{sec:planet-eq} for details), which replaces \eqref{eq:verif-relu} by its convex hull.

Problem \eqref{eq:primal-bigm}, which is linearly-sized, can be easily solved via commercial black-box LP solvers~\citep{Bunel2018}. 
This does not scale-up well with the size of the neural network, motivating the need for specialized solvers.
Customized dual solvers have been designed by relaxing constraints \eqref{eq:verif-linconstr}, \eqref{eq:verif-relu}~\citep{Dvijotham2018} or replacing \eqref{eq:verif-relu} by the Planet relaxation and employing Lagrangian Decomposition~\citep{BunelDP20}.
Both approaches result in bounds very close to optimality for problem \eqref{eq:primal-bigm} in only a fraction of the runtime of off-the-shelf solvers.  

\subsection{A Tighter Relaxation} \label{sec:relax-anderson}

A much tighter approximation of problem \eqref{eq:noncvx_pb} than the Planet relaxation ($\S$\ref{sec:relax-planet}) can be obtained by representing the convex hull of the composition of constraints \eqref{eq:verif-linconstr} and \eqref{eq:verif-relu} rather than the convex hull of constraint \eqref{eq:verif-relu} alone.
A formulation of this type was recently introduced by~\citet{Anderson2020}.
In order to represent the interaction between $\xb_k$ and all possible subsets of the activations of the previous layer, the formulation relies on a number of auxiliary matrices, which we will now introduce. 
Weight matrices are masked entry-wise via $I_k$, binary masks belonging to the following set: $\smash{2^{W_k} = \{0, 1\}^{n_k \times n_{k-1}}}$. We define $\mathcal{E}_k := 2^{W_k} \setminus \{0, 1\}$, to exclude the all-zero and all-one masks, which we treat separately (see $\S$\ref{sec:preacts}).
In addition, the formulation requires bounds on the subsets of $\xb_{k-1}$ selected via the masking. 
In order to represent these bounds, the formulation exploits matrices $\smash{\lbrv_{k-1}, \ubrv_{k-1} \in \mathbb{R}^{n_k \times n_{k-1}}}$, which are also masked via $I_k$ and computed via interval arithmetic as follows:
\begin{equation*}
\begin{aligned}
\lbrv_{k-1}[i,j] &= \lb_{k-1}[j] \mathds{1}_{W_k[i,j] \geq 0}  +  \ub_{k-1}[j] \mathds{1}_{W_k[i,j]  < 0}, \\
\ubrv_{k-1}[i,j] &= \ub_{k-1}[j] \mathds{1}_{W_k[i,j] \geq 0}  +  \lb_{k-1}[j] \mathds{1}_{W_k[i,j]  < 0}.
\end{aligned}
\vspace{5pt}
\end{equation*}
The new representation results in the following primal problem:
\begin{equation} 
\begin{aligned}
\min_{\xb, \xbhat, \zb}&  \ \hat{x}_{n}  \enskip
\text{s.t. }  \\[-5pt]
&\xb_0 \in \mathcal{C}\span\\
& \xbhat_{k+1} = W_{k+1} \xb_{k} + \bb_{k+1}  && k \in \left\llbracket0, n-1 \right\rrbracket,\\
&\hspace{-8pt} \left. \begin{array}{l}  (\xb_{k}, \xbhat_{k}, \zb_{k}) \in \mathcal{M}_k \\
\xb_k \leq \left( \hspace{-4pt} \begin{array}{l} \left(W_k  \odot I_{k}\right)  \xb_{k-1}  + \zb_k \odot \bb_k\ + \\
- \left(W_k  \odot I_{k} \odot \lbrv_{k-1}\right) \diamond (1 - \zb_k)\ + \\
+ \left(W_k \odot  \left(1 - I_{k}\right) \odot \ubrv_{k-1}\right) \diamond \zb_k \end{array} \hspace{-4pt} \right) 
\ \forall I_{k} \in \mathcal{E}_k \end{array} \right\}  := \mathcal{A}_k   && k \in \left\llbracket1, n-1\right\rrbracket.
\end{aligned}
\label{eq:primal-anderson}
\end{equation}

Both $\mathcal{M}_k$ and $\mathcal{A}_k$ yield valid MIP formulations for problem \eqref{eq:noncvx_pb} when imposing integrality constraints on $\zb$. However, the LP relaxation of $\mathcal{A}_k$ will yield tighter bounds. 
In the worst case, this tightness comes at the cost of exponentially many constraints: one for each $I_k \in {\cal E}_k$. 
On the other hand, given a set of primal assignments $(\xb, \zb)$ that are not necessarily feasible for problem \eqref{eq:primal-anderson}, one can efficiently compute the most violated constraint (if any) at that point. 
Denoting by $\mathcal{A}_{\mathcal{E}, k} = \mathcal{A}_{k} \setminus \mathcal{M}_{k}$ the exponential family of constraints, the mask associated to the most violated constraint in $\mathcal{A}_{\mathcal{E}, k}$ can be computed in linear-time~\citep{Anderson2020} as:
\begin{equation}
I_k[i, j] = \mathds{1}^T_{ \left( \left( 1 - \zb_{k}[i] \right)\odot \lbrv_{k-1}[i,j] + \zb_{k}[i] \odot \ubrv_{k-1}[i,j] - \xb_{k-1}[j] \right) W_k[i, j]  \geq 0 }.
\label{eq:oracle}
\end{equation}
The most violated constraint in $\mathcal{A}_k$ is then obtained by comparing the constraint violation from the output of oracle \eqref{eq:oracle} to those from the constraints in $\mathcal{M}_k$.

\subsubsection{Pre-activation Bounds} \label{sec:preacts}
Set $\mathcal{A}_k$ defined in problem \eqref{eq:primal-anderson} slightly differs from the original formulation of~\cite{Anderson2020}, as the latter does not exploit pre-activation bounds $\hat{\lb}_k, \hat{\ub}_k$ within the exponential family. In particular, constraints $\xb_{k} \leq \hat{\ub}_k \odot \zb_{k}$, and $\xb_{k} \leq \xbhat_{k} - \hat{\lb}_k \odot  (1 - \zb_{k})$, which we treat via $\mathcal{M}_k$, are replaced by looser counterparts.
While this was implicitly addressed in practical applications~\citep{Botoeva2020}, 
not doing so has a strong negative effect on bound tightness, possibly to the point of yielding looser bounds than problem \eqref{eq:primal-bigm}. 
In appendix~\ref{sec:intermediate-anderson}, we provide an example in which this is the case and extend the original derivation by~\citet{Anderson2020} to recover $\mathcal{A}_k$ as in problem \eqref{eq:primal-anderson}. 

\subsubsection{Cutting Plane Algorithm}
Owing to the exponential number of constraints, problem \eqref{eq:primal-anderson} cannot be solved as it is.
As outlined by~\citet{Anderson2020}, the availability of a linear-time separation oracle \eqref{eq:oracle} offers a natural primal \emph{cutting plane}  algorithm, which can then be implemented in off-the-shelf solvers:
solve the Big-M LP \eqref{eq:primal-bigm}, then iteratively add the most violated constraints from $\mathcal{A}_k$ at the optimal solution.
When applied to the verification of small neural networks via off-the-shelf MIP solvers, this leads to substantial gains with respect to the looser Big-M relaxation~\citep{Anderson2020}.

\section{A Dual Formulation for the Tighter Relaxation} \label{sec:solver}

Inspired by the success of dual approaches on looser relaxations~\citep{BunelDP20, Dvijotham2019}, we show that the formal verification gains by \citet{Anderson2020} (see $\S$\ref{sec:relax-anderson}) scale to larger networks if we solve the tighter relaxation in the dual space.
Due to the particular structure of the relaxation, a customized solver for problem \eqref{eq:primal-anderson} needs to meet a number of requirements.

\begin{fact}
	In order to replicate the success of previous dual algorithms on looser relaxations, we need a solver for problem \eqref{eq:primal-anderson}  with the following properties: 
	(i) \emph{sparsity}: a memory cost linear in the number of network activations in spite of exponentially many constraints, 
	(ii) \emph{tightness}: the bounds should reflect the quality of those obtained in the primal space, 
	(iii) \emph{anytime}: low cost per iteration and valid bounds at each step.   
	\label{fact}
\end{fact}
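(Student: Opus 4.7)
The statement labeled as Fact~\ref{fact} is not a mathematical theorem to be derived but rather a design specification for any custom solver targeting problem~\eqref{eq:primal-anderson}. My proposal is therefore to justify each desideratum by arguing that, absent the property in question, one could not replicate the empirical success of dual solvers such as those of \citet{BunelDP20} and \citet{Dvijotham2019} on the Planet relaxation.

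For (i) \emph{sparsity}, I would argue by a simple counting argument: each layer of the neural network gives rise to up to $2^{n_k n_{k-1}} - 2$ exponential constraints indexed by $\mathcal{E}_k$, so maintaining one dual variable per constraint is infeasible already at modest widths, let alone for the convolutional architectures targeted in verification. Since the empirical speed of prior dual approaches stems from GPU parallelism across many bounding subproblems, a non-sparse representation would preclude batched execution and so undermine any comparison with earlier dual algorithms. For (ii) \emph{tightness}, I would argue by reduction to the Big-M relaxation $\mathcal{M}_k$: if the dual suboptimality of our solver on problem~\eqref{eq:primal-anderson} exceeds the primal gap between problems~\eqref{eq:primal-bigm} and~\eqref{eq:primal-anderson}, the resulting bounds are no better than---and can be worse than---those produced by simpler Planet-based solvers, defeating the purpose of adopting the tighter formulation. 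For (iii) \emph{anytime}, the justification comes from the consuming application: every node of the branch-and-bound tree invokes the bounding routine, and soundness only requires that each invocation return a \emph{valid} lower bound. A solver that outputs valid bounds only at convergence cannot be early-stopped and is incompatible with the speed--tightness trade-offs that drive complete verification.

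The main obstacle I anticipate is the evident tension between (i) and (ii): attaining (or even approximating) the LP optimum of problem~\eqref{eq:primal-anderson} naively requires supporting the possibly many tight constraints from $\mathcal{A}_{\mathcal{E},k}$, yet sparsity forbids storing the associated dual variables explicitly. Reconciling these two requirements is precisely the algorithmic contribution of the two solvers developed later in the paper; in the proof-of-necessity sketched above I would leave the tension as a non-trivial but attainable desideratum, to be realised in $\S$\ref{sec:sp} and in the Active Set sections by active-set management (so that only few constraints are materialised at any time) and by Frank-Wolfe style implicit representations (so that the exponential family is probed via the linear-time separation oracle~\eqref{eq:oracle} rather than enumerated).
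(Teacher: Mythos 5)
The paper provides no formal proof of this Fact: it is stated as a design specification, justified only by the surrounding discussion (weak duality for the anytime property, the exponential constraint count for sparsity, and the appendix showing prior dual derivations fail these desiderata). Your proposal correctly identifies this and your per-item justifications match the paper's own informal motivations, so it is essentially the same approach.
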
 

The anytime requirement motivates dual solutions: any dual assignment yields a valid bound due to weak duality.
Unfortunately, as shown in appendix~\ref{sec:no-previous}, neither of the two dual derivations by~\citet{BunelDP20,Dvijotham2018} readily satisfy all desiderata at once. 
Therefore, we need a completely different approach.
Starting from primal \eqref{eq:primal-anderson}, we relax all constraints in $\mathcal{A}_k$ except~box constraints (see $\S$\ref{sec:relax-planet}). 
In order to simplify notation, we employ dummy variables $\balpha_{0}= 0$, $\bbeta_0 = 0$, $\balpha_{n} = I$, $\bbeta_{n} = 0$, obtaining the following dual problem (derivation in appendix \ref{sec:dual-derivation}):
\begin{gather} 
\begin{aligned}
\max_{(\balpha, \bbeta)\geq 0}& d(\balpha, \bbeta) \qquad \text{where:} \qquad d(\balpha, \bbeta) :=\min_{\xb, \zb} \enskip \mathcal{L}(\xb, \zb, \balpha, \bbeta),  \\[-5pt]
&\hspace{-15pt}\mathcal{L}(\xb, \zb, \balpha, \bbeta)  = 
\left[\begin{array}{l}
\sum_{k=1}^{n-1}\bb_k^T \balpha_{k}  - \sum_{k=0}^{n-1}\boldsymbol{f}_{k}(\balpha, \bbeta)^T \xb_k 
- \sum_{k=1}^{n-1}\boldsymbol{g}_{k}(\bbeta)^T \zb_k \\
+ \sum_{k=1}^{n-1}\left(\sum_{I_{k}  \in\mathcal{E}_k} (W_k \odot I_{k} \odot \lbrv_{k-1}) \oblong \bbeta_{k, I_{k}} + \bbeta_{k, 1}^T (\hat{\lb}_k - \bb_{k})\right)\end{array}\right.\\[1pt]
\text{s.t. } \qquad & \xb_0 \in \mathcal{C}, \qquad  (\xb_{k}, \zb_{k}) \in [\lb_k, \ub_k] \times [\mbf{0},\ \mbf{1}] \qquad k \in \llbracket1, n-1\rrbracket,
\end{aligned}  
\label{eq:dual-anderson}
\end{gather} 
where functions $\boldsymbol{f}_{k}, \boldsymbol{g}_{k}$ are defined as follows:
\begin{equation}
\begin{aligned}
&\hspace{-15pt}\boldsymbol{f}_{k}(\balpha, \bbeta) = \begin{array}{l}
\balpha_{k} - W_{k+1}^T\balpha_{k+1} - \sum_{I_{k}}\bbeta_{k, I_{k}} + \smash{\sum_{I_{k+1}} (W_{k+1} \odot I_{k+1})^T \bbeta_{k+1, I_{k+1}}},
\end{array}  \\
&\hspace{-15pt} \boldsymbol{g}_{k}(\bbeta) = \left[\begin{array}{l}  \sum_{I_{k}  \in\mathcal{E}_k} \left(W_k \odot (1 - I_{k}) \odot \ubrv_{k-1}\right) \diamond \bbeta_{k, I_{k}} + \bbeta_{k, 0} \odot \hat{\ub}_k + \bbeta_{k, 1} \odot \hat{\lb}_k\\
+ \sum_{I_{k}  \in\mathcal{E}_k} \left(W_k \odot I_{k} \odot \lbrv_{k-1}\right) \diamond \bbeta_{k, I_{k}} + \sum_{I_{k} \in \mathcal{E}_k} \bbeta_{k, I_{k}} \odot \bb_k.
\end{array}\right.
\end{aligned}
\label{eq:dual-anderson-functions}
\end{equation}
We employ $\sum_{I_{k}}$ as a shorthand for $\sum_{I_{k} \in 2^{W_k}}$. 
Both functions therefore include sums over an exponential number of $\bbeta_{k, I_{k}}$ variables.

This is again a challenging problem: the exponentially many constraints in the primal~\eqref{eq:primal-anderson} are associated to an exponential number of variables, as $\bbeta = \{\bbeta_{k, I_k} \forall I_k \in 2^{W_k}, k \in \llbracket1, n-1\rrbracket\}$. 
Nevertheless, we show that the requirements of Fact \ref{fact} can be met by operating on restricted versions of the dual.
We present two specialized algorithms for problem \eqref{eq:dual-anderson}: one considers only a small active set of dual variables ($\S$\ref{sec:active}), the other restricts the dual domain while exploiting the sparsity of Frank-Wolfe style iterates ($\S$\ref{sec:sp}).
Both algorithms are sparse, anytime and yield bounds reflecting the tightness of the new relaxation.

We conclude this section by pointing out that, as stated in $\S$\ref{sec:preliminaries}, the only assumption for $\mathcal{C}$ in problem~\eqref{eq:dual-anderson} is that it allows for efficient linear minimization. This is the case for both $\ell_{\infty}$ and $\ell_2$ norm perturbations, which are hence supported by our solvers.

\section{Active Set} \label{sec:active}

We present Active Set (Algorithm~\ref{alg:active-set}), a supergradient-based solver that operates on a small active set of dual variables $\bbeta_\mathcal{B}$. 
Starting from the dual of problem \eqref{eq:primal-bigm}, Active Set iteratively adds variables to $\bbeta_\mathcal{B}$ and solves the resulting reduced version of problem \eqref{eq:dual-anderson}.
We first describe our solver on a fixed $\bbeta_\mathcal{B}$ ($\S$\ref{sec:as-solver}) and then outline how to iteratively modify the active set ($\S$\ref{sec:active-set-descr}).

\begin{figure}[t]
\centering
\begin{minipage}{.99\textwidth}
	\begin{algorithm}[H]
		\caption{Active Set}\label{alg:active-set}
		\small
		\begin{algorithmic}[1]
			\algrenewcommand\algorithmicindent{1.0em}
			\Function{activeset\_compute\_bounds}{Problem \eqref{eq:dual-anderson}}
			\State Initialize duals $\balpha^0, \bbeta_{0}^0, \bbeta_{1}^0$ using Algorithm \eqref{alg:bigm}
			\State Set $\bbeta^0_{k, I_{k}} = 0, \forall \ I_{k} \in \mathcal{E}_k$
			\State $\mathcal{B} = \emptyset$
			\For{$\mtt{nb\_additions}$}
			\For{$t \in \llbracket0, T-1\rrbracket$}
			\State $\xb^{*, t}, \zb^{*, t} \in$ $\argmin_{\xb, \zb}\mathcal{L}_{\mathcal{B}}(\xb, \zb, \balpha^t, \bbeta^t_{\mathcal{B}}) $ using \eqref{eq:as-primalk-min},\eqref{eq:as-primal0-min} \Comment{inner minimization}
			\If{$t \leq \mtt{nb\_vars\_to\_add}$}
			\State For each layer $k$, add output of \eqref{eq:oracle} called at $(\xb^*, \zb^*)$ to $\mathcal{B}_k$  \Comment{active set extension}
			\EndIf
			\State $\balpha^{t+1}, \bbeta^{t+1}_{\mathcal{B}}\leftarrow (\balpha^{t}, \bbeta^{t}) + H(\nabla_{\balpha}d(\balpha, \bbeta), \nabla_{\bbeta_{\mathcal{B}}}d(\balpha, \bbeta) )$ \Comment{supergradient step, using \eqref{eq:as-supergradient}}
			\State $\balpha^{t+1}, \bbeta^{t+1}_{\mathcal{B}}\leftarrow \max(\balpha^{t+1}, 0), \max(\bbeta^{t+1}_{\mathcal{B}}, 0) \qquad$ \Comment{dual projection}
			\EndFor
			\EndFor
			\State\Return $\min_{\xb, \zb}\mathcal{L}_{\mathcal{B}}(\xb, \zb, \balpha^T, \bbeta^T_{\mathcal{B}}) $
			\EndFunction
		\end{algorithmic}
	\end{algorithm}
\end{minipage}
\end{figure}
\vspace{10pt}

\subsection{Solver} \label{sec:as-solver}

We want to solve a version of problem \eqref{eq:dual-anderson} for which $\mathcal{E}_k$, the exponentially-sized set of $I_k$ masks for layer $k$,
is restricted to some constant-sized set\footnote{As dual variables $\bbeta_{k, I_k}$ are indexed by $I_k$, $\mathcal{B} = \cup_k \mathcal{B}_k$ implicitly defines an active set of variables $\bbeta_\mathcal{B}$.} $\mathcal{B}_k \subseteq \mathcal{E}_k$, with $\mathcal{B} = \cup_{k \in \llbracket1, n-1\rrbracket} \mathcal{B}_k$. 
By keeping $\mathcal{B} = \emptyset$, we recover a novel dual solver for the Big-M relaxation \eqref{eq:primal-bigm} (explicitly described in appendix \ref{sec:dual-init}), which is employed as initialization.
Setting $\bbeta_{k, I_{k}} = 0, \forall \ I_{k} \in \mathcal{E}_k \setminus \mathcal{B}_k$ in \eqref{eq:dual-anderson-functions}, \eqref{eq:dual-anderson} and removing these from the formulation, we obtain: 

\begin{equation*}
\begin{aligned}
&\hspace{-15pt}\boldsymbol{f}_{\mathcal{B}, k}(\balpha, \bbeta_{\mathcal{B}}) = \left[\begin{array}{l}
\balpha_{k} - W_{k+1}^T\balpha_{k+1} - \sum_{I_{k} \in \mathcal{B}_k \cup \{0,1\}}\bbeta_{k, I_{k}}, \\
+ \smash{\sum_{I_{k+1}\in \mathcal{B}_{k+1} \cup \{0,1\}} (W_{k+1} \odot I_{k+1})^T \bbeta_{k+1, I_{k+1}}} \end{array}\right.  \\
&\hspace{-15pt} \boldsymbol{g}_{\mathcal{B}, k}(\bbeta_{\mathcal{B}}) = \left[\begin{array}{l}\sum_{I_{k}  \in\mathcal{B}_k} \left(W_k \odot (1 - I_{k}) \odot \ubrv_{k-1}\right) \diamond \bbeta_{k, I_{k}} + \bbeta_{k, 0} \odot \hat{\ub}_k + \bbeta_{k, 1} \odot \hat{\lb}_k\\
+ \sum_{I_{k}  \in\mathcal{B}_k} \left(W_k \odot I_{k} \odot \lbrv_{k-1}\right) \diamond \bbeta_{k, I_{k}}  +  \sum_{I_{k} \in \mathcal{B}_k} \bbeta_{k, I_{k}} \odot \bb_k,  \end{array}\right.
\end{aligned}
\end{equation*}
along with the reduced dual problem: 
\begin{gather} 
\begin{aligned}
\max_{(\balpha, \bbeta_{\mathcal{B}})\geq 0} &d_{\mathcal{B}}(\balpha, \bbeta_{\mathcal{B}}) \qquad  \text{where:} \qquad d_{\mathcal{B}}(\balpha, \bbeta_{\mathcal{B}}) :=\min_{\xb, \zb} \enskip \mathcal{L}_{\mathcal{B}}(\xb, \zb, \balpha, \bbeta_{\mathcal{B}}),  \\[-5pt]
&\hspace{-20pt}\mathcal{L}_{\mathcal{B}}(\xb, \zb, \balpha, \bbeta_{\mathcal{B}})  = 
\left[\begin{array}{l}
\sum_{k=1}^{n-1}\bb_k^T \balpha_{k} - \sum_{k=0}^{n-1}\boldsymbol{f}_{\mathcal{B}, k}(\balpha, \bbeta_{\mathcal{B}})^T \xb_k - \sum_{k=1}^{n-1}\boldsymbol{g}_{\mathcal{B}, k}(\bbeta_{\mathcal{B}}) ^T \zb_k \\
+ \sum_{k=1}^{n-1} \left(\sum_{I_{k}  \in\mathcal{B}_k} (W_k \odot I_{k} \odot \lbrv_{k-1}) \oblong \bbeta_{k, I_{k}} + \bbeta_{k, 1}^T (\hat{\lb}_k - \bb_{k})\right) \end{array}\right.\\[3pt]
\text{s.t. } \qquad & \xb_0 \in \mathcal{C}, \qquad  (\xb_{k}, \zb_{k}) \in [\lb_k, \ub_k] \times [\mbf{0},\ \mbf{1}] \qquad k \in \llbracket1, n-1\rrbracket.
\end{aligned}
\label{eq:dual-active-set}
\end{gather} 
We can maximize $d_{\mathcal{B}}(\balpha, \bbeta_{\mathcal{B}})$, which is concave and non-smooth, via projected supergradient ascent or variants thereof, such as Adam~\citep{Kingma2015}.
In order to obtain a valid supergradient, we need to perform the inner minimization over the primals. Thanks to the structure of problem \eqref{eq:dual-active-set}, the optimization decomposes over the layers. 
For $k \in \llbracket1, n-1\rrbracket$, we can perform the minimization in closed-form by driving the primals to their upper or lower bounds depending on the sign of their coefficients:
\begin{equation}
\begin{gathered}
\label{eq:as-primalk-min}
\xb_{k}^* = \mathds{1}_{\boldsymbol{f}_{\mathcal{B}, k}(\balpha, \bbeta_{\mathcal{B}})  \geq 0} \odot \hat{\ub}_k + \mathds{1}_{\boldsymbol{f}_{\mathcal{B}, k}(\balpha, \bbeta_{\mathcal{B}})  < 0} \odot \hat{\lb}_k, \qquad
\zb_{k}^* = \mathds{1}_{\boldsymbol{g}_{\mathcal{B}, k}(\bbeta_{\mathcal{B}})   \geq 0} \odot \mbf{1}.
\end{gathered}
\end{equation}
The subproblem corresponding to $\xb_0$ is different, as it involves a linear minimization over $\xb_0 \in \mathcal{C}$:
\begin{equation}
\xb^*_0 \in \begin{array}{l} \argmin_{\xb_0}\end{array} \quad   \boldsymbol{f}_{\mathcal{B}, 0}(\balpha, \bbeta_{\mathcal{B}})^T\xb_{0} \qquad \quad \text{s.t. } \quad  \xb_0 \in \mathcal{C}.
\label{eq:as-primal0-min} 
\end{equation} 
We assumed in $\S$\ref{sec:preliminaries} that \eqref{eq:as-primal0-min} can be performed efficiently. We refer the reader to~\citet{BunelDP20} for descriptions of the minimization when $\mathcal{C}$ is a $\ell_\infty$ or $\ell_2$ ball, as common for adversarial examples.

Given $(\xb^*, \zb^*)$ as above, the supergradient of $d_{\mathcal{B}}(\balpha, \bbeta_{\mathcal{B}}) $ is a subset of the one for $d(\balpha, \bbeta)$, given~by: 
\begin{equation}
\begin{gathered}
\label{eq:as-supergradient}
\nabla_{\balpha_k}d(\balpha, \bbeta) = W_k \xb^*_{k-1} + \bb_k - \xb^*_k, \qquad
\nabla_{\bbeta_{k, 0}}d(\balpha, \bbeta)  = \xb^*_k - \zb^*_k \odot \hat{\ub}_{k},\\
\nabla_{\bbeta_{k, 1}}d(\balpha, \bbeta) = \xb^*_k - \left(W_k\xb^*_{k-1} + \bb_{k}\right) + \left(1 -\zb^*_k \right)\odot \hat{\lb}_{k},\\ 
\nabla_{\bbeta_{k, I_k}}d(\balpha, \bbeta) = \left( \begin{array}{l} \xb^*_{k} - \left(W_k  \odot I_{k}\right)  \xb^*_{k-1} 
+ \left(W_k  \odot I_{k} \odot \lbrv_{k-1}\right) \diamond (1 - \zb^*_k) \ + \\
- \zb^*_k \odot \bb_k +  \left(W_k \odot  \left(1 - I_{k}\right) \odot \ubrv_{k-1}\right) \diamond \zb^*_k \end{array} \right)   I_k \in \mathcal{B}_k,\\
\end{gathered}
\end{equation}
for each $k \in \llbracket1, n-1\rrbracket$ (dual ``variables" $\balpha_{0}, \balpha_{n}, \bbeta_{0}, \bbeta_{n}$ are constants employed to simplify the notation: see appendix \ref{sec:dual-derivation}).
At each iteration, after taking a step in the supergradient direction, the dual variables are projected to the non-negative orthant by clipping negative~values.

\subsection{Extending the Active Set} \label{sec:active-set-descr}

We initialize the dual \eqref{eq:dual-anderson} with a tight bound on the Big-M relaxation by solving for $d_{\emptyset}(\balpha, \bbeta_{\emptyset})$ in~\eqref{eq:dual-active-set} (appendix \ref{sec:dual-init}).
To satisfy the tightness requirement in Fact \ref{fact}, we then need to include constraints (via their Lagrangian multipliers) from the exponential family of $\mathcal{A}_k$ into $\mathcal{B}_k$. 
Our goal is to tighten them as much as possible while keeping the active set small to save memory and compute. 
The active set strategy is defined by a \emph{selection criterion} for the $I^*_{k}$ to be added\footnote{adding a single $I^*_{k}$ mask to $\mathcal{B}_k$ extends $\bbeta_{\mathcal{B}}$ by $n_k$ variables: one for each neuron at layer $k$.} to $\mathcal{B}_k$, and the \emph{frequency} of addition.
In practice, \textit{we add the variables maximising the entries of supergradient $\nabla_{\bbeta_{k, I_k}}d(\balpha, \bbeta)$ after a fixed number of dual iterations.}
We now provide motivation for both choices.

\subsubsection{Selection Criterion}
The selection criterion needs to be computationally efficient. 
Thus, we proceed greedily and focus only on the immediate effect at the current iteration.
Let us map a restricted set of dual variables $\bbeta_{\mathcal{B}}$ to a set of dual variables  $\bbeta$ for the full dual \eqref{eq:dual-anderson}. We do so by setting variables not in the active set to $0$: $\bbeta_{\bar{\mathcal{B}}} = 0$, and $\bbeta = \bbeta_{\mathcal{B}} \cup \bbeta_{\bar{\mathcal{B}}}$.
Then, for each layer $k$, we add the set of variables $\bbeta_{k,I^*_k}$ maximising the corresponding entries of the supergradient of the full dual problem~\eqref{eq:dual-anderson}, excluding those pertaining to $\mathcal{M}_k$:
\begin{equation}
	\bbeta_{k,I^*_k} \in \argmax_{\bbeta_{k,I_k} \in\ \bbeta_{k} \setminus \bbeta_{\emptyset, k} } \{\nabla_{\bbeta_{k, I_k}}d(\balpha, \bbeta)^T \mathbf{1}\}.
	\label{eq:selection}
\end{equation}
Therefore, we use the subderivatives as a proxy for short-term improvement on the full dual objective $d(\balpha, \bbeta)$. 
Under a primal interpretation, our selection criterion involves a call to the separation oracle \eqref{eq:oracle} by \citet{Anderson2020}.

\begin{proposition}
	$\bbeta_{k,I^*_k}$ as defined in equation \eqref{eq:selection} represents the Lagrangian multipliers associated to the most violated constraints from $\mathcal{A}_{\mathcal{E}, k}$ at $\left(\xb^*, \zb^*\right) \in \argmin_{\xb, \zb} \mathcal{L}_{\mathcal{B}}(\xb, \zb, \balpha, \bbeta_{\mathcal{B}})$, the primal minimiser of the current restricted Lagrangian.
	\label{prop:as}
\end{proposition}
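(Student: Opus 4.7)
The plan is to show that the supergradient entries $\nabla_{\bbeta_{k, I_k}}d(\balpha, \bbeta)$, when evaluated at the current primal minimiser $(\xb^*, \zb^*)$, coincide with the per-neuron slacks of the $\mathcal{A}_{\mathcal{E}, k}$ constraint indexed by $I_k$. Once this identification is made, the selection rule \eqref{eq:selection} decomposes row-wise and reduces to the separation oracle \eqref{eq:oracle}.

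First, I would compare the supergradient formula in \eqref{eq:as-supergradient} against the $\mathcal{A}_k$ constraint stated in \eqref{eq:primal-anderson}. Writing the latter as $\xb_k - \text{(right-hand side)} \leq 0$, a direct inspection shows that the expression obtained for $\nabla_{\bbeta_{k, I_k}}d(\balpha, \bbeta)$ at $(\xb^*, \zb^*)$ is precisely the vector of left-hand sides of this inequality. Thus a positive entry in this supergradient is exactly the amount by which the corresponding $\mathcal{A}_{\mathcal{E}, k}$ constraint is violated at that neuron.

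Next, I would exploit the structural fact that the $i$-th component of the constraint, and hence of the supergradient, depends on $I_k$ only through $\text{row}_i(I_k)$: the matrices $W_k \odot I_k \odot \lbrv_{k-1}$ and $W_k \odot (1 - I_k) \odot \ubrv_{k-1}$ are combined with neuron-indexed vectors via $\diamond$ and Hadamard products, which do not mix rows. Consequently, maximizing the scalar $\nabla_{\bbeta_{k, I_k}} d(\balpha, \bbeta)^T \mathbf{1}$ over $I_k \in 2^{W_k}$ separates into $n_k$ independent binary maximizations, one per neuron.

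Finally, I would verify that each row-wise maximization matches \eqref{eq:oracle}. After expanding, the $i$-th entry of the supergradient is a sum over $j$ of terms linear in $I_k[i, j]$, with coefficient proportional to $\left((1 - \zb^*_{k}[i])\,\lbrv_{k-1}[i,j] + \zb^*_{k}[i]\,\ubrv_{k-1}[i,j] - \xb^*_{k-1}[j]\right) W_k[i, j]$; the optimal $I^*_k[i,j]$ is therefore the indicator that this quantity is non-negative, which is exactly the oracle of \citet{Anderson2020}. The restriction $\bbeta_{k, I_k} \in \bbeta_k \setminus \bbeta_{\emptyset, k}$ in \eqref{eq:selection} keeps us within $\mathcal{E}_k$, so $\bbeta_{k, I^*_k}$ are the Lagrangian multipliers of the most violated constraints in $\mathcal{A}_{\mathcal{E}, k}$ at $(\xb^*, \zb^*)$.

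The main obstacle is purely bookkeeping: translating between the $\diamond$/$\oblong$ notation and explicit per-neuron, per-incoming-weight expressions so that the row-wise decoupling becomes transparent, and confirming that the sign conventions in \eqref{eq:as-supergradient} match the ``violation = LHS minus RHS'' convention implicit in \eqref{eq:oracle}.
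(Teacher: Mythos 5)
Your proposal is correct and follows essentially the same route as the paper's proof: identify $\nabla_{\bbeta_{k, I_k}}d(\balpha, \bbeta)$ at $(\xb^*, \zb^*)$ with the constraint violation, drop the $I_k$-independent terms, decompose row-wise, and recover the oracle \eqref{eq:oracle} by signing each coefficient. (Your indexing $\xb^*_{k-1}[j]$ in the per-entry coefficient is in fact the consistent choice; the paper's $\xb^*_{k-1}[i]$ there is a typo.)
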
 
\begin{proof}
	The result can be obtained by noticing that $\nabla_{\bbeta_{k, I_k}}d(\balpha, \bbeta)$ (by definition of Lagrangian multipliers) quantifies the corresponding constraint's violation at $\left(\xb^*, \zb^*\right)$. Therefore, maximizing $\nabla_{\bbeta_{k, I_k}}d(\balpha, \bbeta)$ will amount to maximizing constraint violation.
	We demonstrate analytically that the process will, in fact, correspond to a call to oracle \eqref{eq:oracle}.
	Recall the definition of  $\nabla_{\bbeta_{k, I_k}}d(\balpha, \bbeta)$ in equation \eqref{eq:as-primalk-min}, which applies beyond the current active set:
	$$\nabla_{\bbeta_{k, I_k}}d(\balpha, \bbeta) =\left( \begin{array}{l} \xb^*_{k} - \left(W_k  \odot I_{k}\right)  \xb^*_{k-1} 
	+ \left(W_k  \odot I_{k} \odot \lbrv_{k-1}\right) \diamond (1 - \zb^*_k) \ + \\
	- \zb^*_k \odot \bb_k +  \left(W_k \odot  \left(1 - I_{k}\right) \odot \ubrv_{k-1}\right)\diamond\zb^*_k \end{array} \right)  \quad I_k \in \mathcal{E}_k.$$
	
	We want to compute $I^*_k \in \argmax_{I_k \in \mathcal{E}_k} \{\nabla_{\bbeta_{k, I_k}}d(\balpha, \bbeta)^T \mathbf{1}\}$, that is:
	$$I^*_k \in \argmax_{I_k \in \mathcal{E}_k} \left( \begin{array}{l} \xb^*_{k} - \left(W_k  \odot I_{k}\right)  \xb^*_{k-1} 
	+ \left(W_k  \odot I_{k} \odot \lbrv_{k-1}\right) \diamond(1 - \zb^*_k)  \ +\\
	- \zb^*_k \odot \bb_k +  \left(W_k \odot  \left(1 - I_{k}\right) \odot \ubrv_{k-1}\right)\diamond\zb^*_k \end{array} \right)^T \mathbf{1}.$$
	
	By removing the terms that do not depend on $I_k$, we obtain:  
	$$\max_{I_k \in \mathcal{E}_k} \left( \begin{array}{l} - \left(W_k  \odot I_{k}\right)  \xb^*_{k-1} 
	+ \left(W_k  \odot I_{k} \odot \lbrv_{k-1}\right)\diamond (1 - \zb^*_k) \ +\\
	+ \left( W_k \odot I_{k} \odot \ubrv_{k-1}\right)\diamond \zb^*_k\end{array} \right)^T \mathbf{1}.$$
	Let us denote the $i$-th row of $W_k$ and $I_k$ by $\boldsymbol{w}_{i, k}$ and $\boldsymbol{i}_{i, k}$, respectively, and define  $\mathcal{E}_k[i]  = 2^{\boldsymbol{w}_{i, k}} \setminus \{0, 1\}$.
	The optimization decomposes over each such row: we thus focus on the optimization problem for the supergradient's $i$-th entry. Collecting the mask, we get:
	$$\max_{\boldsymbol{i}_{i, k} \in \mathcal{E}_k[i]} \begin{array}{l} \sum_{j} \bigg( \Big(\left( 1 - \zb^*_{k}[i] \right)\odot \lbrv_{k-1}[i,j] + \zb^*_{k}[i] \odot \ubrv_{k-1}[i,j] -\xb^*_{k-1}[i] \Big) W_{k}[i,j] \bigg)  I_{k}[i,j] \end{array} .$$
	
	As the solution to the problem above is obtained by setting $I_{k}^*[i,j]=1$ if its coefficient is positive and $I_{k}^*[i,j]=0$ otherwise, we can see that the optimal $I_k$ corresponds to calling oracle \eqref{eq:oracle} by \citet{Anderson2020} on $\left(\xb^*, \zb^*\right)$. Hence, in addition to being the mask associated to $\bbeta_{k,I^*_k}$, the variable set maximising the supergradient, $I_{k}^*$ corresponds to the most violated constraint from $\mathcal{A}_{\mathcal{E}, k}$ at $\left(\xb^*, \zb^*\right)$.
\end{proof}

\subsubsection{Frequency of Addition}
Finally, we need to decide the frequency at which to add variables to the active set. 
\begin{fact}
	Assume we obtained a dual solution $(\balpha^\dagger, \bbeta^\dagger_{\mathcal{B}}) \in \argmax d_{\mathcal{B}}(\balpha, \bbeta_{\mathcal{B}})$ using Active Set on the current $\mathcal{B}$.
	Then $\left(\xb^*, \zb^*\right) \in \argmin_{\xb, \zb} \mathcal{L}_{\mathcal{B}}(\xb, \zb, \balpha^\dagger, \bbeta^\dagger_{\mathcal{B}})$ is not in general an optimal primal solution for the primal of the current variable-restricted dual problem \citep{Sherali1996}.
	\label{fact:sherali}
\end{fact}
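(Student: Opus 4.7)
The plan is to demonstrate that the closed‑form inner minimiser produced by \eqref{eq:as-primalk-min}--\eqref{eq:as-primal0-min} is structurally too restrictive to coincide, in general, with a primal optimum of the LP dual to $\max d_{\mathcal{B}}(\balpha,\bbeta_{\mathcal{B}})$. Both formulas pick an \emph{extreme point} of the box: each coordinate of $\xb_k$ is set to $\hat{\ub}_k$ or $\hat{\lb}_k$ and each coordinate of $\zb_k$ is set to $0$ or $1$, depending only on the sign of its coefficient in $\mathcal{L}_{\mathcal{B}}$. By contrast, the primal optimum of the restricted LP can lie in the relative interior of these box constraints, in which case no choice of tie‑breaking at vanishing coefficients recovers it. This mismatch is the content of the classical observation of \citet{Sherali1996}, and the argument is a specialisation of it to our setting.

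First I would fix a candidate dual optimum $(\balpha^\dagger,\bbeta^\dagger_{\mathcal{B}})$ for $d_{\mathcal{B}}$ and invoke KKT / complementary slackness to characterise the coefficients of the primal variables inside $\mathcal{L}_{\mathcal{B}}(\xb,\zb,\balpha^\dagger,\bbeta^\dagger_{\mathcal{B}})$. Whenever the optimal primal $(\xb^{\mathrm{LP}},\zb^{\mathrm{LP}})$ of the restricted LP has a coordinate strictly inside $[\lb_k,\ub_k]$ or $[0,1]$, the corresponding entry of $\boldsymbol{f}_{\mathcal{B},k}$ or $\boldsymbol{g}_{\mathcal{B},k}$ must vanish at the dual optimum. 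Consequently the set $\argmin_{\xb,\zb}\mathcal{L}_{\mathcal{B}}(\xb,\zb,\balpha^\dagger,\bbeta^\dagger_{\mathcal{B}})$ is an \emph{entire face} of the box rather than a single point, while formulas \eqref{eq:as-primalk-min}--\eqref{eq:as-primal0-min} break ties on the sign predicate $\mathds{1}_{\,\cdot\,\geq 0}$ and always return a vertex of that face. Any primal optimum located strictly in the interior of the face is therefore missed.

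To make the statement rigorous I would exhibit a minimal counter‑example: a one‑hidden‑layer network (say one input, two hidden ReLUs, one output) with $\mathcal{C}$ a small $\ell_\infty$ box, intermediate bounds chosen so that at least one ReLU is ambiguous, and $\mathcal{B}=\emptyset$ so that the restricted dual is just the Big‑M dual of problem \eqref{eq:primal-bigm}. I would solve the corresponding small LP \eqref{eq:primal-bigm} directly and verify that it admits a unique optimum with some $\zb_k[i]\in(0,1)$; then solve the dual $d_{\emptyset}$ and check using \eqref{eq:as-primalk-min} that the recovered $\zb_k^{*}[i]\in\{0,1\}$. This exhibits an instance in which the Lagrangian minimiser differs from the LP primal optimum, which is all that is required by the qualifier ``in general''.

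The main obstacle I anticipate is not the algebra but the \emph{selection} of a clean counter‑example: one must arrange weights, biases and intermediate bounds so that the dual optimum is unique (to pin down $(\balpha^\dagger,\bbeta^\dagger_{\mathcal{B}})$ and hence the coefficients driving \eqref{eq:as-primalk-min}) while the primal LP still has a fractional optimum in $\zb$. An alternative, if a fully worked numerical instance is undesirable, is to appeal directly to the abstract result of \citet{Sherali1996} on recovering primal solutions from Lagrangian subproblems of non‑strictly‑convex programs, noting that the Lagrangian $\mathcal{L}_{\mathcal{B}}$ is linear in $(\xb,\zb)$, so their hypotheses are satisfied and the conclusion transfers verbatim.
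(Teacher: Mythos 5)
Your proposal is correct, but it is worth knowing that the paper does not actually prove Fact~\ref{fact:sherali}: the statement is justified solely by the citation to \citet{Sherali1996}, whose work on recovering primal optima by ergodic averaging of Lagrangian subproblem solutions is premised on exactly the phenomenon you describe. Your complementary-slackness elaboration is sound and strictly more informative than what the paper offers: since the restricted primal is an LP with zero duality gap, every primal optimum must lie in $\argmin_{\xb,\zb}\mathcal{L}_{\mathcal{B}}(\xb,\zb,\balpha^\dagger,\bbeta^\dagger_{\mathcal{B}})$, so any coordinate of the optimum strictly inside its box forces the corresponding entry of $\boldsymbol{f}_{\mathcal{B},k}$ or $\boldsymbol{g}_{\mathcal{B},k}$ to vanish, the argmin becomes a face, and the sign rule in \eqref{eq:as-primalk-min} returns a vertex of that face which generically is not the (typically fractional) LP optimum --- indeed the returned point is usually not even feasible for the relaxed constraints of problem~\eqref{eq:primal-anderson}. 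Two small caveats: first, ``the conclusion of \citet{Sherali1996} transfers verbatim'' is loose phrasing, since their theorem is the positive averaging result and the non-optimality of instantaneous minimisers is its motivation rather than its conclusion; second, in your counter-example the optimum in $\zb$ is generally non-unique because $\zb$ is absent from the objective, so you should target a fractional $\xb_k$ coordinate (for an ambiguous ReLU the optimal $\xb_k[i]$ sits strictly between $\lb_k[i]$ and $\ub_k[i]$) rather than insist on a unique fractional $\zb$. Neither caveat affects validity, since the qualifier ``in general'' requires only a single instance.
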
 
The primal of $d_{\mathcal{B}}(\balpha, \bbeta_{\mathcal{B}})$ (restricted primal) is the problem obtained by setting $\mathcal{E}_k \leftarrow \mathcal{B}_k$ in problem~\eqref{eq:primal-anderson}.
While the primal cutting plane algorithm by \citet{Anderson2020} calls the separation oracle \eqref{eq:oracle} at the optimal solution of the current restricted primal, Fact \ref{fact:sherali} shows that our selection criterion leads to a different behaviour even at dual optimality for $d_{\mathcal{B}}(\balpha, \bbeta_{\mathcal{B}})$.
Therefore, as we have no theoretical incentive to reach (approximate) subproblem convergence, we add variables after a fixed tunable number of supergradient iterations.
Furthermore, we can add more than one variable ``at once" by running the oracle \eqref{eq:oracle} repeatedly for a number of iterations.

We conclude this section by pointing out that, provided the algorithm is run to dual convergence on each variable-restricted dual problem~\eqref{eq:dual-active-set}, primal optima can be recovered by suitable modifications of the optimization routine~\citep{Sherali1996}. 
Then, if the dual variables corresponding to the most violated constraint at the primal optima are added to~$\mathcal{B}_k$, Active Set mirrors the primal cutting plane algorithm, guaranteeing convergence to the solution of problem~\eqref{eq:dual-anderson}.
In practice, as the main advantage of dual approaches \citep{Dvijotham2018,BunelDP20} is their ability to quickly achieve tight bounds (rather than formal optimality), we rely on the heuristic strategy in Algorithm~\ref{alg:active-set}.

\section{Saddle Point} \label{sec:sp}

For the Active Set solver ($\S$\ref{sec:active}), we only consider settings in which $\bbeta_{\mathcal{B}}$ is composed of a (small) constant number of variables. 
In fact, both its memory cost and time complexity per iteration are proportional to the cardinality of the active set. This mirrors the properties of the primal cutting algorithm by~\citet{Anderson2020}, for which memory and runtime will increase with the number of added constraints.
As a consequence, the tightness of the attainable bounds will depend both on the computational budget and on the available memory.
We remove the dependency on memory by presenting Saddle Point (Algorithm \ref{alg:sp-fw}), a Frank-Wolfe type solver. By restricting the dual feasible space, Saddle Point is able to deal with all the exponentially many variables from problem \eqref{eq:dual-anderson}, while incurring only a linear memory cost.
We first describe the rationale behind the reduced dual domain ($\S$\ref{sec:sp-fw-theory}), then proceed to describe solver details ($\S$\ref{sec:sp-fw-solver}).

\subsection{Sparsity via Sufficient Statistics} \label{sec:sp-fw-theory}

In order to achieve sparsity (Fact \ref{fact}) without resorting to active sets, 
it is crucial to observe that all the appearances of $\bbeta$ variables in $\mathcal{L}(\xb, \zb, \balpha, \bbeta)$, the Lagrangian of the full dual \eqref{eq:dual-anderson}, can be traced back to the following linearly-sized \emph{sufficient statistics}:

\begin{equation}
\bzeta_{k}(\bbeta_k) = 
\left[\begin{array}{l}
\sum_{I_{k}} \bbeta_{k, I_{k}} \\ \sum_{I_{K+1}} (W_{k+1} \odot I_{k+1})^T \bbeta_{k+1, I_{k+1}} \\
 \sum_{I_{k}  \in\mathcal{E}_k} (W_k \odot I_{k} \odot \lbrv_{k-1}) \diamond \bbeta_{k, I_{k}} + \bbeta_{k, 1} \odot (\hat{\lb}_k - \bb_{k}) \\
 \sum_{I_{k}  \in\mathcal{E}_k} (W_k \odot I_{k} \odot \ubrv_{k-1}) \diamond \bbeta_{k, I_{k}} + \bbeta_{k, 0} \odot (\hat{\ub}_k - \bb_{k})
\end{array}\right].
\label{eq:suff-stats}
\end{equation}
Therefore, by designing a solver that only requires access to $\bzeta_{k}(\bbeta_k)$ rather than to single $\bbeta_k$ entries, we can incur only a linear memory cost. 
\vspace{7pt}

In order for the resulting algorithm to be computationally efficient, we need to meet the anytime requirement in Fact~\ref{fact} with a low cost per iteration.
Let us refer to the evaluation of a neural network at a given input point $\xb_{0}$ as a forward pass, and the backpropagation of a gradient through the network as a backward pass.
\begin{fact}
	The full dual objective $d(\balpha, \bbeta)$ can be computed at the cost of a backward pass over the neural network if sufficient statistics $\bzeta(\bbeta) = \cup_{k \in \llbracket1, n-1\rrbracket}\bzeta_k(\bbeta_k)$ have been pre-computed.
	\label{fact:suffstats}
\end{fact}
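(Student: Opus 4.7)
The plan is to show two things: (i) every occurrence of $\bbeta$ in the Lagrangian $\mathcal{L}(\xb, \zb, \balpha, \bbeta)$ can be re-expressed using only the sufficient statistics $\bzeta(\bbeta)$; and (ii) given these pre-computed quantities, the inner minimization $\min_{\xb, \zb} \mathcal{L}(\xb, \zb, \balpha, \bbeta)$ decomposes layer-by-layer at a cost matching a backward pass through the network.

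First I would rewrite $\boldsymbol{f}_k(\balpha, \bbeta)$ and $\boldsymbol{g}_k(\bbeta)$ from \eqref{eq:dual-anderson-functions} in terms of $\bzeta$. The only $\bbeta$-dependent pieces of $\boldsymbol{f}_k$ are $-\sum_{I_k} \bbeta_{k, I_k}$ and $\sum_{I_{k+1}} (W_{k+1} \odot I_{k+1})^T \bbeta_{k+1, I_{k+1}}$, which are exactly the first two entries of $\bzeta_k$ and $\bzeta_{k+1}$. A short algebraic rearrangement that folds the separately written $\bbeta_{k, 0}, \bbeta_{k, 1}$ corrections back into the $\mathcal{E}_k$ sums shows that $\boldsymbol{g}_k(\bbeta)$ is an affine combination of $\bzeta_k[3]$ and $\bzeta_k[4]$. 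Finally, the $\bbeta$-dependent constant term in $\mathcal{L}$, namely $\sum_{I_k \in \mathcal{E}_k}(W_k \odot I_k \odot \lbrv_{k-1})\oblong \bbeta_{k,I_k} + \bbeta_{k,1}^T(\hat{\lb}_k - \bb_k)$, collapses to $\mathbf{1}^T \bzeta_k[3]$ by the same identity.

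Once $\boldsymbol{f}_k$, $\boldsymbol{g}_k$, and the constant are expressed through $\balpha$ and $\bzeta$, the inner minimization proceeds exactly as in Active Set ($\S$\ref{sec:as-solver}). For each $k \in \llbracket 1, n-1\rrbracket$, the closed form \eqref{eq:as-primalk-min} drives $\xb_k, \zb_k$ to the appropriate box-constraint boundaries at $O(n_k)$ cost, while for $\xb_0$ we invoke the linear oracle over $\mathcal{C}$ once, per \eqref{eq:as-primal0-min}. The coupling pattern in $\boldsymbol{f}_k$ — its dependence on $W_{k+1}^T \balpha_{k+1}$ — forces a reverse traversal of the layers, contributing one matrix-vector product per layer. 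Aggregated across all layers, this is precisely the arithmetic cost of a backward pass through the network.

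The main subtlety to verify is that the $\bzeta$ entries subsume \emph{every} appearance of $\bbeta$ in $\mathcal{L}$: in particular, the bookkeeping around the distinguished indices $I_k \in \{0,1\}$, which are pulled out of the $\mathcal{E}_k$ sums in \eqref{eq:dual-anderson-functions}, must align exactly with the $\bbeta_{k,0}, \bbeta_{k,1}$ corrections embedded in $\bzeta_k[3]$ and $\bzeta_k[4]$. This is a pure matching exercise between exponential sums and their sufficient-statistic surrogates; once it is completed, the complexity claim follows immediately from the per-layer matrix-vector product and the closed-form minimizer.
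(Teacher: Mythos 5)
Your argument follows the paper's proof exactly: substitute the pre-computed entries of $\bzeta(\bbeta)$ into $\boldsymbol{f}_{k}$, $\boldsymbol{g}_{k}$ and the constant term of $\mathcal{L}$ so that evaluating the Lagrangian reduces to one matrix--vector product per layer (a backward pass), then minimise over the primals in closed form via \eqref{eq:as-primalk-min} and \eqref{eq:as-primal0-min} with $\mathcal{B}_k = \mathcal{E}_k$. You in fact spell out more of the bookkeeping than the paper does (which entries of $\bzeta_k$ absorb which exponential sums, and where the backward-pass cost arises), so the proposal is correct and takes essentially the same route.
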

\begin{proof}
	If $\bzeta(\bbeta)$ is up to date, the Lagrangian $\mathcal{L}(\xb, \zb, \balpha, \bbeta)$ can be evaluated using a single backward pass: this can be seen by replacing the relevant entries of \eqref{eq:suff-stats} in equations \eqref{eq:dual-anderson-functions} and \eqref{eq:dual-anderson}. 
	Similarly to gradient backpropagation, the bottleneck of the Lagrangian computation is the layer-wise use of transposed linear operators over the $\balpha$ dual variables.
	The minimization of the Lagrangian over primals can then be computed in linear time (less than the cost of a backward pass) by using equations \eqref{eq:as-primalk-min}, \eqref{eq:as-primal0-min} with $\mathcal{B}_k = \mathcal{E}_k$ for each layer~$k$.
\end{proof}
From Fact~\ref{fact:suffstats}, we see that the full dual can be efficiently evaluated via $\bzeta(\bbeta)$. On the other hand, in the general case, $\bzeta(\bbeta)$ updates have an exponential time complexity.
Therefore, we need a method that updates the sufficient statistics while computing a minimal number of terms of the exponentially-sized sums in \eqref{eq:suff-stats}. In other words, we need \emph{sparse updates} in the $\bbeta$ variables.
With this goal in mind, we consider methods belonging to the Frank-Wolfe family~\citep{Frank1956}, whose iterates are known to be sparse~\citep{Jaggi2013}. 
In particular, we now illustrate that sparse updates can be obtained by applying the Saddle-Point Frank-Wolfe (SP-FW) algorithm by \citet{Gidel17} to a suitably modified version of problem \eqref{eq:dual-anderson}.  
Details of SP-FW and the solver resulting from its application are then presented in $\S$\ref{sec:sp-fw-solver}.

\begin{fact} \label{fact:saddle}
	Dual problem \eqref{eq:dual-anderson} can be seen as a bilinear saddle point problem. 
	By limiting the dual feasible region to a compact set, a dual optimal solution for this domain-restricted problem can be obtained via SP-FW \citep{Gidel17}.
	Moreover, a valid lower bound to \eqref{eq:primal-anderson} can be obtained at anytime by evaluating $d(\balpha, \bbeta)$ at the current dual point from SP-FW.
\end{fact}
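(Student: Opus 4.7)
The plan is to prove the claim in three steps: first, to expose the bilinear structure of the Lagrangian; second, to compactify the dual domain and invoke SP-FW; third, to justify the anytime bound via weak duality.

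First, I would inspect $\mathcal{L}(\xb, \zb, \balpha, \bbeta)$ in \eqref{eq:dual-anderson}. The functions $\boldsymbol{f}_{k}$ and $\boldsymbol{g}_{k}$ defined in \eqref{eq:dual-anderson-functions} are linear in the dual block $(\balpha, \bbeta)$, and they enter $\mathcal{L}$ only through the inner products $\boldsymbol{f}_{k}(\balpha, \bbeta)^T \xb_k$ and $\boldsymbol{g}_{k}(\bbeta)^T \zb_k$. The remaining terms $\bb_k^T \balpha_k$, $(W_k \odot I_k \odot \lbrv_{k-1}) \oblong \bbeta_{k, I_k}$ and $\bbeta_{k,1}^T(\hat{\lb}_k - \bb_k)$ are linear in the duals and constant in the primals. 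Hence $\mathcal{L}$ is jointly bilinear (in the affine sense) in the primal block $(\xb, \zb)$ and the dual block $(\balpha, \bbeta)$, so \eqref{eq:dual-anderson} takes the canonical bilinear saddle-point form $\max_{\balpha, \bbeta \geq 0} \min_{(\xb, \zb) \in \mathcal{D}} \mathcal{L}$, where $\mathcal{D}$ denotes the primal feasible region.

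Second, SP-FW requires compact convex feasible sets on both sides. The primal side $\mathcal{D}$ is already compact and convex: $\mathcal{C}$ is compact by the assumption of $\S$\ref{sec:preliminaries}, and the $(\xb_k, \zb_k)$ variables live in explicit boxes. The dual non-negative orthant is unbounded, so I would introduce a finite upper bound, restricting $0 \leq \balpha_k \leq M\mbf{1}$ and $0 \leq \bbeta_{k, I_k} \leq M\mbf{1}$ for some constant $M > 0$ and every $I_k \in 2^{W_k}$, yielding a compact convex polytope. On this restricted domain $\mathcal{L}$ is continuous and bilinear, hence smooth with bounded curvature; by the minimax theorem a saddle point exists, and the convergence hypotheses of \citet{Gidel17} are met, so SP-FW computes an optimum of the domain-restricted saddle-point problem.

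Third, the anytime lower bound follows from weak duality. Every SP-FW iterate $(\balpha^t, \bbeta^t)$ is, by construction, a convex combination of vertices of the restricted dual polytope, all of which lie in the non-negative orthant; consequently $(\balpha^t, \bbeta^t) \geq 0$ is feasible for the unrestricted dual \eqref{eq:dual-anderson}, and Lagrangian weak duality applied to \eqref{eq:primal-anderson} yields $d(\balpha^t, \bbeta^t) \leq \hat{x}_n^*$. Evaluation of $d$ at a given iterate is tractable by Fact \ref{fact:suffstats}, provided the sufficient statistics $\bzeta(\bbeta^t)$ have been maintained. The main obstacle here is conceptual rather than technical: the compactification trades some reachable tightness (if $M$ is too small to contain the unrestricted optimum) for the applicability of SP-FW, but affects only the strength of the bound, not its validity, which is robust under any non-negative dual evaluation. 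The rest of the argument — verifying the SP-FW hypotheses on a compact bilinear problem and invoking weak duality — is essentially routine.
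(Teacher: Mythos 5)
Your argument is logically sound and follows the same skeleton the paper relies on (the paper states this Fact without a formal proof, justifying it in the surrounding text): bilinearity of $\mathcal{L}$ in the primal block $(\xb,\zb)$ and the dual block $(\balpha,\bbeta)$, compactification of the dual orthant, applicability of SP-FW to a bilinear objective over compact convex sets, and weak duality for the anytime bound. The one substantive divergence is your choice of compactification. You impose a coordinate-wise box $0 \leq \bbeta_{k,I_k} \leq M\mbf{1}$ on every $I_k \in 2^{W_k}$, whereas the paper caps the $\ell_1$ norm of the set of $\bbeta$ variables associated to each neuron, yielding the domain $\triangle_k(\mub_k)$ in \eqref{eq:sp-fw-dual}: a Cartesian product of a box on $\balpha_k$ and $n_k$ exponentially-sized simplices. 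Both choices make the domain compact and both suffice for the literal claims of the Fact, so your proof is not wrong. But the difference is not cosmetic in context: the vertices of your box are dense (the dual conditional gradient would set exponentially many $\bbeta_{k,I_k}$ to their upper bound simultaneously), so the linear maximisation oracle could neither be reduced to the linear-time separation oracle \eqref{eq:oracle} as in \eqref{eq:betak-condg}, nor be folded into the linearly-sized sufficient statistics \eqref{eq:suff-stats} by computing a single term of the sums. The simplex structure is precisely what guarantees that each dual vertex has at most $n_k+1$ non-zero entries, which is what makes SP-FW actually runnable here and satisfies the sparsity requirement of Fact \ref{fact}. Your third step (weak duality: any iterate lies in the non-negative orthant, hence is feasible for the unrestricted dual \eqref{eq:dual-anderson}, and $d$ is evaluable via Fact \ref{fact:suffstats}) matches the paper exactly, as does your observation that the restriction can only cost tightness, never validity.
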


We make the feasible region of problem \eqref{eq:dual-anderson} compact by capping the cumulative price for constraint violations at some constants $\mub$.  In particular, we bound the $\ell_1$ norm for the sets of $\bbeta$ variables associated to each neuron. As the $\ell_1$ norm is well-known to be sparsity inducing \citep{Cands2008}, our choice reflects the fact that,  in general, only a fraction of the $\mathcal{A}_k$ constraints will be active at the optimal solution.
Denoting by $\triangle(\mub) = \cup_{k \in \llbracket1, n-1\rrbracket} \triangle_k(\mub_{k})$ the resulting dual domain, we obtain domain-restricted dual  $\max_{(\balpha, \bbeta) \in \triangle(\mub)} d(\balpha, \bbeta)$, which can be written as the following saddle point problem:
\begin{equation}
\begin{aligned}
\max_{\balpha, \bbeta}& \min_{\xb, \zb}  \enskip \mathcal{L}(\xb, \zb, \balpha, \bbeta) \\
\text{s.t. } \enskip &\xb_0 \in \mathcal{C}, \\
&(\xb_k, \zb_k) \in [\lb_k, \ub_k] \times [\mbf{0},\ \mbf{1}]  && k \in \left\llbracket1, n-1\right\rrbracket,  \\
& \left. \hspace{-7pt} \begin{array}{l}
\balpha_{k} \in [\mbf{0}, \mub_{\alpha, k}] \\
\hspace{-2pt} \bbeta_{k} \geq \mbf{0}, \ \left\lVert \bbeta_{k}\right\rVert_1 \leq  \mub_{\beta, k}  \end{array}\right\}  := \triangle_k(\mub_{k}) \quad && k \in \left\llbracket1, n-1\right\rrbracket,
\end{aligned}
\label{eq:sp-fw-dual}
\end{equation}
where $\mathcal{L}(\xb, \zb, \balpha, \bbeta)$ was defined in equation \eqref{eq:dual-anderson} and $\left\lVert \cdot\right\rVert_1$ denotes the $\ell_1$ norm. 
Frank-Wolfe type algorithms move towards the vertices of the feasible region. 
Therefore, the shape of $\triangle_k(\mub_{k})$ is key to the efficiency of $\bzeta_k$ updates. In our case, $\triangle_k(\mub_{k})$ is the Cartesian product of a box constraint on $\balpha_k$ and $n_k$ exponentially-sized simplices: one for each set $\bbeta_{k}[i] = \{\bbeta_{k, \text{row}_i(I_k)}[i]\ \forall\ \text{row}_i(I_{k})\in 2^{\text{row}_i(W_k)}\}$.
As a consequence, each vertex of $\triangle_k(\mub_{k})$ is sparse in the sense that at most $n_k + 1$ variables out of exponentially many will be non-zero.
In order for the resulting solver to be useful in practice, we need to efficiently select a vertex towards which to move:
we show in section \ref{sec:sp-fw-solver} that our choice for $\triangle_k(\mub_{k})$ allows us to recover the linear-time primal oracle \eqref{eq:oracle} by \citet{Anderson2020}. 

\vspace{7pt}
Before presenting the technical details of our Saddle Point solver, it remains to comment on the consequences of the dual space restriction on the obtained bounds.
Let us define $d_{\mub}^*=\max_{(\balpha, \bbeta) \in \triangle(\mub)} d(\balpha, \bbeta)$, the optimal value of the restricted dual problem associated to saddle point problem \eqref{eq:sp-fw-dual}. 
Value $d_{\mub}^*$ is attained at the dual variables from a saddle point of problem~\eqref{eq:sp-fw-dual}.
As we restricted the dual feasible region, $d_{\mub}^*$ will in general be smaller than the optimal value of problem~\eqref{eq:dual-anderson}. 
However, owing to the monotonicity of $d_{\mub}^*$ over $\mub$ and the concavity of $d(\balpha, \bbeta)$, we can make sure $\triangle(\mub)$ contains the optimal dual variables by running a binary search on $\mub$. 
In practice, we heuristically determine the values of $\mub$ from our dual initialization procedure (see appendix~\ref{sec:sp-fw-caps}).

\begin{figure}[t]
	\centering
	\begin{minipage}{.99\textwidth}
		\begin{algorithm}[H]
			\caption{Saddle Point}\label{alg:sp-fw}
			\small
			\begin{algorithmic}[1]
				\algrenewcommand\algorithmicindent{1.0em}
				\Function{saddlepoint\_compute\_bounds}{Problem \eqref{eq:dual-anderson}}
				\State Initialize duals $\balpha^0, \bbeta_{\mathcal{B}}^0$ using algorithm \eqref{alg:bigm} or algorithm \eqref{alg:active-set}
				\State Set $\bbeta_{\bar{\mathcal{B}}}^0 = 0$, $\bbeta^0 = \bbeta_{\mathcal{B}}^0 \cup \bbeta_{\bar{\mathcal{B}}}^0$, and replace $\bbeta^0$ by its sufficient statistics $\bzeta(\bbeta_k^0)$ using \eqref{eq:suff-stats}
				\State Initialize primals $\xb^0, \zb^0$ according to $\S$\ref{sec:sp-fw-primalinit}
				\State Set price caps $\mub$ heuristically as outlined in $\S$\ref{sec:sp-fw-caps}
				\For{$t \in \llbracket0, T-1\rrbracket$}
				\State $\bar{\xb}^t, \bar{\zb}^t \leftarrow$ using \eqref{eq:as-primalk-min},\eqref{eq:as-primal0-min} with $\mathcal{B}_k = \mathcal{E}_k$ \Comment{compute primal conditional gradient}
				\State $\bar{\balpha}^t, \bzeta(\bar{\bbeta}^t) \leftarrow$ \eqref{eq:alphak-condg}, \eqref{eq:betak-condg} + \eqref{eq:suff-stats} \Comment{compute dual conditional gradient}
				\State $\xb^{t+1}, \zb^{t+1}, \balpha^{t+1}, \bzeta(\bbeta^{t+1}) = (1 - \gamma_{t}) [\xb^{t}, \zb^{t}, \balpha^{t}, \bzeta(\bbeta^{t})] + \gamma_t [\bar{\xb}^{t}, \bar{\zb}^{t}, \bar{\balpha}^{t}, \bzeta(\bar{\bbeta}^{t})]$
				\EndFor
				\State\Return $\min_{\xb, \zb}\mathcal{L}(\xb, \zb, \balpha^T, \bzeta(\bbeta^T)) $
				\EndFunction
			\end{algorithmic}
		\end{algorithm}
	\end{minipage}
\end{figure}

\subsection{Solver} \label{sec:sp-fw-solver}
Algorithms in the Frank-Wolfe family proceed by taking convex combinations between the current iterate and a vertex of the feasible region. This ensures feasibility of the iterates without requiring projections.
For SP-FW~\citep{Gidel17}, the convex combination is performed at once, with the same coefficient, for both primal and dual variables.

In the general case, denoting primal variables as $\xb$, and dual variables as $\yb$, each iteration of the SP-FW algorithm proceeds as follows:
first, we compute the vertex $[\bar{\xb}, \bar{\yb}]$ towards which we take a step (\emph{conditional gradient}). This is done by maximizing the inner product between the gradient and the variables over the feasible region for the dual variables, and by minimizing the inner product between the gradient and the variables over the feasible region for the primal variables. This operation is commonly referred to as the linear maximization oracle for dual variables, and linear minimization oracle for primal variables.
Second, a step size $\gamma_t \in [0, 1]$ is determined according to the problem specification.
Finally, the current iterate is updated as $[\xb, \yb] \leftarrow (1 - \gamma_{t}) [\xb, \yb]  + \gamma_{t} [\bar{\xb}, \bar{\yb}]$.
We will now provide details for the instantiation of SP-FW in the context of problem \eqref{eq:sp-fw-dual}, along with information concerning the solver initialization.

While Saddle Point relies on a primal-dual method operating on problem \eqref{eq:sp-fw-dual}, our main goal is to compute anytime bounds to problem~\eqref{eq:primal-anderson}. 
As explained in $\S$\ref{sec:solver}, this is typically achieved in the dual domain.
Therefore, as per Fact~\ref{fact:saddle}, we discard the primal variables from SP-FW and use the current dual iterate to evaluate $d(\balpha, \bbeta)$ from problem~\eqref{eq:dual-anderson}. 

\subsubsection{Conditional Gradient Computations} 

Due to the bilinearity of $\mathcal{L}(\xb, \zb, \balpha, \bbeta)$, the computation of the conditional gradient for the primal variables coincides with the inner minimization in equations \eqref{eq:as-primalk-min}-\eqref{eq:as-primal0-min} with $\mathcal{B}_k = \mathcal{E}_k\ \forall\ k \in \llbracket1, n-1\rrbracket$.

Similarly to the primal variables, the linear maximization oracle for the dual variables decomposes over the layers. 
The gradient of the Lagrangian over the duals, $\nabla_{\balpha, \bbeta}\ \mathcal{L}$, is given by the supergradient in equation \eqref{eq:as-supergradient} if $\mathcal{B}_k = \mathcal{E}_k$ and the primal minimiser $(\xb^*, \zb^*)$ is replaced by the primals at the current iterate.
As dual variables $\balpha$ are box constrained, the linear maximization oracle will drive them to their lower or upper bounds depending on the sign of their gradient. Denoting conditional gradients by bars, for each $k \in \llbracket1, n-1\rrbracket$:
\begin{equation}
	\bar{\balpha}_{k} = \mub_{\alpha, k} \odot \mathds{1}_{\left( W_k \xb_{k-1} + \bb_k - \xb_k \right) \geq 0}.
	\label{eq:alphak-condg}
\end{equation}
The linear maximization for the exponentially many $\bbeta_{k}$ variables is key to the solver's efficiency and is carried out on a Cartesian product of simplex-shaped sub-domains (see definition of $\triangle_k(\mub_k)$ in \eqref{eq:sp-fw-dual}). 
Therefore, conditional gradient $\bar{\bbeta}_{k}$ can be non-zero only for the entries associated to the largest gradients of each simplex sub-domain. For each $k \in \llbracket1, n-1\rrbracket$, we have:
\begin{equation}
\begin{aligned}
	& \bar{\bbeta}_{k} = \left\{ \def\arraystretch{1.5} \begin{array}{l}
	\bar{\bbeta}_{k, I^\dagger_k} = \mub_{\beta, k} \odot \mathds{1}_{\nabla_{\bbeta_{k, I^\dagger_k}} \mathcal{L}(\xb, \zb, \balpha, \bbeta) \geq 0} \\ \bar{\bbeta}_{k, I_k} = \mbf{0} \quad \forall\ I_k \in 2^{W_k} \setminus I^\dagger_k \end{array}\right\},\\
	\text{where:} &\quad \bbeta_{k,I^\dagger_k} \in \argmax_{\bbeta_{k,I_k} \in \bbeta_k} \left\{\nabla_{\bbeta_{k, I_k}}\ \mathcal{L}(\xb, \zb, \balpha, \bbeta)^T \mathbf{1}\right\}.
\end{aligned}
\label{eq:betak-condg}
\end{equation}
We can then efficiently represent $\bar{\bbeta}_{k}$ through sufficient statistics as $\bar{\bzeta}_{k} = \bzeta_{k}(\bar{\bbeta}_{k})$, which will require the computation of a single term of the sums in \eqref{eq:suff-stats}.
\begin{proposition}
	$\bbeta_{k,I^\dagger_k}$ as defined in \eqref{eq:betak-condg} represents the Lagrangian multipliers associated to the most violated constraints from $\mathcal{A}_k$ at $\left(\xb, \zb\right)$, the current SP-FW primal iterate. Moreover, the conditional gradient $\bar{\bbeta}_{k}$ can be computed at the cost of a single call to the linear-time oracle \eqref{eq:oracle} by \citet{Anderson2020}.
\end{proposition}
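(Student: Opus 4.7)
The plan is to treat this as a direct analogue of Proposition~\ref{prop:as}, applied to the current SP-FW primal iterate $(\xb, \zb)$ in place of the restricted-Lagrangian minimiser $(\xb^*, \zb^*)$. The key observation I would use is that, since $\mathcal{L}(\xb, \zb, \balpha, \bbeta)$ is linear in $\bbeta_{k, I_k}$, the gradient $\nabla_{\bbeta_{k, I_k}} \mathcal{L}(\xb, \zb, \balpha, \bbeta)$ has exactly the closed form displayed in \eqref{eq:as-supergradient}, with the starred primal variables replaced by the current iterate. By definition of the Lagrangian, this gradient equals the residual of the constraint in $\mathcal{A}_k$ indexed by $I_k$, evaluated at $(\xb, \zb)$. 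Maximising $\nabla_{\bbeta_{k, I_k}} \mathcal{L}^T \mathbf{1}$ over $I_k$ is therefore equivalent to identifying the most violated constraint from $\mathcal{A}_k$, which is the first half of the claim.

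Next, I would reuse verbatim the row-wise decomposition from the proof of Proposition~\ref{prop:as}. After dropping the terms in $\nabla_{\bbeta_{k, I_k}} \mathcal{L}^T \mathbf{1}$ that do not depend on $I_k$, the optimisation splits into independent per-row problems of the form $\max_{\boldsymbol{i}_{i,k}} \sum_j c_{i,j}\, I_k[i,j]$, where $c_{i,j} = \left((1 - \zb_k[i])\lbrv_{k-1}[i,j] + \zb_k[i]\, \ubrv_{k-1}[i,j] - \xb_{k-1}[i]\right) W_k[i,j]$. The optimum $I^\dagger_k[i,j] = \mathds{1}_{c_{i,j} \geq 0}$ coincides exactly with the output of oracle~\eqref{eq:oracle} at the point $(\xb, \zb)$, yielding a single linear-time call. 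For the second claim, I would then observe that \eqref{eq:betak-condg} sets $\bar{\bbeta}_{k, I^\dagger_k}$ by a per-neuron sign check on the scalar $\nabla_{\bbeta_{k, I^\dagger_k}} \mathcal{L}$, leaving every other entry of $\bar{\bbeta}_k$ at zero. Consequently $\bar{\bzeta}_k = \bzeta_k(\bar{\bbeta}_k)$ collects only one non-zero term per neuron in each of the sums of \eqref{eq:suff-stats} and can be assembled in linear time from $I^\dagger_k$ alone.

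The one subtlety to address is the mismatch between the domain of the maximisation here ($I_k \in 2^{W_k}$, the full index set of the per-neuron simplex) and the domain in Proposition~\ref{prop:as} ($I_k \in \mathcal{E}_k$, which excludes the all-zero and all-one masks). I would argue that this actually simplifies matters: oracle~\eqref{eq:oracle} is perfectly willing to return an all-zero or all-one row when that row maximises $c_{i, \cdot}$, and because the $\bbeta_{k, 0}$ and $\bbeta_{k, 1}$ multipliers live inside the same per-neuron simplex in $\triangle_k(\mub_k)$, no separate comparison against $\mathcal{M}_k$-type constraints is required, in contrast to the comment following \eqref{eq:oracle} for the primal separation problem. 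Aside from verifying this correspondence between row-indexed simplex vertices and the output of oracle~\eqref{eq:oracle}, I do not foresee significant obstacles; the result is essentially a repackaging of the analysis already carried out for Active Set, and I expect the proof to be short.
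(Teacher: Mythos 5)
Most of your argument coincides with the paper's proof: the paper likewise reduces the claim to Proposition~\ref{prop:as} with the current iterate $(\xb,\zb)$ in place of $(\xb^*,\zb^*)$, so your identification of the dual gradient with constraint violation, the row-wise reduction to oracle~\eqref{eq:oracle}, and the observation that $\bar{\bzeta}_k$ collects a single term of each sum in~\eqref{eq:suff-stats} are all sound.

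The gap is in your final paragraph, where you assert that because $\bbeta_{k,0}$ and $\bbeta_{k,1}$ live inside the same per-neuron simplex, the oracle call alone suffices and ``no separate comparison against $\mathcal{M}_k$-type constraints is required.'' This is backwards. In this formulation $\bbeta_{k,0}$ and $\bbeta_{k,1}$ are the multipliers of the $\mathcal{M}_k$ constraints $\xb_k \leq \hat{\ub}_k \odot \zb_k$ and $\xb_k \leq \xbhat_k - \hat{\lb}_k \odot (1-\zb_k)$, which involve the network-wide intermediate bounds $\hat{\ub}_k, \hat{\lb}_k$; their gradient coordinates in~\eqref{eq:as-supergradient} are \emph{not} what you get by substituting $I_k=0$ or $I_k=1$ into the generic formula (that substitution produces the looser interval-propagation quantities built from $W_k \odot \ubrv_{k-1}$ and $W_k \odot \lbrv_{k-1}$; see appendix~\ref{sec:intermediate-anderson}, which exists precisely because these differ). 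Oracle~\eqref{eq:oracle} maximises only the generic expression $\sum_j c_{i,j}\, I_k[i,j]$, so it cannot certify that its output dominates the two $\mathcal{M}_k$ coordinates of the simplex: a constraint from $\mathcal{M}_k$ may be strictly more violated at $(\xb,\zb)$ than every constraint in $\mathcal{A}_{\mathcal{E},k}$. Skipping the comparison means the selected vertex need not be the linear maximiser over $\triangle_k(\mub_k)$, which invalidates both the ``most violated constraint from $\mathcal{A}_k$'' claim and the correctness of the Frank--Wolfe step. The paper closes this by an explicit constant-cost $\argmax$ over $\{\bbeta_{k,I^*_k},\ \bbeta_{k,0},\ \bbeta_{k,1}\}$ after the oracle call, exactly mirroring the remark following~\eqref{eq:oracle}; you need to reinstate that step.
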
 
\begin{proof}
	Let us define $\bbeta_{k,I^*_k}$ as:
	$$\bbeta_{k,I^*_k} \in \argmax_{\bbeta_{k,I_k} \in\ \bbeta_{k} \setminus \bbeta_{\emptyset, k} } \{\nabla_{\bbeta_{k, I_k}}\mathcal{L}(\xb, \zb, \balpha, \bbeta)^T \mathbf{1}\}.$$
	Proceeding as the proof of proposition \ref{prop:as}, with $\left(\xb, \zb\right)$ in lieu of $(\xb^*, \zb^*)$, we obtain that $\bbeta_{k,I^*_k}$ is the set of Lagrangian multipliers for the most violated constraint from $\mathcal{A}_{\mathcal{E}, k}$ at $\left(\xb, \zb\right)$ and can be computed through the oracle \eqref{eq:oracle} by \citet{Anderson2020}. 
	
	Then, $\bbeta_{k,I^\dagger_k}$ is computed as: $$\bbeta_{k,I^\dagger_k} \in \argmax_{\bbeta_{k, I_k} \in\ \{\bbeta_{k,I^*_k},\ \bbeta_{k,0},\ \bbeta_{k,1}\}} \nabla_{\bbeta_{k, I_k}}\mathcal{L}(\xb, \zb, \balpha, \bbeta)^T \mathbf{1}.$$
	As pointed out in the proof of proposition \ref{prop:as}, the dual gradients of the Lagrangian correspond (by definition of Lagrangian multiplier) to constraint violations. Hence, $\bbeta_{k,I^\dagger_k}$ is associated to the most violated constraint in $\mathcal{A}_{k}$ .
\end{proof}

\subsubsection{Convex Combinations}
The $(t+1)$-th SP-FW iterate will be given by a convex combination of the $t$-th iterate and the current conditional gradient.
Due to the linearity of $\bzeta(\bbeta)$, we can perform the operation via sufficient statistics. Therefore, all the operations of Saddle Point occur in the linearly-sized space of $(\xb, \zb, \balpha, \bzeta(\bbeta))$:
\begin{equation*}
[\xb^{t+1}, \zb^{t+1}, \balpha^{t+1}, \bzeta(\bbeta^{t+1})] = (1 - \gamma_{t}) [\xb^{t}, \zb^{t}, \balpha^{t}, \bzeta(\bbeta^{t})] + \gamma_t [\bar{\xb}^{t}, \bar{\zb}^{t}, \bar{\balpha}^{t}, \bar{\bzeta}(\bbeta^{t})],
\end{equation*}
where, for our bilinear objective, SP-FW prescribes $\gamma_t = \frac{1}{1 + t}$~\citep[section 5]{Gidel17}.  

Extensions of SP-FW, such as its away or pairwise step variants, require a worst-case memory cost that is linear in the number of iterations. In other words, as for Active Set, the attainable tightness would depend on the available memory, voiding one of the main advantages of Saddle Point (we provide empirical evidence of its memory efficiency in $\S$\ref{sec:exp-memory}).
Furthermore, the worst-case memory cost would increase more rapidly than for Active Set, which infrequently adds a few variables to $\mathcal{B}$ (in our experiments, $\mathcal{B}$ contains at most $7$ variables per neuron: see $\S$\ref{sec:incomp_verif}).
Finally, due to the bilinearity of the objective, these SP-FW variants do not correspond to an improved convergence rate~\citep{Gidel17}.

\subsubsection{Initialization} 
As for the Active Set solver ($\S$\ref{sec:active}), dual variables can be initialized via supergradient ascent on the set of dual variables associated to the Big-M relaxation (cf. appendix \ref{sec:dual-init}). 
Additionally, if the available memory permits it, the initialization can be tightened by running Active Set (algorithm \ref{alg:active-set}) for a small fixed number of iterations.

We mirror this strategy for the primal variables, which are initialized by performing subgradient descent on the primal view of saddle point problem \eqref{eq:sp-fw-dual}. 
Analogously to the dual case, the primal view of problem \eqref{eq:sp-fw-dual} can be restricted to the Big-M relaxation for a cheaper initialization. Our primal initialization strategy is detailed in in appendix \ref{sec:sp-fw-primalinit}.

\section{Implementation Details, Technical Challenges} \label{sec:implementation-details}

In this section, we present details concerning the implementation of our solvers. In particular, we first outline our parallelization scheme and the need for a specialized convolutional operator ($\S$\ref{sec:parallel-masked}), then describe how to efficiently employ our solvers within branch and bound ($\S$\ref{sec:stratified}).

\subsection{Parallelism, Masked Forward/Backward Passes} \label{sec:parallel-masked}
Analogously to previous dual algorithms~\citep{Dvijotham2018,BunelDP20}, our solvers can leverage the massive parallelism offered by modern GPU architectures in three different ways.
First, we execute in parallel the computations of lower and upper bounds relative to all the neurons of a given layer. Second, in complete verification, we can batch over the different Branch and Bound (BaB) subproblems. 
Third, as most of our solvers rely on standard linear algebra operations employed during the forward and backward passes of neural networks, we can exploit the highly optimized implementations commonly found in modern deep learning frameworks. 

\vspace{5pt}
An exception are what we call ``masked" forward and backward passes. Writing convolutional operators in the form of their equivalent linear operator (as done in previous sections, see $\S$\ref{sec:preliminaries}), masked passes take the following form:
\begin{equation*}
	\left(W_k \odot I_{k}\right) \mbf{a}_{k}, \quad \left(W_k \odot I_{k}\right)^T \mbf{a}_{k+1},
\end{equation*}
where $\mbf{a}_{k} \in \mathbb{R}^{n_k}, \mbf{a}_{k+1} \in \mathbb{R}^{n_{k+1}}$.
Both operators are needed whenever dealing with constraints from $\mathcal{A}_k$. In fact, they appear in both Saddle Point (for instance, in the sufficient statistics \eqref{eq:suff-stats}) and Active Set (see equations \eqref{eq:dual-active-set}, \eqref{eq:as-supergradient}), except for the dual initialization procedure based on the easier Big-M problem \eqref{eq:primal-bigm}.

Masked passes can be easily implemented for fully connected layers via Hadamard products.
However, a customized lower-level implementation is required for a proper treatment within convolutional layers. 
In fact, from the high-level perspective, masking a convolutional pass corresponds to altering the content of the convolutional filter while it is being slided through the image.
Details of our implementation can be found in appendix~\ref{sec:masked-passes}.

\subsection{Stratified Bounding for Branch and Bound} \label{sec:stratified}

In complete verification, we aim to solve the non-convex problem~\eqref{eq:noncvx_pb} directly, rather than an approximation like problem~\eqref{eq:primal-anderson}. 
In order to do so, we rely on branch and bound, which operates by dividing the problem domain into subproblems (branching) and bounding the local minimum over those domains. 
The lower bound on the minimum is computed via a bounding algorithm, such as our solvers ($\S$\ref{sec:active}, $\S$\ref{sec:sp}). The upper bound, instead, can be obtained by evaluating the network at an input point produced by the lower bound computation.\footnote{For subgradient-type methods like Active Set, we evaluate the network at $\xb^{*, T}_{0}$ (see algorithm \ref{alg:active-set}), while for Frank-Wolfe-type methods like Saddle Point at $\xb_{0}^{T}$ (see algorithm \ref{alg:sp-fw}). Running the bounding algorithm to get an upper bound would result in a much looser bound, as it would imply having an upper bound on a version of problem \eqref{eq:noncvx_pb} with maximization instead of minimization.}
Any domain that cannot contain the global lower bound is pruned away, whereas the others are kept and branched over.
The graph describing branching relations between sub-problems is referred to as the enumeration tree.
As tight bounding is key to pruning the largest possible number of domains, the bounding algorithm plays a crucial role. Moreover, it usually acts as a computational bottleneck for branch and bound~\citep{Lu2020Neural}.

\vspace{7pt} 
In general, tighter bounds come at a larger computational cost. The overhead can be linked to the need to run dual iterative algorithms for more iterations, or to the inherent complexity of tighter relaxations like problem \eqref{eq:primal-anderson}. For instance, such complexity manifests itself in the masked passes described in appendix \ref{sec:masked-passes}, which increase the cost per iteration of Active Set and Saddle Point with respect to algorithms operating on problem \eqref{eq:primal-bigm}.
These larger costs might negatively affect performance on easier verification tasks, where a small number of domain splits with loose bounds suffices to verify the property.
Therefore, as a general complete verification system needs to be efficient regardless of problem difficulty, we employ a \emph{stratified} bounding system within branch and bound.
Specifically, we devise a simple adaptive heuristic to determine whether a given subproblem is ``easy" (therefore looser bounds are sufficient) or whether it is instead preferable to rely on tighter bounds. 

\vspace{7pt}
Given a bounding algorithm $a$, let us denote its lower bound for subproblem $s$ as $l_a(s)$.
Assume two different bounding algorithms, $a_l$ and  $a_t$, are available: one inexpensive yet loose, the other tighter and more costly.
At the root $r$ of the branch and bound procedure, we estimate $l_{a_t-a_l} = l_{a_t}(r) - l_{a_l}(r)$, the extent to which the lower bounds returned by $a_l$ can be tightened by $a_t$. 
While exploring the enumeration tree, we keep track of the lower bound increase from parent to child (that is, after splitting the subdomain) through an exponential moving average. We write ${i(s)}$ for the average parent-to-child tightening until subproblem $s$.
Then, under the assumption that each subtree is complete, we can estimate $|s|_{a_l}$ and $|s|_{a_t}$, the sizes of the enumeration subtrees rooted at $s$ that would be generated by each bounding algorithm. Recall that, for verification problems the canonical form \citep{Bunel2018}, subproblems are discarded when their lower bound is positive. Given $p$, the parent of subproblem $s$, we perform the estimation as:
$|s|_{a_l} = 2^{\frac{-l_{a_l}(p)}{i(s)}+1}-1$, $|s|_{a_t} = 2^{\frac{ -\left(l_{a_l}(p) + l_{a_t-a_l} \right)}{i(s)}+1}-1$.
Then, relying on $c_{a_t/a_l}$, a rough estimate of the relative overhead of running $a_t$ over $a_l$, we mark the subtree rooted at $s$ as hard if the reduction in tree size from using $a_t$ exceeds its overhead. That is, if $\frac{|s|_{a_l}}{|s|_{a_t}} > c_{a_t/a_l}$, the lower bound for $s$ and its children will be computed via algorithm $a_t$ rather than~$a_l$.

\section{Related Work} \label{sec:related-work}
In addition to those described in $\S$\ref{sec:preliminaries}, many other relaxations have been proposed in the literature.
In fact, all bounding methods are equivalent to solving some convex relaxation of a neural network.
This holds for conceptually different ideas such as bound propagation~\citep{Gowal2018}, specific dual assignments~\citep{Wong2018}, dual formulations based on Lagrangian Relaxation~\citep{Dvijotham2018} or Lagrangian Decomposition~\citep{BunelDP20}.
The degree of tightness varies greatly: from looser relaxations associated to closed-form methods~\citep{Gowal2018,Weng2018,Wong2018} to tighter formulations based on Semi-Definite Programming (SDP)~\citep{Raghunathan2018}. 

The speed of closed-form approaches results from simplifying the triangle-shaped feasible region of the Planet relaxation ($\S$\ref{sec:relax-planet}) \citep{Singh2018a,Wang2018b}.
On the other hand, tighter relaxations are more expressive than the linearly-sized LP by \citet{Ehlers2017}.
The SDP formulation by \citet{Raghunathan2018} can represent interactions between activations in the same layer. Similarly, \citet{Singh2019b} tighten the Planet relaxation by considering the convex hull of the union of polyhedra relative to $k$ ReLUs of a given layer at once. 
Alternatively, tighter LPs can be obtained by considering the ReLU together with the affine operator before it:
standard MIP techniques~\citep{Jeroslow} lead to a formulation that is quadratic in the number of variables (see appendix \ref{sec:preact-anderson-derivation}).
The relaxation by \citet{Anderson2020} detailed in $\S$\ref{sec:relax-anderson} is a more convenient representation of the same~set. 

By projecting out the auxiliary $\zb$ variables, \citet{Tjandraatmadja} recently introduced another formulation equivalent to the one by \citet{Anderson2020}, with half as many variables and a linear factor more constraints compared to what described in $\S$\ref{sec:relax-anderson}. 
Therefore, the relationship between the two formulations mirrors the one between the Planet and Big-M relaxations (see appendix \ref{sec:planet-eq}). 
Our dual derivation and solvers can be adapted to operate on the projected relaxations.
Furthermore, the formulation by \citet{Tjandraatmadja} allows for a propagation-based method (``FastC2V"). However, such an algorithm tackles only two constraints per neuron at once and might hence yield looser bounds than the Planet relaxation. In this work, we are interested in designing solvers that can operate on strict subsets of the feasible region from problem \eqref{eq:primal-bigm}.

Specialized dual solvers significantly improve in bounding efficiency with respect to off-the-shelf solvers for both LP~\citep{BunelDP20} and SDP formulations~\citep{Dvijotham2019}.
Therefore, the design of similar solvers for other tight relaxations is an interesting line of future research.
We contribute with two specialized dual solvers for the relaxation by~\citet{Anderson2020}. 
In what follows, we demonstrate empirically that by meeting the requirements of Fact \ref{fact} we can obtain large incomplete and complete verification improvements.

\section{Experiments} \label{sec:exp}
We empirically demonstrate the effectiveness of our methods under two settings. First, we assess the speed and quality of our bounds compared to other bounding algorithms on incomplete verification ($\S$\ref{sec:incomp_verif}). Then, we examine whether our speed-accuracy trade-offs pay off within branch and bound ($\S$\ref{sec:comp_verif}).
The implementation of our algorithms, available at \linebreak \url{https://github.com/oval-group/oval-bab} as part of the OVAL neural network verification framework, is based on Pytorch \citep{Paszke2017}.

\subsection{Experimental Setting}

We compare both against dual iterative methods and Gurobi, which we use as gold standard for LP solvers. 
The latter is employed for the following two baselines:
\begin{itemize}
	\item \textbf{Gurobi Planet} means solving the Planet~\cite{Ehlers2017} relaxation of the network (a version of problem \eqref{eq:primal-bigm} for which $\zb$ have been projected out).
	\item \textbf{Gurobi cut} starts from the Big-M relaxation and adds constraints from $\mathcal{A}_k$ in a cutting-plane fashion, as the original primal algorithm by ~\citet{Anderson2020}. 
\end{itemize}
Both Gurobi-based methods make use of LP incrementalism (warm-starting) when possible. 
In the experiments of $\S$\ref{sec:incomp_verif}, where each image involves the computation of 9 different output upper bounds, we warm-start each LP from the LP of the previous neuron. For “Gurobi 1 cut”, which involves two LPs per neuron, we first solve all Big-M LPs, then proceed with the LPs containing a single cut.

In addition, our experimental analysis comprises the following dual iterative methods:
\begin{itemize}
	\item \textbf{BDD+}, the recent proximal-based solver by~\citet{BunelDP20}, operating on a Lagrangian Decomposition dual of the Planet relaxation.
	\item \textbf{Active Set} denotes our supergradient-based solver for problem \eqref{eq:primal-anderson}, described in~$\S$\ref{sec:active}.
	\item \textbf{Saddle Point}, our Frank-Wolfe-based solver for problem \eqref{eq:primal-anderson} (as presented in $\S$\ref{sec:sp}).
	\item By keeping $\mathcal{B}=\emptyset$, Active Set reduces to \textbf{Big-M}, a solver for the non-projected Planet relaxation (appendix \ref{sec:dual-init}), which is employed as dual initializer to both Active Set and Saddle~Point. 
	\item \textbf{AS-SP} is a version of Saddle Point whose dual initialization relies on a few iterations of Active Set rather than on the looser Big-M solver, hence combining both our dual approaches. 
\end{itemize}
As we operate on the same data sets employed by~\citet{BunelDP20}, we omit both their supergradient-based approach and the one by~\citet{Dvijotham2018}, as they both perform worse than BDD+~\citep{BunelDP20}. 
For the same reason, we omit cheaper (and looser) methods, like interval propagation~\citep{Gowal2018} and the one by~\citet{Wong2018}.
In line with previous bounding algorithms \citep{BunelDP20}, we employ Adam updates~\citep{Kingma2015} for supergradient-type methods due to their faster empirical convergence.
While dual iterative algorithms are specifically designed to take advantage of GPU acceleration (see $\S$\ref{sec:parallel-masked}), we additionally provide a CPU implementation of our solvers in order to complement the comparison with Gurobi-based methods.

\vspace{7pt}
Unless otherwise stated, experiments were run under the following setup: Ubuntu 16.04.2 LTS, on a single Nvidia Titan Xp GPU, except those based on Gurobi and the CPU version of our solvers. 
The latter were run on i7-6850K CPUs, utilising $4$ cores for the incomplete verification experiments, and $6$ cores for the more demanding complete verification setting.

\subsection{Incomplete Verification} \label{sec:incomp_verif}
We evaluate the efficacy of our bounding algorithms in an incomplete verification setting by upper bounding the vulnerability to adversarial perturbations~\citep{Szegedy2014}, measured as the difference between the logits associated to incorrect classes and the one corresponding to the ground truth, on the CIFAR-10 test set~\citep{CIFAR10}. 
If the upper bound is negative, we can certify the network's robustness to adversarial perturbations. 

\subsubsection{Speed-Accuracy Trade-Offs} \label{sec:exp-inc-sgd8}
Here, we replicate the experimental setting from~\citet{BunelDP20}. The networks correspond to the small network architecture from~\citet{Wong2018}, and to the ``Wide" architecture, also employed for complete verification experiments in $\S$\ref{sec:exp-comp-sub}, found in Table \ref{tab:problem_size}.
Due to the additional computational cost of bounds obtained via the tighter relaxation \eqref{eq:primal-anderson}, we restricted the experiments to the first $2567$ CIFAR-10 test set images for the experiments on the SGD-trained network (Figures \ref{fig:sgd8}, \ref{fig:sgd8-pointwise}), and to the first $4129$ images for the network trained via the method by \citet{Madry2018} (Figures \ref{fig:madry8}, \ref{fig:madry8-pointwise}).

Here, we present results for a network trained via standard SGD and cross entropy loss, with no modification to the objective for robustness. Perturbations for this network lie in a $\ell_{\infty}$ norm ball with radius $\epsilon_{ver} = 1.125/255$ (which is hence lower than commonly employed radii for robustly trained networks).
In appendix \ref{sec:exp-appendix}, we provide additional CIFAR-10 results on an adversarially trained network using the method by~\citet{Madry2018}, and on MNIST~\citep{LeCun1998}, for a network trained with the verified training algorithm by~\citet{Wong2018}. 

Solver hyper-parameters were tuned on a small subset of the CIFAR-10 test set.
BDD+ is run with the hyper-parameters found by \citet{BunelDP20} on the same data sets, for both incomplete and complete verification.
For all supergradient-based methods (Big-M, Active Set), we employed the Adam update rule~\citep{Kingma2015}, which showed stronger empirical convergence. 
For Big-M, replicating the findings by \citet{BunelDP20} on their supergradient method, we linearly decrease the step size from $10^{-2}$ to $10^{-4}$. 
Active Set is initialized with $500$ Big-M iterations, after which the step size is reset and linearly scaled from $10^{-3}$ to $10^{-6}$. 
We found the addition of variables to the active set to be effective before convergence: we add variables every $450$ iterations, without re-scaling the step size again. Every addition consists of $2$ new variables (see algorithm \ref{alg:active-set}) and we cap the maximum number of cuts to $7$. This was found to be a good compromise between fast bound improvement and computational cost. 
For Saddle Point, we use $1/(t + 10)$ as step size to stay closer to the initialization points. The primal initialization algorithm (see appendix \ref{sec:sp-fw-primalinit}) is run for $100$ steps on the Big-M variables, with step size linearly decreased from $10^{-2}$ to $10^{-5}$. 

\begin{figure*}[t!]
	\vspace{-10pt}
	\centering
	\begin{subfigure}{\textwidth}
		\includegraphics[width=1\textwidth]{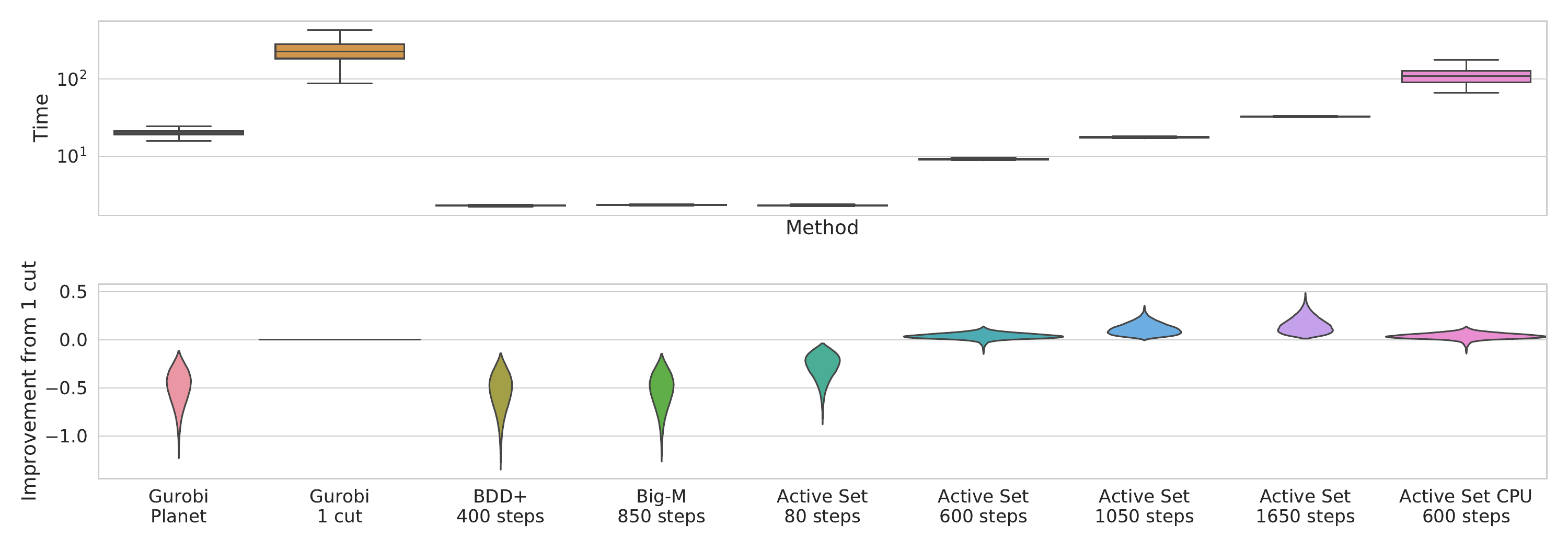}
		\vspace{-20pt}
		\subcaption{Speed-accuracy trade-offs of Active Set for different iteration ranges and computing devices.}
		\vspace{5pt}
	\end{subfigure}
	\begin{subfigure}{\textwidth}
		\includegraphics[width=1\textwidth]{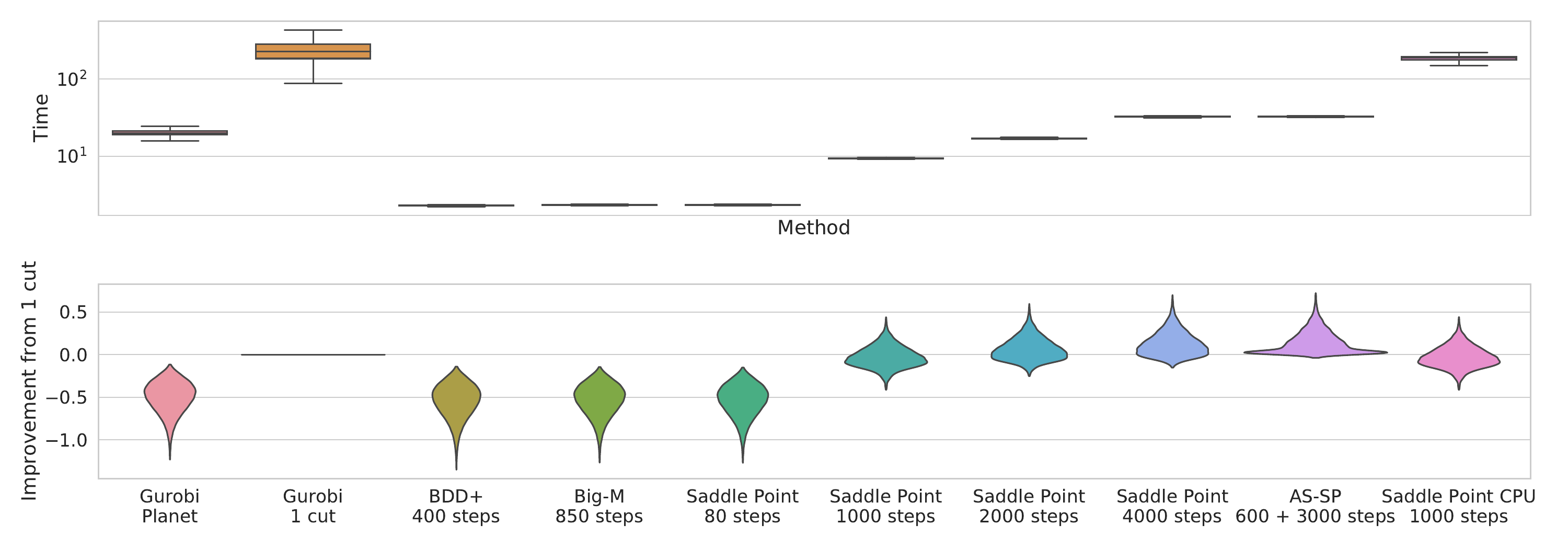}
		\vspace{-20pt}
		\subcaption{Speed-accuracy trade-offs of Saddle Point for different iteration ranges and computing devices.}
	\end{subfigure}
	
	\caption{\label{fig:sgd8} 
		Upper bounds to the adversarial vulnerability for the SGD-trained network from \citet{BunelDP20}. 
		Box plots: distribution of runtime in seconds.
		Violin plots: difference with the bounds obtained by Gurobi with a cut from $\mathcal{A}_k$ per neuron; higher is better, the width at a given value represents the
		proportion of problems for which this is the result. 
		On average, both Active Set and Saddle Point achieve bounds tighter than Gurobi 1 cut with a smaller runtime.
	}
	\vspace{-10pt}
\end{figure*}

\begin{figure*}[b!]
	\vspace{-8pt}
	\begin{subfigure}{.99\textwidth}
		\begin{minipage}{.48\textwidth}
			\centering
			\includegraphics[width=\textwidth]{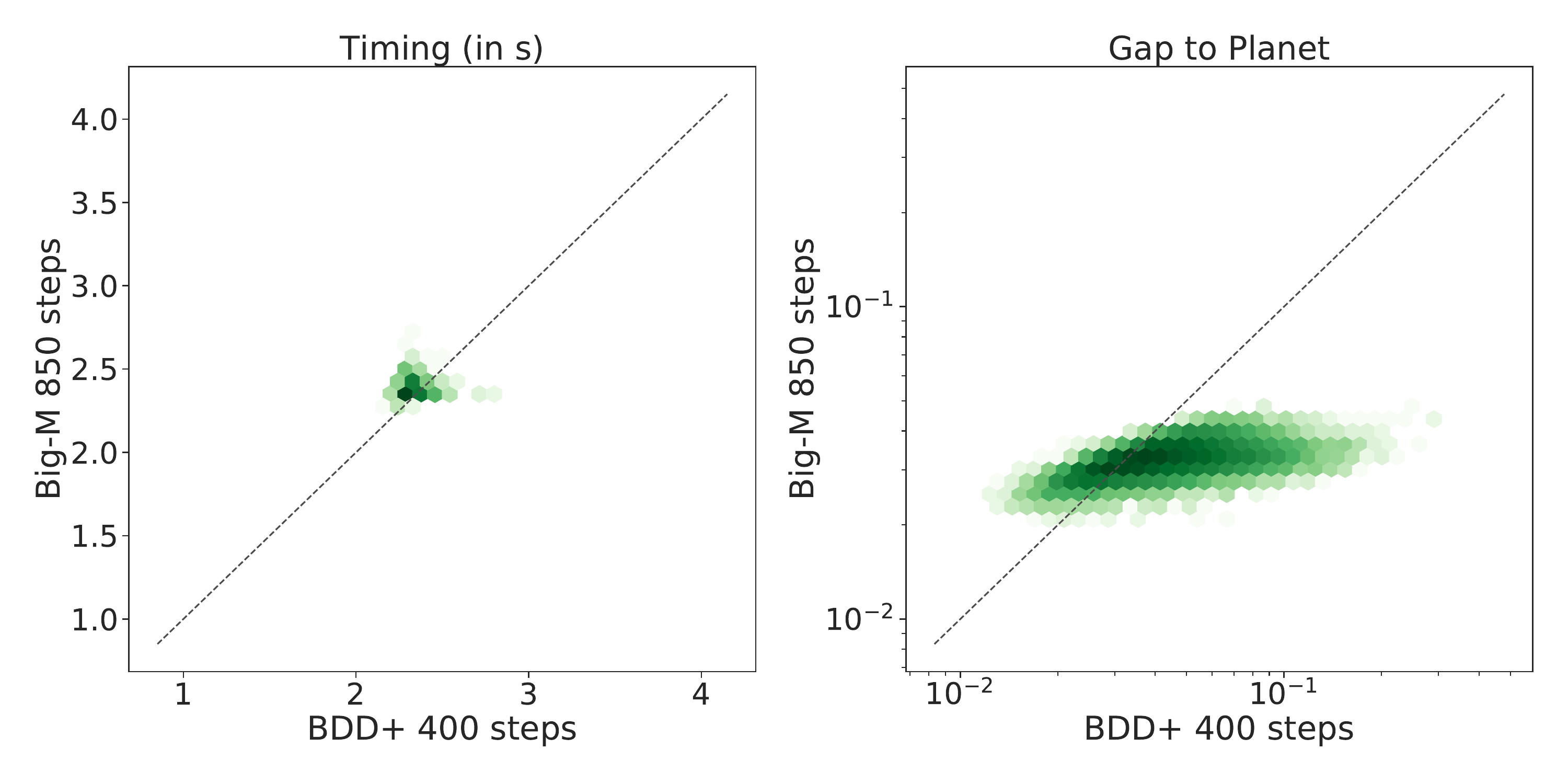}
			\vspace{-20pt}
		\end{minipage}
		\hspace{5pt}
		\begin{minipage}{.4\textwidth}
			\subcaption{ \label{fig:sgd-planet-gap} \small Comparison of runtime (left) and gap to Gurobi Planet bounds (right). For the latter, lower is better.}
		\end{minipage}
		\vspace{8pt}
	\end{subfigure}
	\begin{subfigure}{.99\textwidth}
		\begin{minipage}{.48\textwidth}
			\centering
			\includegraphics[width=\textwidth]{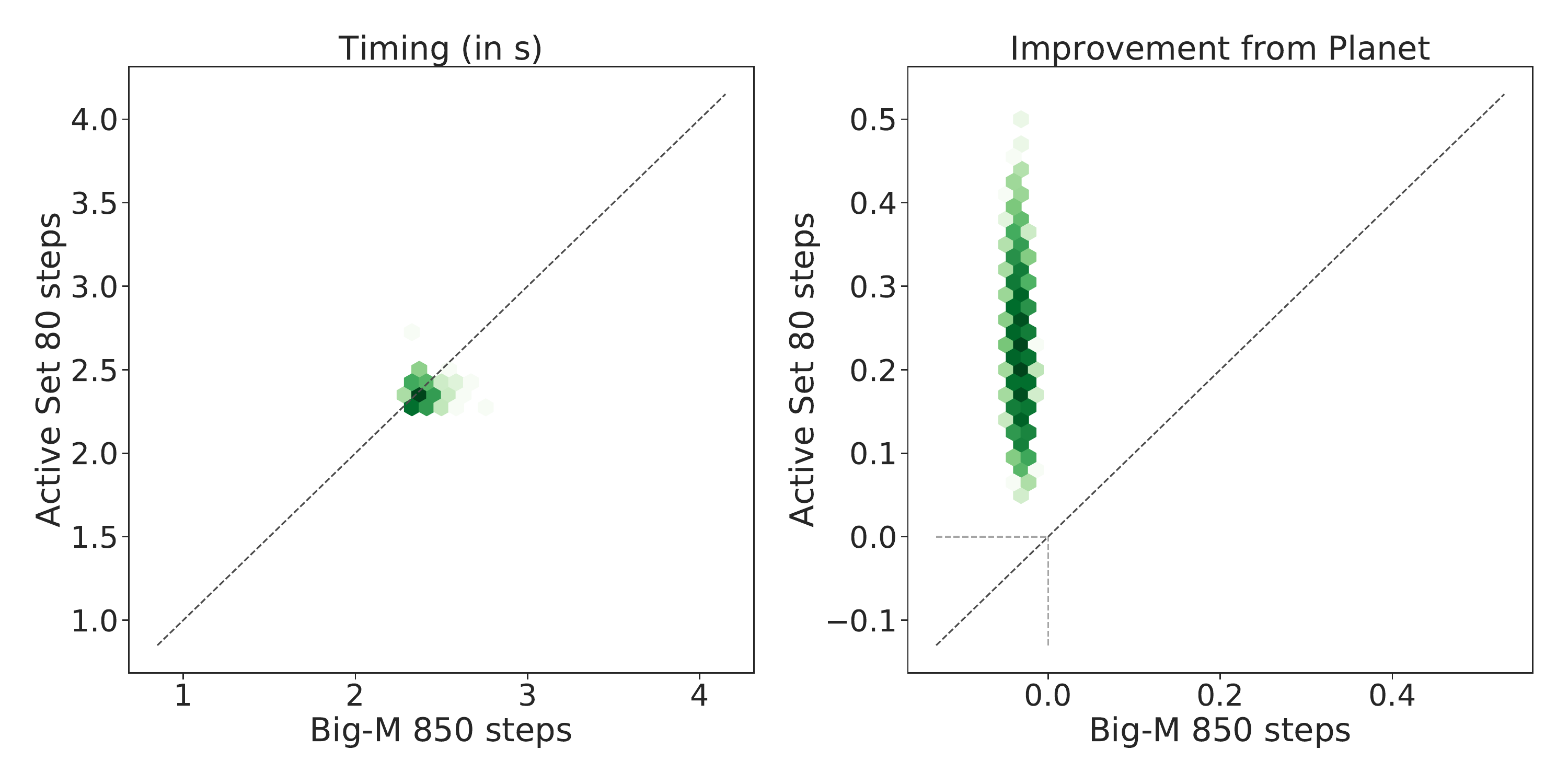}
		\end{minipage}
		\hspace{5pt}
		\begin{minipage}{.48\textwidth}
			\centering
			\includegraphics[width=\textwidth]{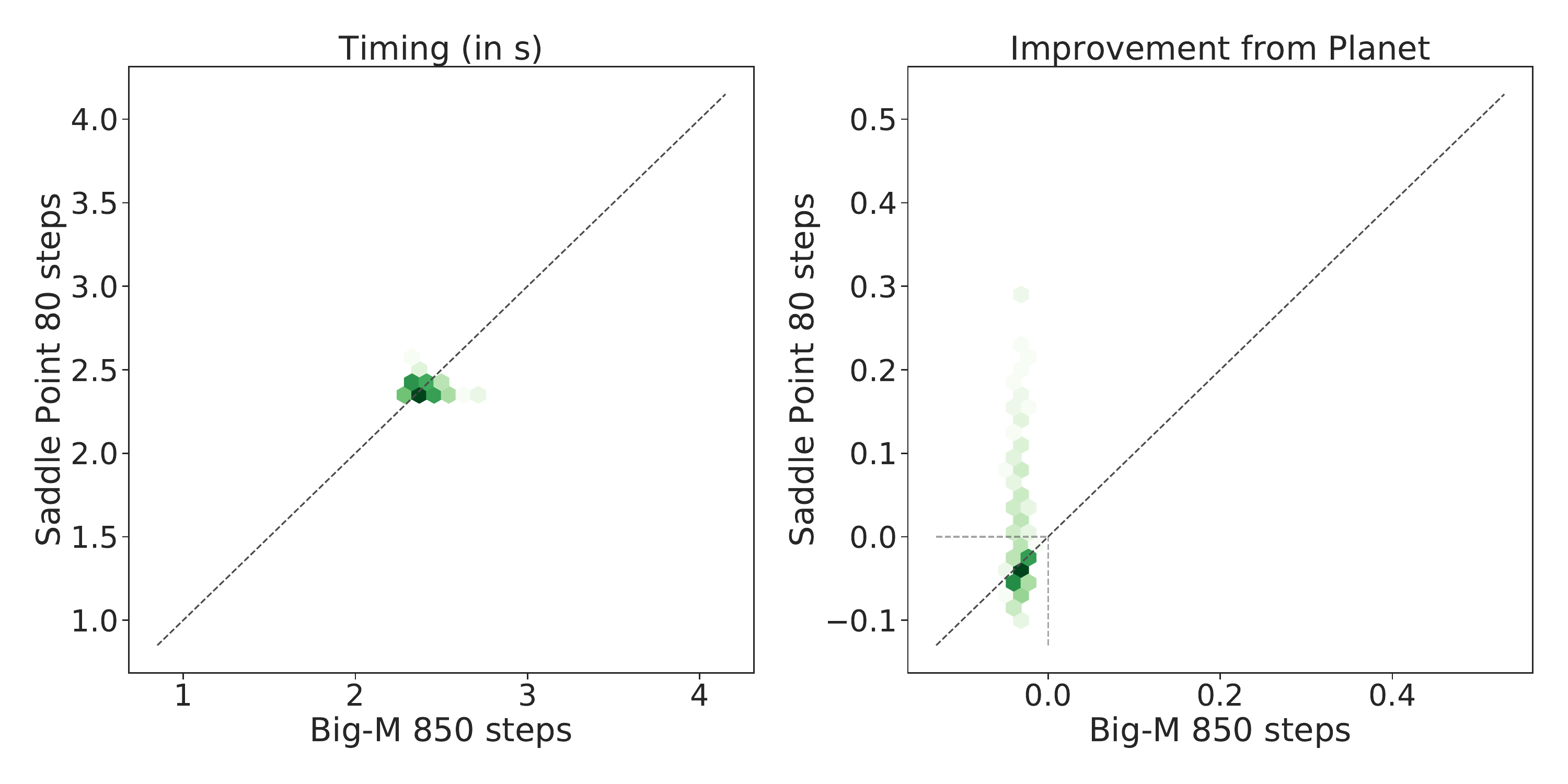}
		\end{minipage}
		\vspace{-8pt}
		\subcaption{Comparison of runtime (Timing) and difference with the Gurobi Planet bounds (Improvement from Planet). For the latter, higher is better.}
		\label{fig:sgd-planet-improvement}
	\end{subfigure}
	\vspace{-5pt}
	\caption{Pointwise comparison for a subset of the methods on the data presented in Figure \ref{fig:sgd8}. Darker colour shades mean higher point density (on a logarithmic scale). The oblique dotted line corresponds to the equality.}
	\label{fig:sgd8-pointwise}
\end{figure*}

\vspace{7pt}

Figure \ref{fig:sgd8} shows the distribution of runtime and the bound improvement with respect to Gurobi cut for the SGD-trained network.
For Gurobi cut, we only add the single most violated cut from $\mathcal{A}_k$ per neuron, due to the cost of repeatedly solving the LP.
We tuned BDD+ and Big-M, the dual methods operating on the weaker relaxation \eqref{eq:primal-bigm}, to have the same average runtime. They obtain bounds comparable to Gurobi Planet in one order less time. 
Initialized from $500$ Big-M iterations, at $600$ iterations, Active Set already achieves better bounds on average than Gurobi cut in around $1/20^{{th}}$ of the time. With a computational budget twice as large ($1050$ iterations) or four times as large ($1650$ iterations), the bounds significantly improve over Gurobi cut in a fraction of the time.
Similar observations hold for Saddle Point which, especially when using fewer iterations, also exhibits a larger variance in terms of bounds tightness.
In appendix \ref{sec:exp-appendix}, we empirically demonstrate that the tightness of the Active Set bounds is strongly linked to our active set strategy (see $\S$\ref{sec:active-set-descr}).
Remarkably, even if our solvers are specifically designed to take advantage of GPU acceleration, executing them on CPU proves to be strongly competitive with Gurobi cut. Active Set produces better bounds in less time for the benchmark of Figure \ref{fig:sgd8}, while Saddle Point yields comparable speed-accuracy trade-offs. \\ 

Figure \ref{fig:sgd8-pointwise} shows pointwise comparisons for the less expensive methods from Figure~\ref{fig:sgd8}, on the same data. 
Figure \ref{fig:sgd-planet-gap} shows the gap to the (Gurobi) Planet bound for BDD+ and our Big-M solver. Surprisingly, our Big-M solver is competitive with BDD+, achieving on average better bounds than BDD+, in the same time. 
Figure \ref{fig:sgd-planet-improvement} shows the improvement over Planet bounds for Big-M compared to those of few ($80$) Active Set and Saddle Point iterations. Active Set returns markedly better bounds than Big-M in the same time, demonstrating the benefit of operating (at least partly) on the tighter dual \eqref{eq:dual-anderson}. On the other hand, Saddle Point is rarely beneficial with respect to Big-M when running it for a~few~iterations. \\

\begin{figure*}[t!]
	\centering
	\begin{minipage}{.48\textwidth}
		\centering
		\includegraphics[width=\textwidth]{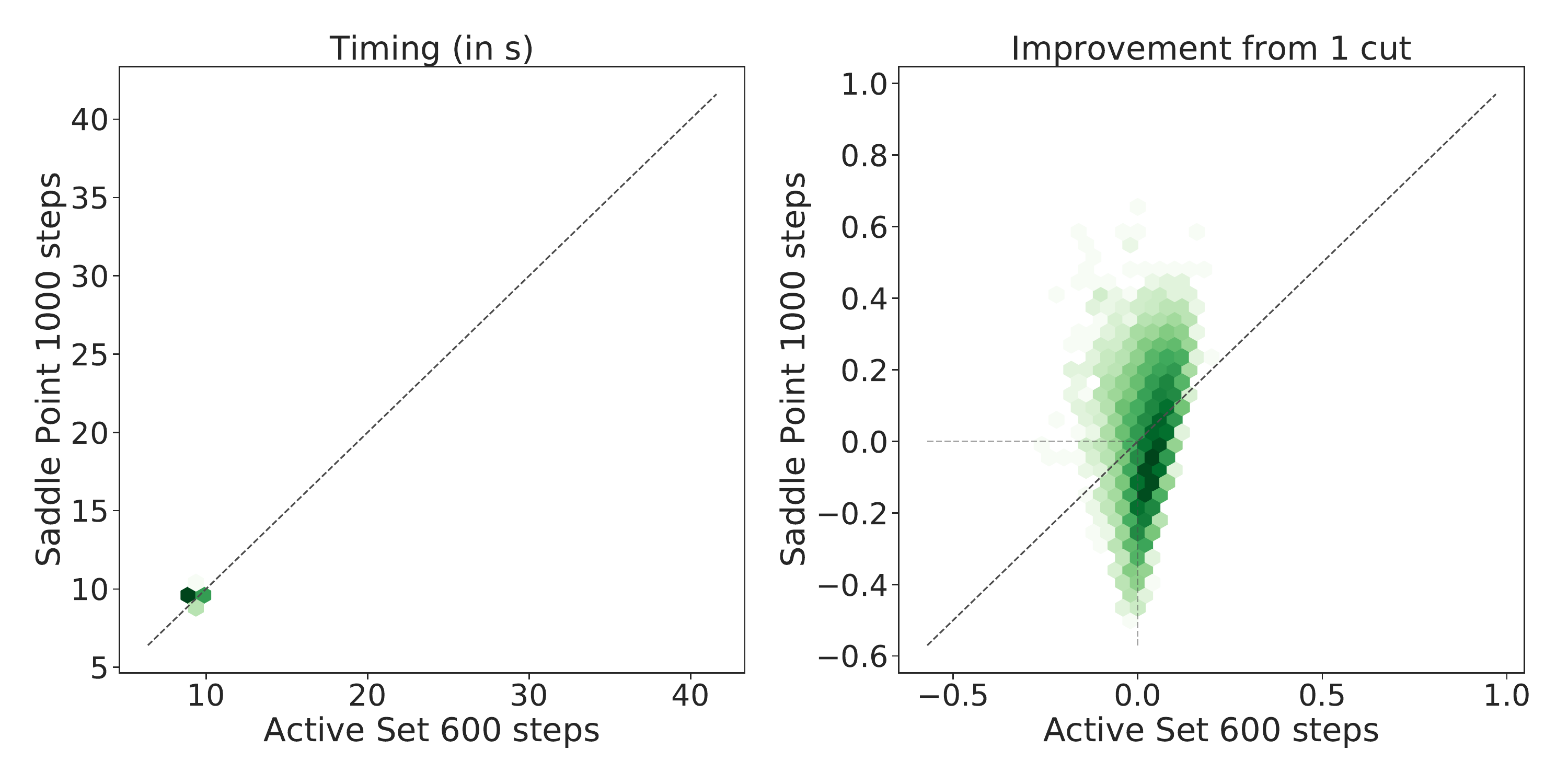}
	\end{minipage}
	\hspace{5pt}
	\begin{minipage}{.48\textwidth}
		\centering
		\includegraphics[width=\textwidth]{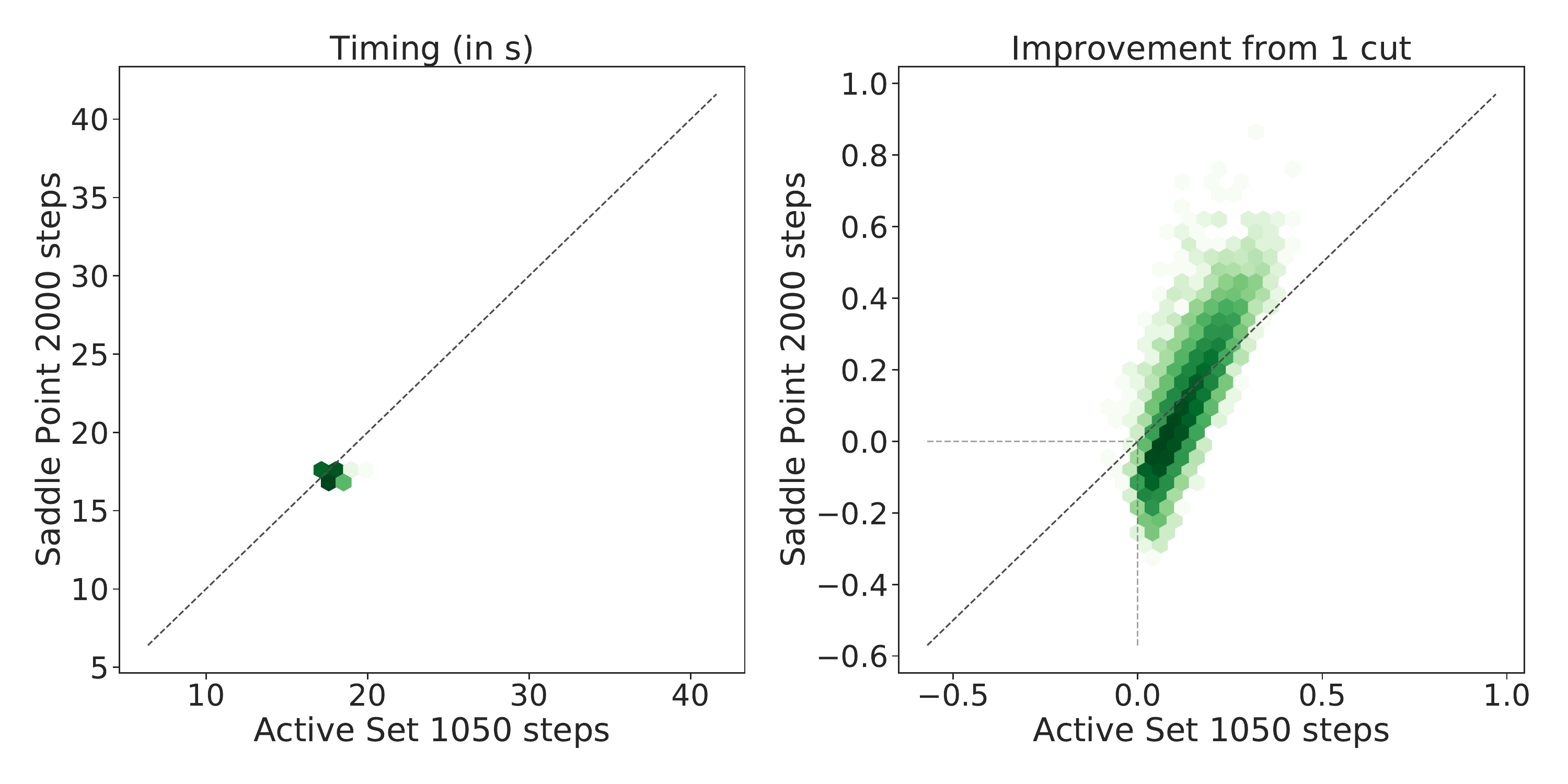}
	\end{minipage}
	\vspace{5pt}
	\begin{minipage}{.48\textwidth}
		\centering
		\includegraphics[width=\textwidth]{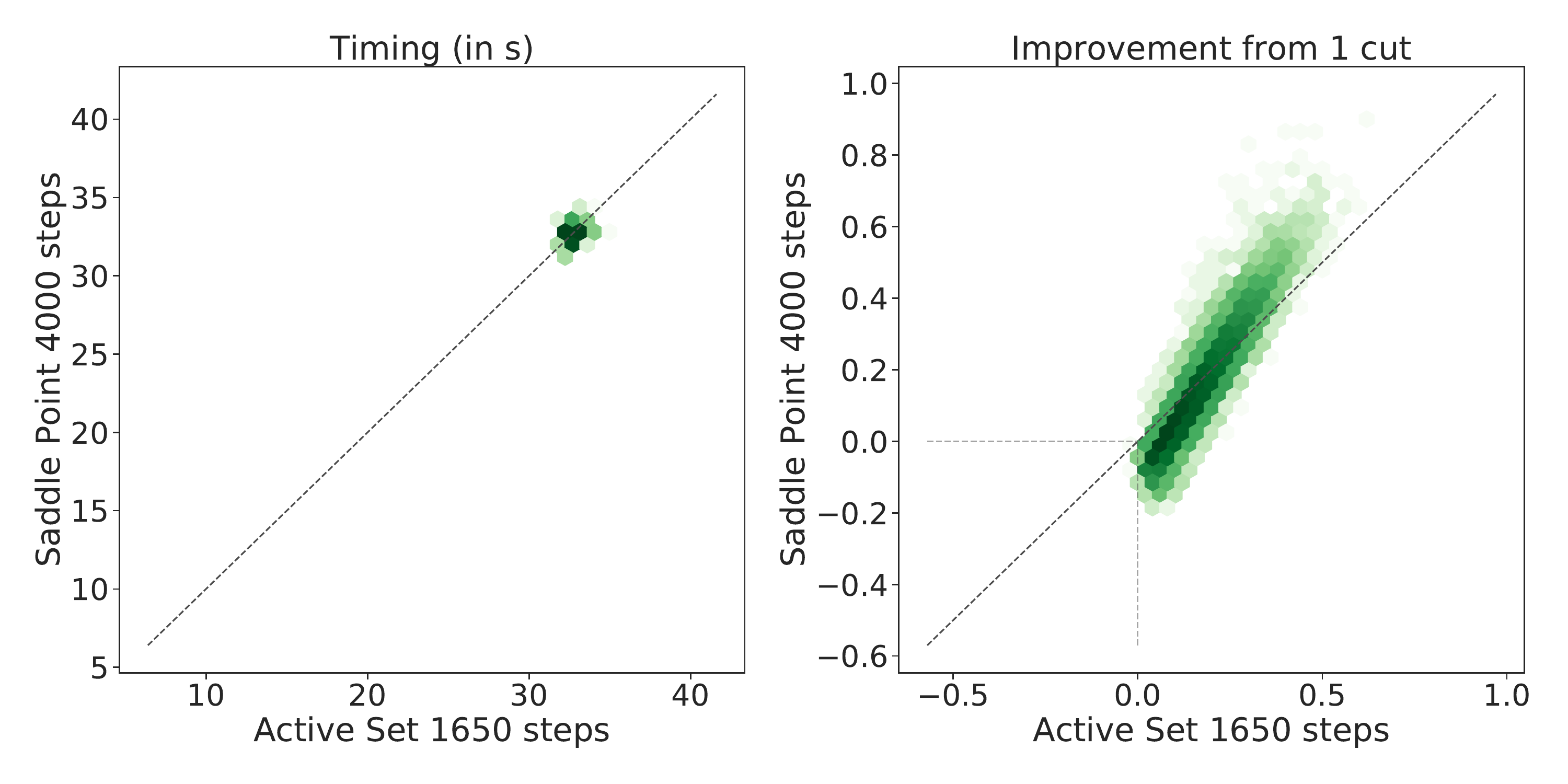}
	\end{minipage}
	\hspace{5pt}
	\begin{minipage}{.48\textwidth}
		\centering
		\includegraphics[width=\textwidth]{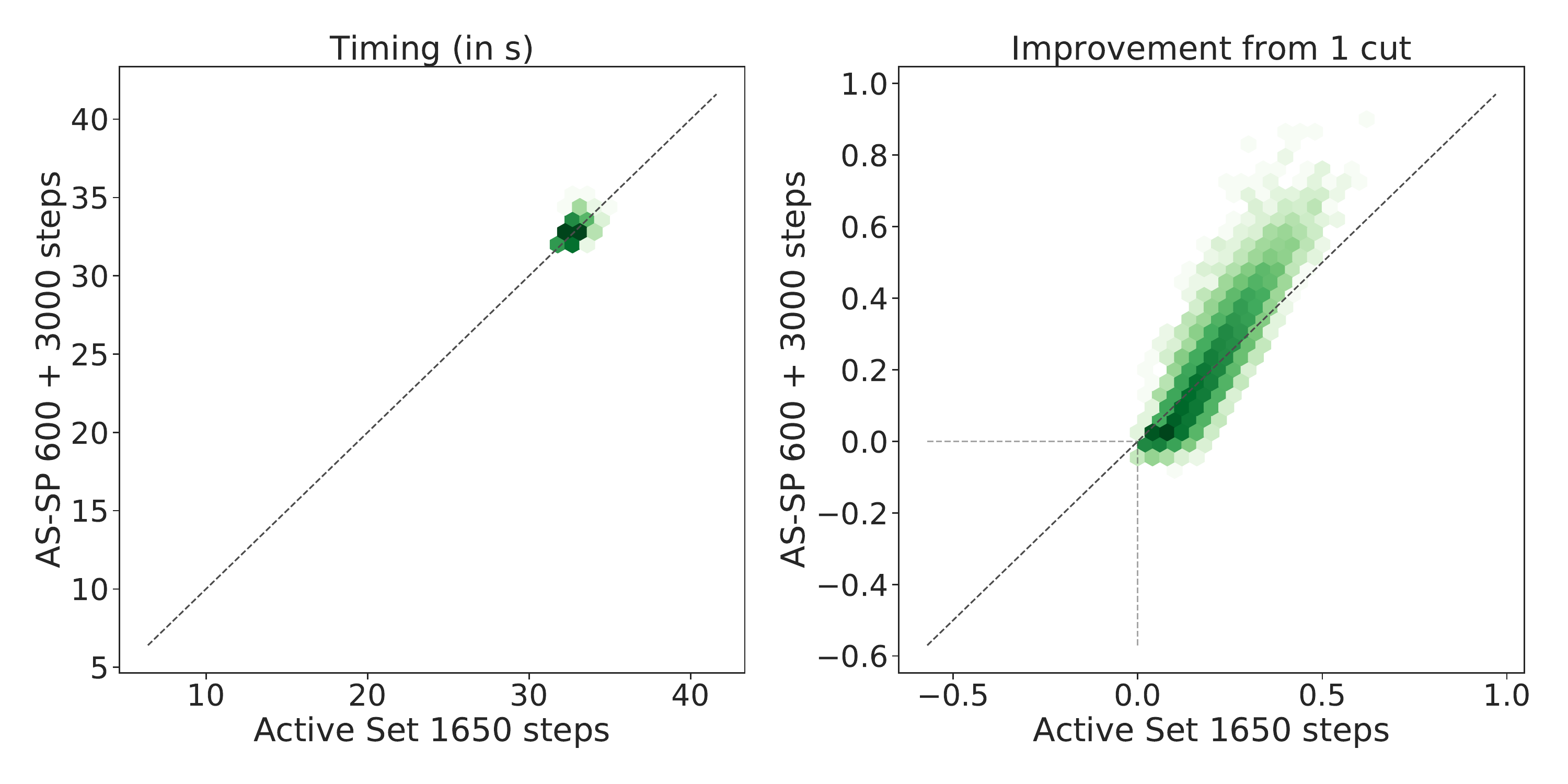}
	\end{minipage}
	\vspace{-15pt}
	\caption{Pointwise comparison between our proposed solvers on the data presented in Figure \ref{fig:sgd8}. Darker colour shades mean higher point density (on a logarithmic scale). The oblique dotted line corresponds to the equality. \label{fig:sgd-as-vs-spfw}}
\end{figure*}

Figure \ref{fig:sgd-as-vs-spfw} compares the performance of Active Set and Saddle Point for different runtimes, once again on the data from Figure~\ref{fig:sgd8}. While Active Set yields better average bounds when fewer iterations are employed, the performance gap shrinks with increasing computational budgets, and AS-SP (Active Set initialization to Saddle Point) yields tighter average bounds than Active Set in the same time. 
Differently from Active Set, the memory footprint of Saddle Point does not increase with the number of iterations (see $\S$\ref{sec:sp}). Therefore, we believe the Frank-Wolfe-based algorithm is particularly convenient in incomplete verification settings that require tight bounds.

\subsubsection{Memory Efficiency} \label{sec:exp-memory}

As stated in $\S$\ref{sec:sp}, one of the main advantages of Saddle Point is its memory efficiency. In fact, differently from Active Set, the attainable bounding tightness will not depend on the available memory. 
In order to illustrate this, we present results on a large fully connected network with two hidden layers of width $7000$. We trained the network adversarially~\citep{Madry2018} against perturbations of size $\epsilon_{train}=2/255$, which is the same radius that we employ at verification time. 
On the Nvidia Titan Xp GPU employed for our experiments, Active Set is only able to include a single constraint from $\mathcal{A}_{\mathcal{E}, k}$ per neuron without running out of memory. Except the maximum allowed number of cuts for Active Set, we run all algorithms with the same hyper-parameters as in $\S$\ref{sec:exp-inc-sgd8}. We conduct the experiment on the first $500$ images of the CIFAR-10 test set.
Figure~\ref{fig:adv_fc} shows that, while Active Set is competitive with Saddle Point when both are run for a few iterations, Saddle Point yields significantly better speed-accuracy trade-offs when both algorithms are run for longer. Indeed, the use of a single tightening constraint per neuron severely limits the tightness attainable by Active Set, which yields looser bounds than Gurobi cut on average. This is dissimilar from Figure~\ref{fig:sgd8}, where Active Set rapidly overcomes Gurobi cut in terms of bounding tightness.
On the other hand, Saddle Point returns bounds that are markedly tighter than those from the primal baselines in a fraction of their runtime, highlighting its benefits~in~memory-intensive~settings.

\begin{figure*}[t!]
	\renewcommand{\figurename}{Figure}
	\vspace{-10pt}
	\centering
	\includegraphics[width=1\textwidth]{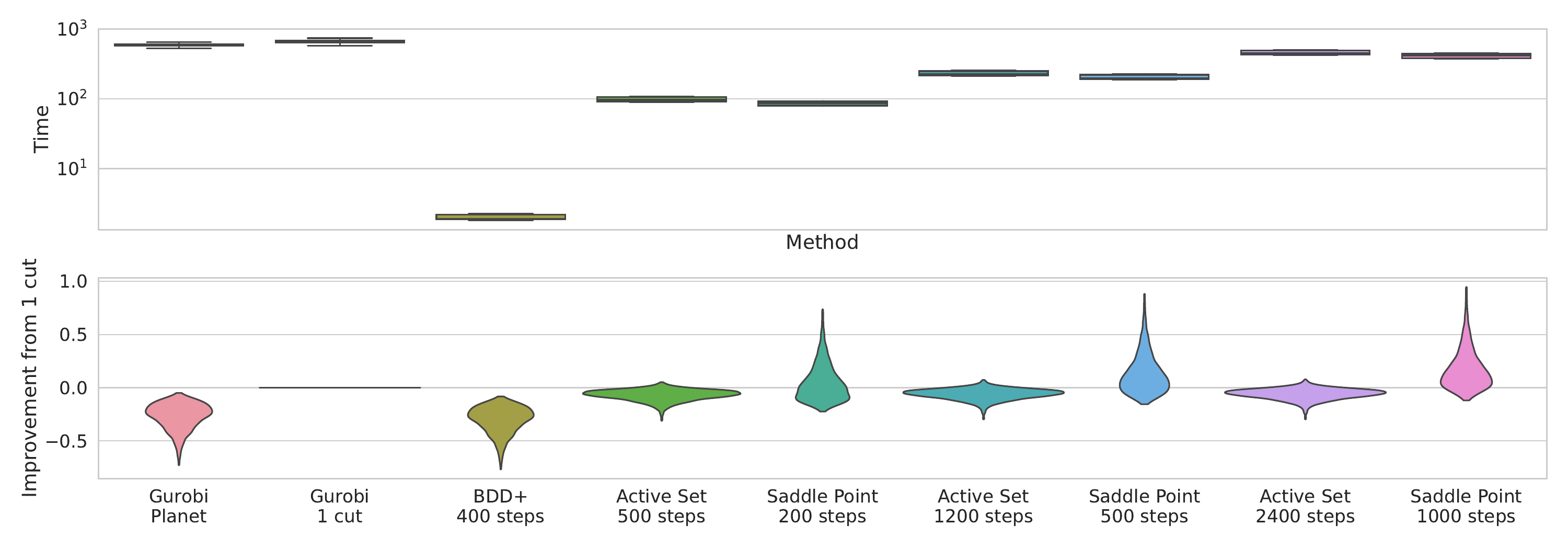}
	\vspace{-20pt}
	\caption{\label{fig:adv_fc} 
	Upper bounds to the adversarial vulnerability of a fully connected network with two hidden layers of width $7000$. 
	Box plots: distribution of runtime in seconds.
	Violin plots: difference with the bounds obtained by Gurobi with a cut from $\mathcal{A}_k$ per neuron; higher is better. 
	When run for enough iterations, Saddle Point achieves bounds tighter than both Gurobi 1 cut and Active Set, whose tightness is constrained by memory, in less time. 
	}
	\vspace{-10pt}
\end{figure*}

\subsection{Branch and Bound} \label{sec:comp_verif}

We now assess the effectiveness of our algorithms within branch and bound (see $\S$\ref{sec:stratified}). In particular, we will employ them within BaBSR~\citep{Bunel2020}. 
In BaBSR, branching is carried out by splitting an unfixed ReLU into its passing and blocking phases (see $\S$\ref{sec:stratified} for a description of branch and bound). 
In order to determine which ReLU to split on, BaBSR employs an inexpensive heuristic based on the bounding algorithm by \citet{Wong2018}. The goal of the heuristic is to assess which ReLU induces maximum change in the domain's lower bound when made unambiguous.

\subsubsection{Complete Verification} \label{sec:exp-comp-sub}

We evaluate the performance on complete verification by verifying the adversarial robustness of a network to perturbations in $\ell_{\infty}$ norm on a subset of the data set by~\citet{Lu2020Neural}. We replicate the experimental setting from~\citet{BunelDP20}.  \\

\citet{Lu2020Neural} provide, for a subset of the CIFAR-10 test set, a verification radius $\epsilon_{ver}$ defining the small region over which to look for adversarial examples (input points for which the output of the network is not the correct class)  and a (randomly sampled) non-correct class to verify against.  
The verification problem is formulated as the search for an adversarial example, carried out by minimizing the difference between the ground truth logit and the target logit. If the minimum is positive, we have not succeeded in finding a counter-example, and the network is robust.
The $\epsilon_{ver}$ radius was tuned to meet a certain ``problem difficulty" via binary search, employing a Gurobi-based bounding algorithm~\citep{Lu2020Neural}. 
In particular, \citet{Lu2020Neural} chose perturbation radii to rule out properties for which an adversarial example can be rapidly found, and properties for which Gurobi Planet would be able to prove robustness without any branching.
This characteristic makes the data set an appropriate testing ground for tighter relaxations like the one by~\citet{Anderson2020} ($\S$\ref{sec:relax-anderson}).
The networks are robust on all the properties we employed.
Three different network architectures of different sizes are used, all robustly trained for $\epsilon_{train}=2/255$ with the algorithm by \citet{Wong2018}. A ``Base'' network with $3172$ ReLU activations, and two networks with roughly twice as many activations: one ``Deep'', the other ``Wide''. Details can be found in Table \ref{tab:problem_size}. We restricted the original data set to $100$ properties per network so as to mirror the setup of the recent VNN-COMP competition~\citep{vnn-comp}.
The properties have an average perturbation radius of $\epsilon_{ver} = \nicefrac{10.1}{255}$, $\epsilon_{ver} = \nicefrac{6.9}{255}$,  $\epsilon_{ver} = \nicefrac{7.1}{255}$ for the Base, Wide, and Deep networks, respectively. \\

\begin{table}[t!]
	\centering
	\small
	\begin{minipage}{.9\textwidth}
		\begin{tabular}{|c|c|c|}
			\hline
			\textbf{Network Name}\TBstrut & \textbf{No. of Properties} \TBstrut & \textbf{Network Architecture} \TBstrut\\
			\hline
			\begin{tabular}{l}
				BASE \\ 
				Model
			\end{tabular} & \begin{tabular}{@{}c@{}}
				100
			\end{tabular} & \begin{tabular}{@{}c@{}} 
				\footnotesize
				Conv2d(3,8,4, stride=2, padding=1) \Tstrut \\
				\footnotesize Conv2d(8,16,4, stride=2, padding=1)\\
				\footnotesize linear layer of 100 hidden units \\
				\footnotesize linear layer of 10 hidden units\\
				(Total ReLU activation units: 3172) \Bstrut
			\end{tabular}\\
			\hline
			WIDE & 100 & \begin{tabular}{@{}c@{}} 
				\footnotesize
				Conv2d(3,16,4, stride=2, padding=1) \Tstrut\\
				\footnotesize Conv2d(16,32,4, stride=2, padding=1)\\
				\footnotesize linear layer of 100 hidden units \\
				\footnotesize linear layer of 10 hidden units\\
				(Total ReLU activation units: 6244) \Bstrut
			\end{tabular}\\
			\hline
			DEEP & 100 & \begin{tabular}{@{}c@{}} 
				\footnotesize
				Conv2d(3,8,4, stride=2, padding=1) \Tstrut \\
				\footnotesize Conv2d(8,8,3, stride=1, padding=1) \\
				\footnotesize Conv2d(8,8,3, stride=1, padding=1) \\
				\footnotesize Conv2d(8,8,4, stride=2, padding=1)\\
				\footnotesize linear layer of 100 hidden units \\
				\footnotesize linear layer of 10 hidden units\\
				(Total ReLU activation units: 6756) \Bstrut
			\end{tabular}\\
			\hline
		\end{tabular}
	\end{minipage}
	\caption{\label{tab:problem_size} For each complete verification experiment, the network architecture used and the number of verification properties tested, a subset of the data set by \citet{Lu2020Neural}. Each layer but the last is followed by a ReLU activation function.}
\end{table}

We compare the effect on final verification time of using the different bounding methods in $\S$\ref{sec:incomp_verif} within BaBSR. 
When stratifying two bounding algorithms (see $\S$\ref{sec:stratified}) we denote the resulting method by the names of both the looser and the tighter bounding method, separated by a plus sign (for instance, \textbf{Big-M~+~Active Set}). 
In addition, we compare against the following complete verification~algorithms:
\begin{itemize}
	\item \textbf{MIP $\mathcal{A}_k$} encodes problem \eqref{eq:noncvx_pb} as a Big-M MIP~\citep{Tjeng2019} and solves it in Gurobi by adding cutting planes from $\mathcal{A}_k$. This mirrors the original experiments from~\citet{Anderson2020}. 
	\item \textbf{ERAN}~\citep{eran}, a state-of-the-art complete verification toolbox. Results on the data set by~\citet{Lu2020Neural} are taken from the recent VNN-COMP competition\footnote{These were executed by \citet{eran} on a 2.6 GHz Intel Xeon CPU E5-2690 with 512 GB of main memory, utilising $14$ cores.}~\citep{vnn-comp}.
	\item \textbf{Fast-and-Complete}~\citep{xu2021fast}: a recent complete verifier based on BaBSR that pairs fast dual bounds with Gurobi Planet to obtain state-of-the-art performance.
\end{itemize}
\sisetup{detect-weight=true,detect-inline-weight=math,detect-mode=true}
\begin{table*}[t!]
	\centering
	\scriptsize
	\setlength{\tabcolsep}{4pt}
	\aboverulesep = 0.1mm  
	\belowrulesep = 0.2mm  
	\renewcommand{\bfseries}{\fontseries{b}\selectfont}
	\newrobustcmd{\B}{\bfseries}
	\newcommand{\boldentry}[2]{
		\multicolumn{1}{S[table-format=#1,
			mode=text,
			text-rm=\fontseries{b}\selectfont
			]}{#2}}
	\begin{adjustbox}{max width=\textwidth, center}
		\begin{tabular}{
				l
				S[table-format=4.3]
				S[table-format=4.3]
				S[table-format=4.3]
				S[table-format=4.3]
				S[table-format=4.3]
				S[table-format=4.3]
				S[table-format=4.3]
				S[table-format=4.3]
				S[table-format=4.3]
			}
			& \multicolumn{3}{ c }{Base} & \multicolumn{3}{ c }{Wide} & \multicolumn{3}{ c }{Deep} \\
			\toprule
			
			\multicolumn{1}{ c }{Method} &
			\multicolumn{1}{ c }{time(s)} &
			\multicolumn{1}{ c }{sub-problems} &
			\multicolumn{1}{ c }{$\%$Timeout} &
			\multicolumn{1}{ c }{time(s)} &
			\multicolumn{1}{ c }{sub-problems} &
			\multicolumn{1}{ c }{$\%$Timeout} &
			\multicolumn{1}{ c }{time(s)} &
			\multicolumn{1}{ c }{sub-problems} &
			\multicolumn{1}{ c }{$\%$Timeout} \\
			
			\cmidrule(lr){1-1} \cmidrule(lr){2-4} \cmidrule(lr){5-7} \cmidrule(lr){8-10}
			
			\multicolumn{1}{ c }{\tiny \textsc{BDD+ BaBSR}}
			&883.55 &     82699.40 &    22.00 &   568.25 &     43751.88 &    13.00 &   281.47 &     10763.48 &     5.00 \\
			
			\multicolumn{1}{ c }{\tiny\textsc{Big-M BaBSR}}
			&826.60 &     68582.00 &    19.00 &   533.79 &     35877.24 &    12.00 &   253.37 &      9346.78 &     4.00 \\
			
			\multicolumn{1}{ c }{\tiny\textsc{A. Set 100 it. BaBSR}}
			&422.32 &     \B 9471.90 &  \B   7.00 &  169.73 &      \B 1873.36 &   \B  3.00 &   227.26 &      2302.16 &   2.00 \\
			
			\multicolumn{1}{ c }{\tiny\textsc{Big-M + A. Set 100 it. BaBSR}}
			&  \B  415.20 &     10449.10 &   \B    7.00 &  \B 163.02 &      2402.28 &   \B   3.00 &    199.70 &      2709.60 &    2.00 \\
			
			\multicolumn{1}{ c }{\tiny\textsc{G. Planet + G. 1 cut BaBSR}}
			&   949.06 &      1572.10 &    15.00 &   762.42 &       514.02 &     6.00 &   799.71 &       391.70 &     2.00 \\
			
			\multicolumn{1}{ c }{\tiny\textsc{MIP $\mathcal{A}_k$}}
			&3227.50 &       226.24 &    82.00 &  2500.70 &       100.93 &    64.00 &  3339.37 &       434.57 &    91.00 \\
			
			\multicolumn{1}{ c }{\tiny\textsc{ERAN}}			
			& 805.89 &   {-}  &     5.00 &   632.12 &   {-}   &     9.00 &   545.72 &  {-}  &    0.00 \\

			\multicolumn{1}{ c }{\tiny\textsc{Fast-and-Complete}}			
			&   711.63 &      9801.22 &    16.00 &   350.57 &      8699.74 &     8.00 &  \B  56.52 &      \B 2238.52 &    \B 1.00 \\

			\bottomrule
			
		\end{tabular}
	\end{adjustbox}
	\caption{\small We compare average solving time, average number of solved sub-problems and the percentage of timed out properties on data from \citet{Lu2020Neural}. The best dual iterative method is highlighted in bold.}
	\label{table:models}
\end{table*}

\begin{figure*}[b!]
	\centering
	\begin{subfigure}{.32\textwidth}
		\centering
		\includegraphics[width=\textwidth]{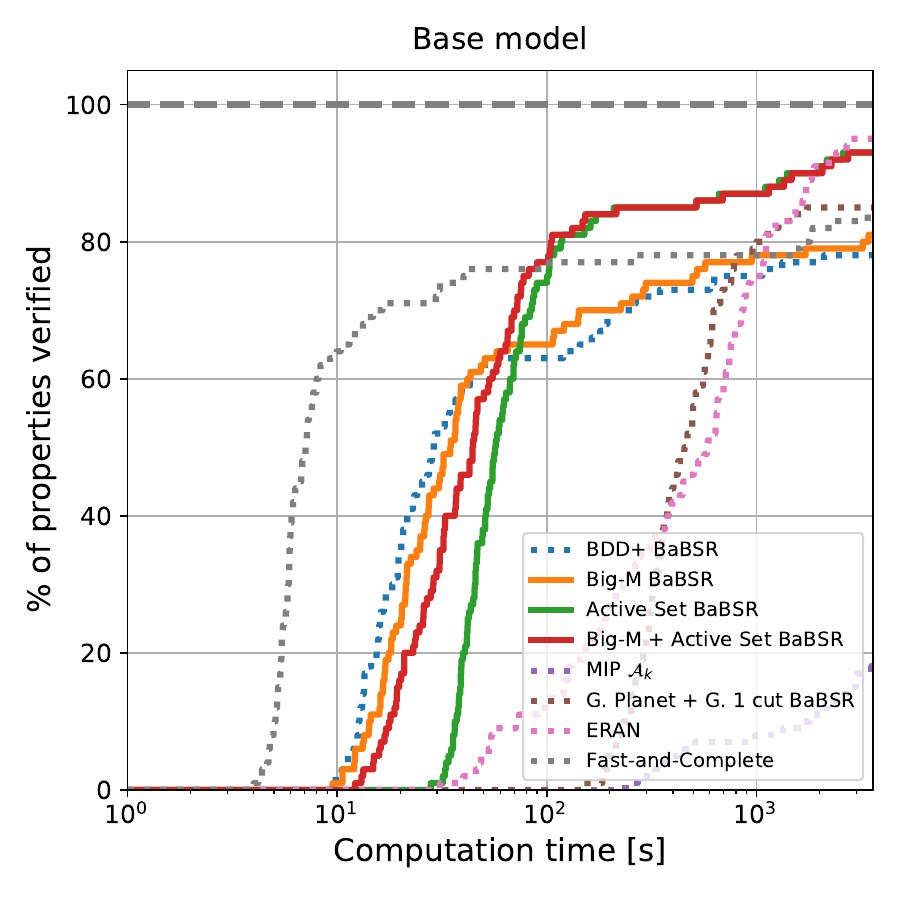}
	\end{subfigure}
	\begin{subfigure}{.32\textwidth}
		\centering
		\includegraphics[width=\textwidth]{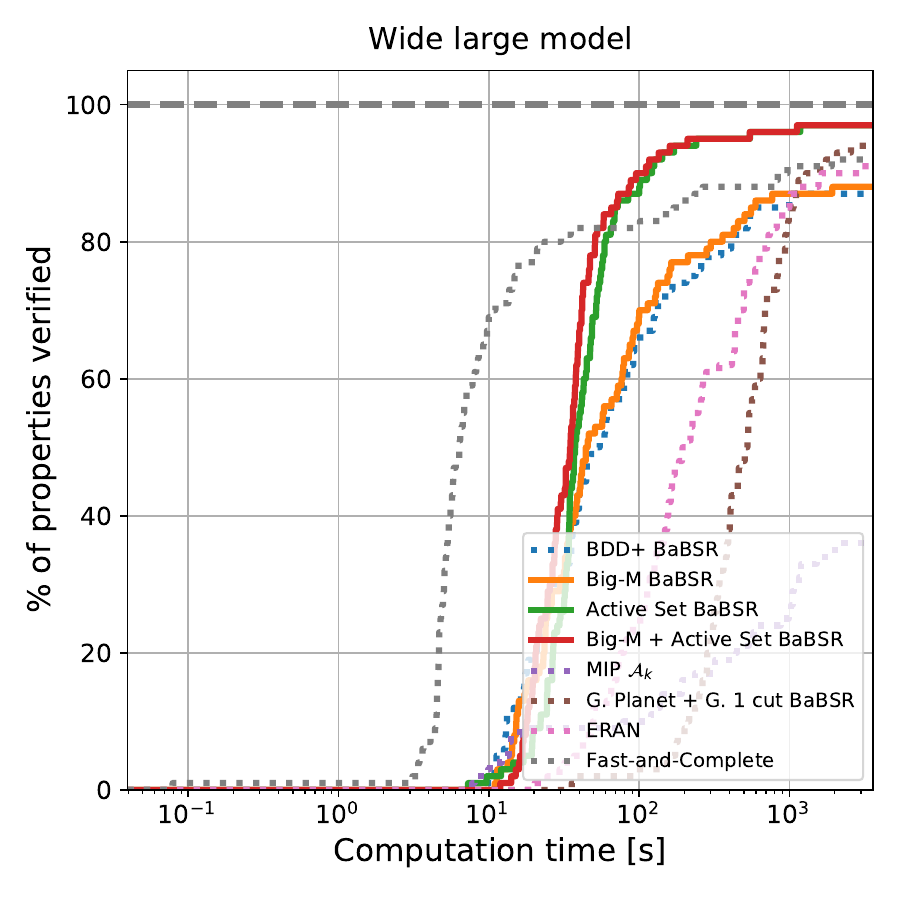}
	\end{subfigure}
	\begin{subfigure}{.32\textwidth}
		\centering
		\includegraphics[width=\textwidth]{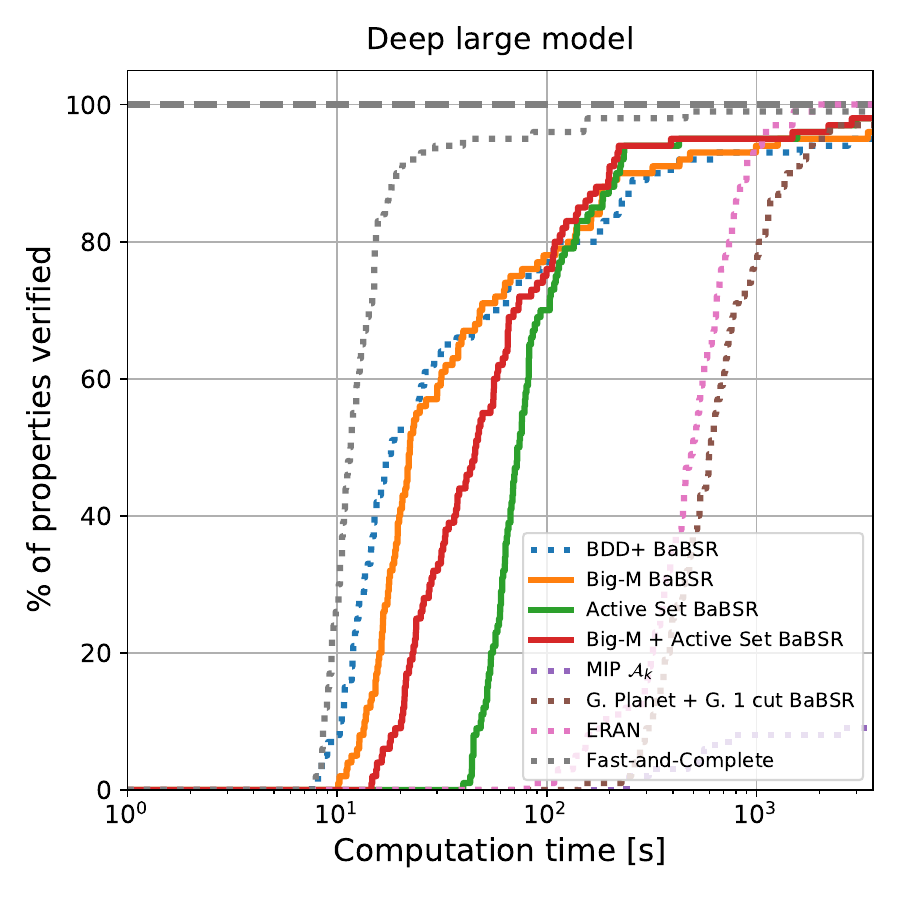}
	\end{subfigure}
	\vspace{-10pt}
	\caption{Cactus plots on properties from \citet{Lu2020Neural}, displaying the percentage of solved properties as a function of runtime. Baselines are represented by dotted lines.}
	\label{fig:verification-main}
\end{figure*} 

We use $100$ iterations for BDD+ (as done by \citet{BunelDP20}) and $180$ for Big-M, which was tuned to employ roughly the same time per bounding computation as BDD+.
We re-employed the same hyper-parameters for Big-M, Active Set and Saddle Point, except the number of iterations.
For dual iterative algorithms, we solve 300 subproblems at once for the base network and 200 for the deep and wide networks (see $\S$\ref{sec:parallel-masked}). Additionally, dual variables are initialized from their parent node's bounding computation.
As in \citet{BunelDP20}, the time-limit is kept at one hour. \\

Figure \ref{fig:sgd-planet-improvement} in the previous section shows that Active Set can yield a relatively large improvement over Gurobi Planet bounds, the convex barrier as defined by \citet{Salman2019}, in a few iterations. In light of this, we start our experimental evaluation by comparing the performance of $100$ iterations of Active Set (within BaBSR) with the relevant baselines.
Figure \ref{fig:verification-main} and Table \ref{table:models} show that Big-M performs competitively with BDD+. 
With respect to BDD+ and Big-M, which operate on the looser formulation \eqref{eq:primal-bigm}, Active Set verifies a larger share of properties and yields faster average verification. This demonstrates the benefit of tighter bounds ($\S$\ref{sec:incomp_verif}) in complete verification. On the other hand, the poor performance of MIP + $\mathcal{A}_k$ and of Gurobi Planet + Gurobi 1 cut, tied to scaling limitations of off-the-shelf solvers, shows that tighter bounds are effective only if they can be computed efficiently. Nevertheless, the difference in performance between the two Gurobi-based methods confirms that customized Branch and Bound solvers (BaBSR) are preferable to generic MIP solvers, as observed by \citet{Bunel2020} on the looser Planet relaxation.
Moreover, the stratified bounding system allows us to retain some of the speed of Big-M on easier properties, without sacrificing Active Set's gains on the harder ones.
While ERAN verifies $2\%$ more properties than Active Set on two networks, BaBSR (with any dual bounding algorithm) is faster on most of the properties. 
Fast-and-Complete performs particularly well on the easier properties and on the Deep model, where it is the fastest algorithm.
Nevertheless, it struggles on the harder properties, leading to larger average verification times and more timeouts than Active Set on the Base and Wide models. 
We believe that stratifying Active Set on the inexpensive dual bounds from Fast-and-Complete, which fall short of the convex barrier yet jointly tighten the intermediate bounds, would preserve the advantages of both methods.
BaBSR-based results could be further improved by employing the learned branching strategy presented by \citet{Lu2020Neural}: in this work, we focused on the bounding component of branch~and~bound. \\

\begin{figure*}[t!]
	\centering
	\begin{subfigure}{.32\textwidth}
		\centering
		\includegraphics[width=\textwidth]{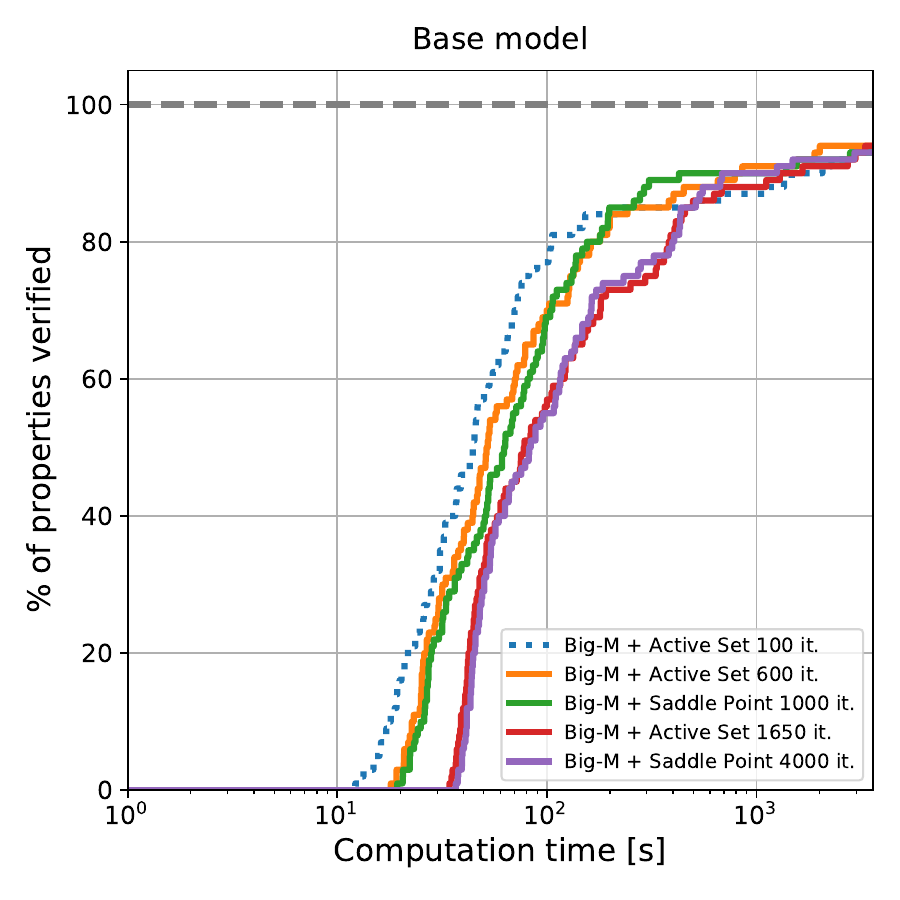}
	\end{subfigure}
	\begin{subfigure}{.32\textwidth}
		\centering
		\includegraphics[width=\textwidth]{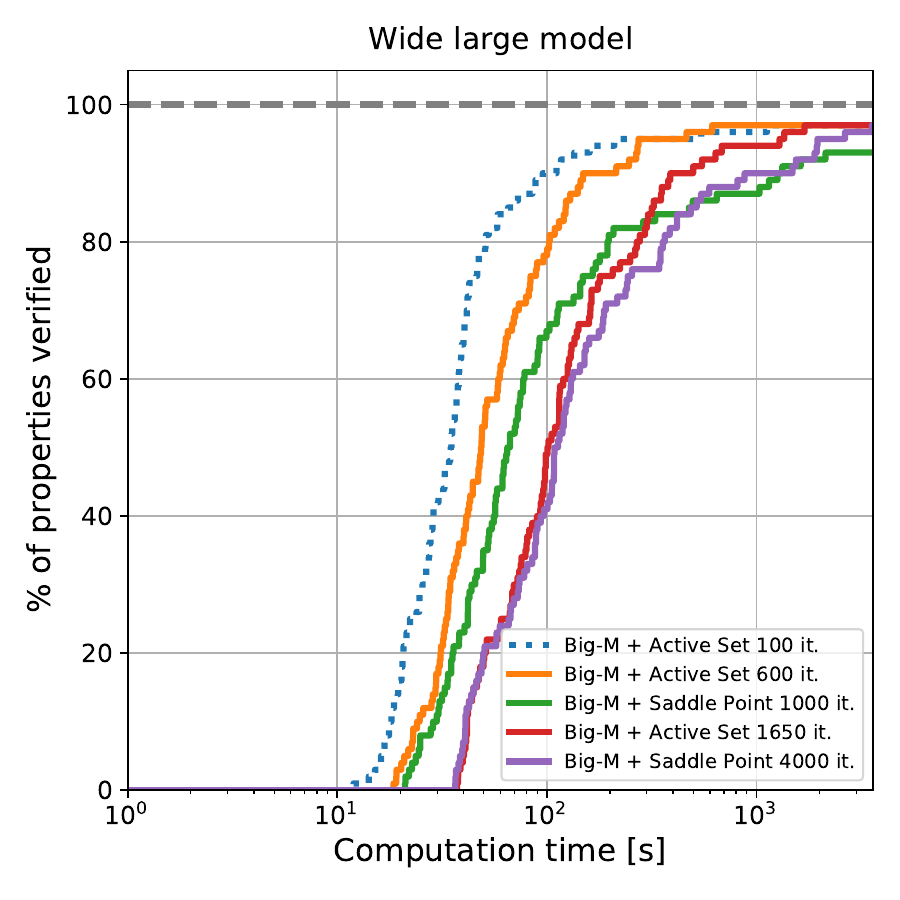}
	\end{subfigure}
	\begin{subfigure}{.32\textwidth}
		\centering
		\includegraphics[width=\textwidth]{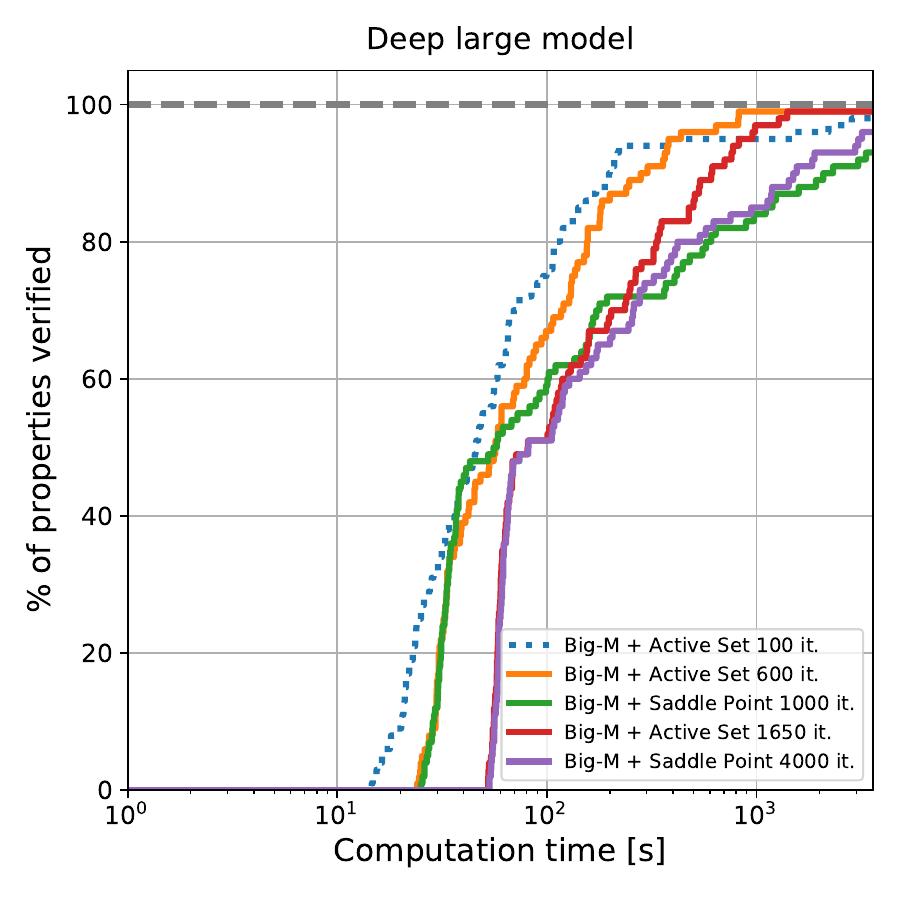}
	\end{subfigure}
	\vspace{-10pt}
	\caption{Cactus plots on properties from \citet{Lu2020Neural}, displaying the percentage of solved properties as a function of runtime. Comparison of best performing method from Figure \ref{fig:verification-main} with tighter bounding schemes.}
	\label{fig:verification-spfw}
\end{figure*} 
\sisetup{detect-weight=true,detect-inline-weight=math,detect-mode=true}
\begin{table*}[b!]
	\centering
	\scriptsize
	\setlength{\tabcolsep}{4pt}
	\aboverulesep = 0.1mm  
	\belowrulesep = 0.2mm  
	\renewcommand{\bfseries}{\fontseries{b}\selectfont}
	\newrobustcmd{\B}{\bfseries}
	\newcommand{\boldentry}[2]{
		\multicolumn{1}{S[table-format=#1,
			mode=text,
			text-rm=\fontseries{b}\selectfont
			]}{#2}}
	\begin{adjustbox}{max width=\textwidth, center}
		\begin{tabular}{
				l
				S[table-format=4.3]
				S[table-format=4.3]
				S[table-format=4.3]
				S[table-format=4.3]
				S[table-format=4.3]
				S[table-format=4.3]
				S[table-format=4.3]
				S[table-format=4.3]
				S[table-format=4.3]
			}
			& \multicolumn{3}{ c }{Base} & \multicolumn{3}{ c }{Wide} & \multicolumn{3}{ c }{Deep} \\
			\toprule
			
			\multicolumn{1}{ c }{Method} &
			\multicolumn{1}{ c }{time(s)} &
			\multicolumn{1}{ c }{sub-problems} &
			\multicolumn{1}{ c }{$\%$Timeout} &
			\multicolumn{1}{ c }{time(s)} &
			\multicolumn{1}{ c }{sub-problems} &
			\multicolumn{1}{ c }{$\%$Timeout} &
			\multicolumn{1}{ c }{time(s)} &
			\multicolumn{1}{ c }{sub-problems} &
			\multicolumn{1}{ c }{$\%$Timeout} \\
			
			\cmidrule(lr){1-1} \cmidrule(lr){2-4} \cmidrule(lr){5-7} \cmidrule(lr){8-10}
			
			\multicolumn{1}{ c }{\tiny \textsc{Big-M + Active Set 100 it.}}
			&   415.20 &     10449.10 &     7.00 & \B  163.02 &      2402.28 & \B     3.00 &   199.70 &      2709.60 &     2.00 \\
			
			\multicolumn{1}{ c }{\tiny\textsc{Big-M + Active Set 600 it.}}
			& \B   360.16 &     \B  4806.14 & \B    6.00 &   181.27 &   \B     1403.90 &   \B    3.00 &  \B  148.06 &   \B  1061.90 &   \B     1.00 \\
			
			\multicolumn{1}{ c }{\tiny\textsc{Big-M + Saddle Point 1000 it.}}
			&   382.15 &      5673.04 &     7.00 &   417.41 &      1900.90 &     7.00 &   540.68 &      2551.62 &     7.00 \\
			
			\multicolumn{1}{ c }{\tiny\textsc{Big-M + Active Set 1650 it.}}
			&   463.00 &      7484.54 &  \B   6.00 &   285.99 &    1634.98 &  \B   3.00 &   250.52 &     1119.00 &   \B  1.00 \\
			
			\multicolumn{1}{ c }{\tiny\textsc{Big-M + Saddle Point 4000 it.}}
			&   434.92 &      4859.68 &     7.00 &   402.86 &      1703.48 &  \B    3.00 &   482.93 &      1444.20 &     4.00 \\
			
			\bottomrule
			
		\end{tabular}
	\end{adjustbox}
	\caption{\small We compare average solving time, average number of solved sub-problems and the percentage of timed out properties on data from \citet{Lu2020Neural}. The best result is highlighted in bold. Comparison of best performing method from Figure \ref{fig:verification-main} with tighter bounding schemes within~BaBSR.} 
	\label{table:models-tighter}
\end{table*}

The results in Figure~\ref{fig:verification-main} demonstrate that a small yet inexpensive tightening of the Planet bounds yields large complete verification improvements. 
We now investigate the effect of employing even tighter, yet more costly, bounds from our solvers.
To this end, we compare $100$ iterations of Active Set with more expensive bounding schemes from our incomplete verification experiments ($\S$\ref{sec:incomp_verif}). All methods were stratified along Big-M to improve performance on easier properties (see $\S$\ref{sec:stratified}) and employed within BaBSR.
Figure \ref{fig:verification-spfw} and Table \ref{table:models-tighter} show results for two different bounding budgets: $600$ iterations of Active Set or $1000$ of Saddle Point, $1650$ iterations of Active Set or $4000$ of Saddle Point (see Figure \ref{fig:sgd-as-vs-spfw}).
Despite their ability to prune more subproblems, due to their large computational cost, tighter bounds do not necessarily correspond to shorter overall verification times. Running Active Set for $600$ iterations of Active Set leads to faster verification of the harder properties while slowing it down for the easier ones. On the base and deep model, the benefits lead to a smaller average runtime (Table \ref{table:models-tighter}). This does not happen on the wide network, for which not enough subproblems are pruned.
On the other hand, $1650$ Active Set iterations do not prune enough subproblems for their computational overhead, leading to slower formal verification. 
The behavior of Saddle Point mirrors what was seen for incomplete verification: while it does not perform as well as Active Set for small computational budgets, the gap shrinks when the algorithms are run for more iterations and it is very competitive with Active Set on the base network. Keeping in mind that the memory cost of each variable addition to the active set is larger in complete verification due to subproblem batching (see \ref{sec:parallel-masked}), Saddle Point constitutes a valid complete verification alternative in settings where memory is critical. 

\sisetup{detect-weight=true,detect-inline-weight=math,detect-mode=true}
\begin{table*}[b!]
	\centering
	\scriptsize
	\setlength{\tabcolsep}{4pt}
	\aboverulesep = 0.1mm  
	\belowrulesep = 0.2mm  
	\renewcommand{\bfseries}{\fontseries{b}\selectfont}
	\newrobustcmd{\B}{\bfseries}
	\newcommand{\boldentry}[2]{
		\multicolumn{1}{S[table-format=#1,
			mode=text,
			text-rm=\fontseries{b}\selectfont
			]}{#2}}
	\begin{adjustbox}{max width=\textwidth, center}
		\begin{tabular}{
				l
				c
				c
				c
				c
				c
				c
				c
			}
			\toprule
			\\[-8pt]
			& \textsc{kPoly} & \textsc{FastC2V} & \textsc{OptC2V} & \textsc{BDD+ BaBSR} & \textsc{Active Set BaBSR} & \textsc{Saddle Point BaBSR} \\
			\midrule
			\\[-7pt]
			Verified Properties & 399 & 390 & 398 & 401 & 434 & 408 \\
			
			Average Runtime [s] & 86 & 15.3 & 104.8 & 2.5 & 7.0 & 43.16 \\
			
			\bottomrule
		\end{tabular}
	\end{adjustbox}
	\caption{\small Number of verified properties and average runtime on the adversarially trained~\citep{Madry2018} \texttt{ConvSmall} network from the ERAN~\citep{eran} data set, on the first $1000$ images of the CIFAR-10 test set. Results for FastC2V and OptC2V are taken from \citet{Tjandraatmadja}, results for kPoly are taken from \citet{Singh2019b}.} 
	\label{table:eran-convs-incomplete}
\end{table*}

\subsubsection{Branch and Bound for Incomplete Verification} \label{sec:bab-incomplete}

When run for a fixed number of iterations, branch and bound frameworks can be effectively employed for incomplete verification. 
As shown in $\S$\ref{sec:exp-comp-sub}, one of the main advantages of our algorithms is their integration and performance within branch and bound.
We provide further evidence of this by comparing them with previous algorithms that overcame the convex barrier, but which were not designed for employment within branch and bound: \textbf{kPoly} by~\citet{Singh2019b}, \textbf{Fast2CV} and \textbf{Opt2CV} by~\citet{Tjandraatmadja}.
In particular, we compute the number of verified images, against perturbations of radius $\epsilon_{ver}=2/255$, on the first $1000$ examples of the CIFAR-10 test set for the adversarially-trained~\citep{Madry2018} \texttt{ConvSmall} network from the ERAN~\citep{eran} data set, excluding misclassified images. 
We test different speed-accuracy trade-offs within branch and bound: BDD+ is run for $400$ iterations per branch and bound sub-problem, and at most $5$ branch and bound batches. Active Set and Saddle Point are run for $200$ and $4000$ iterations, respectively, with at most $10$ batches within BaBSR. 
Hyper-parameters are kept as in $\S$\ref{sec:incomp_verif}. This experiment is run on a single Nvidia Titan V GPU.
Table~\ref{table:eran-convs-incomplete} shows that, regardless of the employed speed-accuracy trade-off, the use of branch and bound is beneficial with respect to kPoly, Fast2CV, and Opt2CV. Therefore, the use of relaxations tighter than Planet does not necessarily improve incomplete verification performance.
Increasing the computational budget of branch and bound results in a larger number of verified properties, as demonstrated by the performance of Active Set and Saddle Point.

\section{Discussion}

The vast majority of neural network bounding algorithms focuses on (solving or loosening) a popular triangle-shaped relaxation, referred to as the ``convex barrier" for verification.
Relaxations that are tighter than this convex barrier have been recently introduced, but the complexity of the standard solvers for such relaxations hinders their applicability.
We have presented two different sparse dual solvers for one such relaxation, and empirically demonstrated that they can yield significant formal verification speed-ups.
Our results show that tightness, when paired with scalability, is key to the efficiency of neural network verification and instrumental in the definition of a more appropriate ``convex barrier".
We believe that new customized solvers for similarly tight relaxations are a crucial avenue for future research in the area, possibly beyond piecewise-linear networks.
Finally, as it is inevitable that tighter bounds will come at a larger computational cost, future verification systems will be required to recognise a priori whether tight bounds are needed for a given property.
A possible solution to this problem could rely on learning algorithms.

\acks{ADP was supported by the EPSRC Centre for Doctoral Training in Autonomous Intelligent Machines and Systems, grant EP/L015987/1, and an IBM PhD fellowship. HSB was supported using a Tencent studentship through the University of Oxford.}

\newpage

\renewcommand{\theHsection}{A\arabic{section}}
\appendix

\section{Limitations of Previous Dual Approaches} \label{sec:no-previous}

In this section, we show that previous dual derivations~\citep{BunelDP20,Dvijotham2018} violate Fact \ref{fact}. Therefore, they are not efficiently applicable to problem \eqref{eq:primal-anderson}, motivating our own derivation in Section \ref{sec:solver}.

We start from the approach by~\citet{Dvijotham2018}, which relies on relaxing equality constraints \eqref{eq:verif-linconstr}, \eqref{eq:verif-relu} from the original non-convex problem \eqref{eq:noncvx_pb}. 
\citet{Dvijotham2018} prove that this relaxation corresponds to solving convex problem \eqref{eq:primal-bigm}, which is equivalent to the Planet relaxation~\citep{Ehlers2017}, to which the original proof refers. 
As we would like to solve tighter problem \eqref{eq:primal-anderson}, the derivation is not directly applicable. 
Relying on intuition from convex analysis applied to duality gaps~\citep{lemarechal2001lagrangian}, we conjecture that relaxing the composition \eqref{eq:verif-relu}   $\circ$ \eqref{eq:verif-linconstr} might tighten the primal problem equivalent to the relaxation, obtaining the following dual:
\begin{equation}
\begin{split}
\max_{\mub} \min_{\xb} &\quad W_n\xb_{n-1} + \mbf{b}_n 
+  \sum_{k=1}^{n-1} \mub_k^T \left( \xb_k - \max \left\{ W_k \xb_{k-1} + \mbf{b}_k, 0\right\} \right) \\
\text{s.t. }\quad& \lb_k \leq \xb_k \leq \ub_k \qquad\qquad  k \in \left\llbracket1,n-1\right\rrbracket,\\
& \xb_0 \in \mathcal{C}.
\end{split}
\label{eq:dj_adapted}
\end{equation}
Unfortunately dual \eqref{eq:dj_adapted} requires an LP (the inner minimisation over $\xb$, which in this case does not decompose over layers) to be solved exactly to obtain a supergradient and any time a valid bound is needed. This is markedly different from the original dual by ~\citet{Dvijotham2018}, which had an efficient closed-form for the inner problems.

The derivation by~\citet{BunelDP20}, instead, operates by substituting \eqref{eq:verif-relu} with its convex hull and solving its Lagrangian Decomposition dual. The Decomposition dual for the convex hull of \eqref{eq:verif-relu}  $\circ$ \eqref{eq:verif-linconstr} (i.e., $\mathcal{A}_k$) takes the following form:
\begin{equation}
\begin{split}
	\max_{\lambdabhat}\min_{\xb, \zb}\ & W_n \xb_{A, n-1} + \bb_n
	+ \sum_{k=1}^{n-1} \lambdabhat_k^T \left( \xb_{B, k} - \xb_{A, k} \right)\\[5pt]
	\text{s.t. }\quad & \xb_0 \in \mathcal{C}, \\
	&(\xb_{B, k}, W_n \xb_{A, n-1} + \bb_n, \zb_{k}) \in \mathcal{A}_{\text{dec}, k}  \qquad k \in \left\llbracket1, n-1\right\rrbracket,
\end{split}
\label{eq:uai_adapted}
\end{equation}
where $\mathcal{A}_{\text{dec}, k}$ corresponds to $\mathcal{A}_{k}$ with the following substitutions: $\xb_{k} \rightarrow \xb_{B, k}$, and $\xbhat_{k} \rightarrow W_n \xb_{A, n-1} + \bb_n$.
It can be easily seen that the inner problems  (the inner minimisation over $\xb_{A, k}, \xb_{B, k}$, for each layer $k>0$) are an exponentially sized LP.
Again, this differs from the original dual on the Planet relaxation~\citep{BunelDP20}, which had an efficient closed-form for the inner problems.

\vspace{10pt}
\section{Dual Initialisation} \label{sec:dual-init}

\begin{minipage}{.99\textwidth}
	\begin{algorithm}[H]
		\caption{Big-M solver}\label{alg:bigm}
		\begin{algorithmic}[1]
			\algrenewcommand\algorithmicindent{1.0em}
			\Function{bigm\_compute\_bounds}{$\{W_k, \mbf{b}_k, \lb_k, \ub_k,\hat{\lb}_k, \hat{\ub}_k\}_{k=1..n}$}
			\State Initialize duals $\balpha^0, \bbeta^0_{\mathcal{M}}$ using interval propagation bounds~\citep{Gowal2018}
			\For{$t \in \llbracket1, T-1\rrbracket$}\label{alg:inner_loopstart}
			\State $\xb^{*, t}, \zb^{*, t} \in$ $\argmin_{\xb, \zb}\mathcal{L}_{\mathcal{M}}(\xb, \zb, \balpha^t, \bbeta^t_{\mathcal{M}}) $ using \eqref{eq:bigm-primalk-min}-\eqref{eq:bigm-primal0-min}\label{alg:inner_min}
			\State $\balpha^{t+1}, \bbeta^{t+1}_{\mathcal{M}}\leftarrow [\balpha^{t}, \bbeta^{t}] + H[\nabla_{\balpha}d_{\mathcal{M}}(\balpha, \bbeta), \nabla_{\bbeta_{\mathcal{M}}}d_{\mathcal{M}}(\balpha, \bbeta)]$ \Comment{supergradient step, using \eqref{eq:bigm-supergradient}}
			\State $\balpha^{t+1}, \bbeta^{t+1}_{\mathcal{M}}\leftarrow \max(\balpha^{t+1}, 0), \max(\bbeta^{t+1}_{\mathcal{M}}, 0) \qquad$ \Comment{dual projection}
			\EndFor\label{alg:inner_loopend}
			\State\Return $\min_{\xb, \zb}\mathcal{L}_{\mathcal{M}}(\xb, \zb, \balpha^T, \bbeta^T_{\mathcal{M}}) $
			\EndFunction
		\end{algorithmic}
	\end{algorithm}
\end{minipage}
\vspace{10pt}

As shown in Section \ref{sec:solver}, the Active Set solver reduces to a dual solver for the Big-M relaxation \eqref{eq:primal-bigm} if the active set $\mathcal{B}$ is kept empty throughout execution. We employ this Big-M solver as dual initialization for both Active Set ($\S$\ref{sec:active}) and Saddle Point ($\S$\ref{sec:sp}).
We demonstrate experimentally in $\S$\ref{sec:exp} that, when used as a stand-alone solver, our Big-M solver is competitive with previous dual algorithms for problem \eqref{eq:primal-bigm}.

The goal of this section is to explicitly describe the Big-M solver, which is summarised in algorithm \ref{alg:bigm}.
We point out that, in the notation of restricted variable sets from Section \ref{sec:active}, $\bbeta_{\mathcal{M}} := \bbeta_{\emptyset}$.
We now describe the equivalence between the Big-M and Planet relaxations,
before presenting the solver in Section \ref{sec:bigm-solver} and the dual it operates on in Section \ref{sec:bigm-dual}.

\subsection{Equivalence to Planet} \label{sec:planet-eq}
As previously shown \citep{Bunel2018}, the Big-M relaxation ($\mathcal{M}_k$, when considering the $k$-th layer only) in problem \eqref{eq:primal-bigm} is equivalent to the Planet relaxation by \citet{Ehlers2017}. 
Then, due to strong duality, our Big-M solver (Section \ref{sec:bigm-dual}) and the solvers by~\citet{BunelDP20,Dvijotham2018} will all converge to the bounds from the solution of problem \eqref{eq:primal-bigm}. In fact, the Decomposition-based method \citep{BunelDP20} directly operates on the Planet relaxation, while \citet{Dvijotham2018} prove that their dual is equivalent to doing so. 

\begin{figure}[b!]
	\centering
	\includegraphics[width=0.45\textwidth]{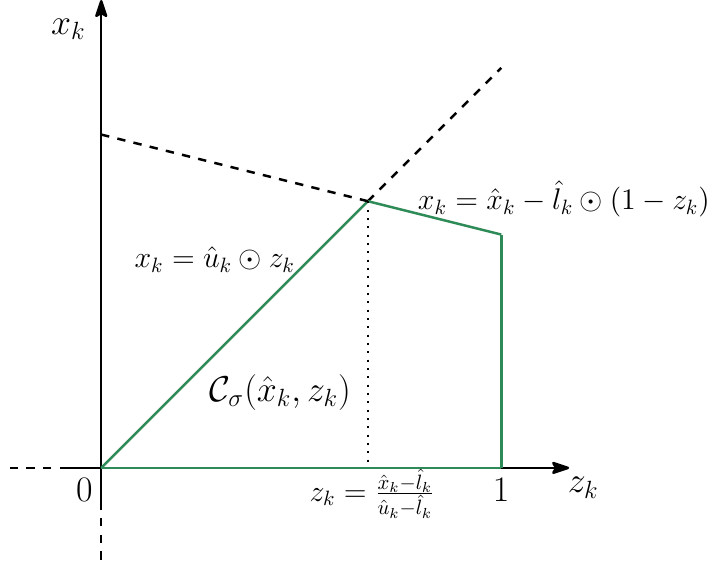}
	\caption{$\mathcal{M}_k$ plotted on the $(\zb_k, \xb_k)$ plane, under the assumption that $\hat{\lb}_k \leq  0 \text{ and } \hat{\ub}_k \geq 0$. }
	\label{fig:bigm-projection}
\end{figure}

On the $k$-th layer, the Planet relaxation takes the following form: 
\begin{equation}
\begin{split}
\label{eq:planet}
\mathcal{P}_k : =
\begin{cases}
\text{if } \hat{\lb}_k \leq  0 \text{ and } \hat{\ub}_k \geq 0:& \\
&\xb_{k} \geq 0, \quad \xb_k \geq \xbhat_k,  \\
&\xb_{k} \leq \mbf{\hat{u}_k} \odot (\xbhat_k - \mbf{\hat{l}}_k) \odot \left( 1/(\mbf{\hat{u}_k}-\mbf{\hat{l}_k})\right).\\
\text{if } \hat{\ub}_k \leq 0:& \\
&\xb_k = 0. \\
\text{if } \hat{\lb}_k \geq 0:& \\
&\xb_k = \xbhat_k.  
\end{cases}
\end{split}
\end{equation}

It can be seen that $\mathcal{P}_k = \text{Proj}_{\xb, \xbhat} \left(\mathcal{M}_k\right)$, where $\text{Proj}_{\xb, \xbhat} $ denotes projection on the $\xb, \xbhat$ hyperplane.
In fact, as $\zb_{k}$ does not appear in the objective of the primal formulation \eqref{eq:primal-bigm}, but only in the constraints, this means assigning it the value that allows the largest possible feasible region.
This is trivial for passing or blocking ReLUs. For the ambiguous case, instead, Figure \ref{fig:bigm-projection} (on a single ReLU) shows that $z_{k} = \frac{\hat{x}_k - \hat{l}_k}{\hat{u}_k-\hat{l}_k}$ is the correct assignment.

\subsection{Big-M Dual} \label{sec:bigm-dual}

As evident from problem \eqref{eq:primal-anderson}, $\mathcal{A}_k \subseteq \mathcal{M}_k$. 
If we relax all constraints in $\mathcal{M}_k$ (except, again, the box constraints), we are going to obtain a dual with a strict subset of the variables in problem \eqref{eq:dual-anderson}.
The Big-M dual is a specific instance of the Active Set dual \eqref{eq:dual-active-set} where $\mathcal{B}=\emptyset$, and it takes the following form:
\begin{equation} 
\begin{aligned}
\max_{(\balpha, \bbeta)\geq 0} &d_{\mathcal{M}}(\balpha, \bbeta_{\mathcal{M}}) \qquad  \text{where:} \qquad d_{\mathcal{M}}(\balpha, \bbeta_{\mathcal{M}}) :=\min_{\xb, \zb} \enskip \mathcal{L}_{\mathcal{M}}(\xb, \zb, \balpha, \bbeta_{\mathcal{M}}),  \\
\mathcal{L}_{\mathcal{M}}(\xb, \zb, \balpha, \bbeta_{\mathcal{M}})  = &
\left[\begin{array}{l}
- \sum_{k=0}^{n-1} \left(\balpha_{k} - W_{k+1}^T\balpha_{k+1} - (\bbeta_{k, 0} + \bbeta_{k, 1} -  W_{k+1}^T \bbeta_{k+1, 1} )\right)^T \xb_k\\
+ \sum_{k=1}^{n-1}\bb_k^T \balpha_{k} - \sum_{k=1}^{n-1}\left(\bbeta_{k, 0} \odot \hat{\ub}_k + \bbeta_{k, 1} \odot \hat{\lb}_k\right)^T \zb_k \\
+ \sum_{k=1}^{n-1} (\hat{\lb}_k - \bb_{k})^T \bbeta_{k, 1}\end{array}\right.\\[5pt]
\text{s.t. } \qquad & \xb_0 \in \mathcal{C}, \qquad  (\xb_{k}, \zb_{k}) \in [\lb_k, \ub_k] \times [\mbf{0},\ \mbf{1}] \qquad k \in \llbracket1, n-1\rrbracket.
\end{aligned}
\label{eq:dual-bigm}
\end{equation}

\subsection{Big-M Solver} \label{sec:bigm-solver}

We initialise dual variables from interval propagation bounds~\citep{Gowal2018}: this can be easily done by setting all dual variables except $\balpha_{n}$ to $0$.
Then, we can maximize $d_{\mathcal{M}}(\balpha, \bbeta)$ via projected supergradient ascent, exactly as described in Section \ref{sec:active} on a generic active set $\mathcal{B}$. All the computations in the solver follow from keeping $\mathcal{B}=\emptyset$ in $\S$\ref{sec:active}. We explicitly report them here for the reader's convenience.

Let us define the following shorthand for the primal coefficients:
\begin{equation*}
	\begin{gathered}
		\boldsymbol{f}_{\mathcal{M}, k}(\balpha, \bbeta_{\mathcal{M}}) = \left(\balpha_{k} - W_{k+1}^T\balpha_{k+1} - (\bbeta_{k, 0} + \bbeta_{k, 1} -  W_{k+1}^T \bbeta_{k, 1} )\right) \\
		\boldsymbol{g}_{\mathcal{M}, k}(\bbeta_{\mathcal{M}}) = \bbeta_{k, 0} \odot \hat{\ub}_k + \bbeta_{k, 1} \odot \hat{\lb}_k.
	\end{gathered}
\end{equation*}
The minimisation of the Lagrangian $\mathcal{L}_{\mathcal{M}}(\xb, \zb, \balpha, \bbeta)$ over the primals for $k \in \llbracket1, n-1\rrbracket$ is as follows:
\begin{equation}
\begin{gathered}
\label{eq:bigm-primalk-min}
\xb_{k}^* = \mathds{1}_{\boldsymbol{f}_{\mathcal{M}, k}(\balpha, \bbeta_{\mathcal{M}})  \geq 0} \odot \hat{\ub}_k + \mathds{1}_{\boldsymbol{f}_{\mathcal{M}, k}(\balpha, \bbeta_{\mathcal{M}})  < 0} \odot \hat{\lb}_k \qquad
\zb_{k}^* = \mathds{1}_{\boldsymbol{g}_{\mathcal{M}, k}(\bbeta_{\mathcal{M}})   \geq 0} \odot \mbf{1} \\[8pt]
\end{gathered}
\end{equation}

For $k = 0$, instead (assuming, as $\S$\ref{sec:active} that this can be done efficiently):
\begin{equation}
\xb^*_0 \in \argmin_{\xb_0} \quad   \boldsymbol{f}_{\mathcal{M}, k}(\balpha, \bbeta_{\mathcal{M}})^T\xb_{0} \qquad \quad \text{s.t. } \quad  \xb_0 \in \mathcal{C}.
\label{eq:bigm-primal0-min}
\end{equation}

The supergradient over the Big-M dual variables $\balpha, \bbeta_{k, 0}, \bbeta_{k, 1}$ is computed exactly as in $\S$\ref{sec:active} and is again a subset of the supergradient of the full dual problem \eqref{eq:dual-anderson}.
We report it for completeness.
For each $k \in \llbracket0, n-1\rrbracket$:
\begin{equation}
\begin{gathered}
\label{eq:bigm-supergradient}
\nabla_{\balpha_k}d(\balpha, \bbeta) = W_k \xb^*_{k-1} + \bb_k - \xb^*_k, \quad
\nabla_{\bbeta_{k, 0}}d(\balpha, \bbeta)  = \xb_k - \zb_k \odot \hat{\ub}_{k},\\
\nabla_{\bbeta_{k, 1}}d(\balpha, \bbeta) = \xb_k - \left(W_k\xb_{k-1} + \bb_{k}\right) + \left(1 -\zb_k \right)\odot \hat{\lb}_{k}.
\end{gathered}
\end{equation}

\section{Implementation Details for Saddle Point} 

In this section, we present details of the Saddle Point solver that were omitted from the main paper. 
We start with the choice of price caps $\mub_{k}$ on the Lagrangian multipliers ($\S$\ref{sec:sp-fw-caps}), and conclude with a description of our primal initialisation procedure~($\S$\ref{sec:sp-fw-primalinit}).

\subsection{Constraint Price Caps} \label{sec:sp-fw-caps}

Price caps $\mub$ containing the optimal dual solution to problem \eqref{eq:dual-anderson} can be found via binary search (see $\S$\ref{sec:sp-fw-theory}). However, such a strategy might require running Saddle Point to convergence on several instances of problem \eqref{eq:sp-fw-dual}. Therefore, in practice, we employ a heuristic to set constraint caps to a reasonable approximation.

Given duals $(\balpha^0, \bbeta^0)$ from the dual initialization procedure (algorithm \ref{alg:active-set} or algorithm \ref{alg:bigm}, depending on the computational budget), for each $k \in \left\llbracket1, n-1\right\rrbracket$ we set $\mub_k$ as follows:
\begin{equation}
	\begin{aligned}
		\mub_{\alpha, k} &= \begin{cases}
		\balpha_k^0 & \text{if } \balpha_k^0 > 0 \\
		c_{\alpha, k} & \text{otherwise} 
		\end{cases} \\
		\mub_{\beta, k} &= \begin{cases}
		\begin{array}{l} \sum_{I_{k}  \in 2^{W_k}} \bbeta_k^0 \end{array} & \text{if } \begin{array}{l} \sum_{I_{k}  \in 2^{W_k}} \bbeta_k^0 \end{array} > 0 \\[5pt]
		c_{\beta, k} & \text{otherwise}, 
		\end{cases}
	\end{aligned}
	\label{eq:price-caps}
\end{equation}
where $c_{\alpha, k}$ and $c_{\beta, k}$ are small positive constants.
In other words, we cap the (sums over) dual variables to their values at initialization if these are non-zero, and we allow dual variables to turn positive otherwise. 
While a larger feasible region (for instance, setting $\mub_{\alpha, k} = \max_i \balpha_k [i] \odot \mbf{1}$, $\mub_{\beta, k} = \max_i \sum_{I_{k}  \in 2^{W_k}} \bbeta_k^0[i] \odot \mbf{1}$) might yield tighter bounds at convergence, we found assignment \eqref{eq:price-caps} to be more effective on the iteration budgets employed in $\S$\ref{sec:incomp_verif},$\S$\ref{sec:comp_verif}.

\subsection{Primal Initialization} \label{sec:sp-fw-primalinit}

If we invert the minimization and maximization, our saddle-point problem \eqref{eq:sp-fw-dual} can be seen as a non-smooth minimization problem (the minimax theorem holds):
\begin{equation}
\begin{aligned}
\label{eq:primal-subg-problem}
\min_{\xb, \zb} \max_{\balpha, \bbeta} & \left[\begin{array}{l}
\sum_{k=1}^{n-1} \balpha_{k}^T \left( W_k \xb_{k-1} + \bb_k - \xb_k \right) + \sum_{k=1}^{n-1} \bbeta_{k, 0}^T \left(\xb_k - \zb_k \odot \hat{\ub}_{k}\right)  \\
+ \sum_{k=1}^{n-1} \sum_{I_{k}  \in\mathcal{E}_k} \bbeta_{k, I_{k}}^T\left(\begin{array}{l}
\left(W_k  \odot I_{k} \odot \lbrv_{k-1}\right) \diamond (1 - \zb_k) \\
-  \bb_k \odot \zb_k - \left(W_k \odot  \left(1 - I_{k}\right) \odot \ubrv_{k-1}\right) \diamond \zb_k  \\
- \left( W_k \odot I_{k} \right) \xb_{k-1} + \xb_k  \end{array}\right) \\
+\sum_{k=1}^{n-1} \bbeta_{k, 1}^T \left(\xb_k - \left(W_k\xb_{k-1} + \bb_{k}\right) + \left(1 -\zb_k \right)\odot \hat{\lb}_{k}\right) + W_n \xb_{n-1} + \bb_n \end{array}\right. \qquad \\
\text{s.t. } \enskip &\xb_0 \in \mathcal{C}, \\
&(\xb_k, \zb_k) \in [\lb_k, \ub_k] \times [\mbf{0},\ \mbf{1}]  && \hspace{-120pt} k \in \left\llbracket1, n-1\right\rrbracket,  \\
& \hspace{-7pt} \begin{array}{l}
\balpha_{k} \in [\mbf{0}, \mub_{k, \alpha}] \\
\hspace{-2pt}\left( \bbeta_{k} \geq \mbf{0}, \ \sum_{I_{k}\in \mathcal{E}_k \cup \{0,1\}} \bbeta_{k, I_{k}} \leq  \mub_{k, \beta}\right)  \end{array} \quad && \hspace{-120pt} k \in \left\llbracket1, n-1\right\rrbracket.
\end{aligned}
\end{equation}
The ``primal view"\footnote{due to the restricted dual domain, problem \eqref{eq:primal-subg-problem} does not correspond to the original primal problem in~\eqref{eq:primal-anderson}.} of problem \eqref{eq:primal-subg-problem} admits an efficient subgradient method, which we employ as primal initialization for the Saddle Point solver. 

\subsubsection{Technical Details}
The inner maximizers $(\balpha^*, \bbeta^*)$, which are required to obtain a valid subgradient, will be given in closed-form by the dual conditional gradients from equations \eqref{eq:alphak-condg}, \eqref{eq:betak-condg} (the objective is bilinear).
Then, using the dual functions defined in \eqref{eq:dual-anderson-functions}, the subgradient over the linearly-many primal variables can be computed as $\boldsymbol{f}_{k}(\balpha^*, \bbeta^*)$ for $\xb_k$ and $\boldsymbol{g}_{k}(\bbeta^*)$ for $\zb_k$. After each subgradient step, the primals are projected to the feasible space: this is trivial for $\xb_k$ and $\zb_k$, which are box constrained, and can be efficiently performed for $\mathcal{C}$ in the common cases of $\ell_{\infty}$ or $\ell_2$ balls.

\subsubsection{Simplified Initialization}
Due to the additional cost associated to masked forward-backward passes (see appendix \ref{sec:masked-passes}), the primal initialization procedure can be simplified by restricting the dual variables to the ones associated to the Big-M relaxation (appendix \ref{sec:dual-init}). This can be done by substituting $\mathcal{E}_k \leftarrow \emptyset$ and $\bbeta_k \leftarrow \bbeta_{\emptyset, k}$ (see notation in $\S$\ref{sec:as-solver}) in problem \eqref{eq:primal-subg-problem}. A subgradient method for the resulting problem can then be easily adapted from the description above.

\vspace{10pt}
\section{Dual Derivations} \label{sec:dual-derivation}
We now derive problem \eqref{eq:dual-anderson}, the dual of the full relaxation by \citet{Anderson2020} described in equation \eqref{eq:primal-anderson}.
The Active Set (equation \eqref{eq:dual-active-set}) and Big-M duals (equation \eqref{eq:dual-bigm}) can be obtained by removing  $\bbeta_{k, I_{k}} \forall \ I_{k} \in \mathcal{E}_k \setminus \mathcal{B}_k$ and $\bbeta_{k, I_{k}} \forall \ I_{k} \in \mathcal{E}_k$, respectively.
We employ the following Lagrangian multipliers:

\begin{equation*}
\begin{split}
\xb_{k} \geq \xbhat_{k} &\Rightarrow \balpha_k, \\
\xb_{k} \leq \hat{\ub}_k \odot \zb_{k} &\Rightarrow \bbeta_{k, 0}, \\
\xb_{k} \leq \xbhat_{k} - \hat{\lb}_k \odot  (1 - \zb_{k}) & \Rightarrow \bbeta_{k, 1}, \\
\xb_{k} \leq \left( \begin{array}{l} \left(W_k  \odot I_{k}\right)  \xb_{k-1}  + \zb_k \odot \bb_k \\
- \left(W_k  \odot I_{k} \odot \lbrv_{k-1}\right) \diamond (1 - \zb_k) \\
+ \left(W_k \odot  \left(1 - I_{k}\right) \odot \ubrv_{k-1}\right) \diamond \zb_k \end{array} \right) &\Rightarrow \bbeta_{k, I_k},
\end{split}
\end{equation*}

and obtain, as a Lagrangian (using $\xbhat_{k} = W_k \xb_{k-1} + \bb_k$):
\begin{equation*}
	\mathcal{L}(\xb, \zb, \balpha, \bbeta) = \left[\hspace{-2pt}\begin{array}{l}
	 \sum_{k=1}^{n-1} \balpha_{k}^T \left( W_k \xb_{k-1} + \bb_k - \xb_k \right) + \sum_{k=1}^{n-1} \bbeta_{k, 0}^T \left(\xb_k - \zb_k \odot \hat{\ub}_{k}\right)  \\
	+ \sum_{k=1}^{n-1} \sum_{I_{k}  \in\mathcal{E}_k} \bbeta_{k, I_{k}}^T\left( \hspace{-4pt} \begin{array}{l}
	 \left(W_k  \odot I_{k} \odot \lbrv_{k-1}\right) \diamond (1 - \zb_k) - \left( W_k \odot I_{k} \right) \xb_{k-1} \\
	- \bb_k \odot  \zb_k - \left(W_k \odot  \left(1 - I_{k}\right) \odot \ubrv_{k-1}\right) \diamond  \zb_k  + \xb_k  \end{array}\hspace{-4pt}\right) \\
	+\sum_{k=1}^{n-1} \bbeta_{k, 1}^T \left(\xb_k - \left(W_k\xb_{k-1} + \bb_{k}\right) + \left(1 -\zb_k \right)\odot \hat{\lb}_{k}\right) + W_n \xb_{n-1} + \bb_n \end{array}\right.
\end{equation*}

Let us use $\sum_{I_{k}}$ as shorthand for $\sum_{I_{k} \in \mathcal{E}_k \cup \{0, 1\}}$.
If we collect the terms with respect to the primal variables and employ dummy variables $\balpha_{0}= 0$, $\bbeta_0 = 0$, $\balpha_{n} = I$, $\bbeta_{n} = 0$, we obtain:

\begin{equation*}
\mathcal{L}(\xb, \zb, \balpha, \bbeta) = \hspace{-2pt}\left[\hspace{-5pt}\begin{array}{l}
- \sum_{k=0}^{n-1}\left(\begin{array}{l}
\balpha_{k} - W_{k+1}^T\balpha_{k+1} - \sum_{I_{k}}\bbeta_{k, I_{k}} \\+ \smash{\sum_{I_{k+1}} (W_{k+1} \odot I_{k+1})^T \bbeta_{k+1, I_{k+1}}}
\end{array} \right)^T \xb_k \\
- \sum_{k=1}^{n-1}\def\arraystretch{1.5} \left(\begin{array}{l} \sum_{I_{k} \in \mathcal{E}_k} \bbeta_{k, I_{k}} \odot \bb_k + \bbeta_{k, 1} \odot \hat{\lb}_k + \bbeta_{k, 0} \odot \hat{\ub}_k \\
+ \sum_{I_{k}  \in\mathcal{E}_k} \left(W_k \odot I_{k} \odot \lbrv_{k-1}\right) \diamond \bbeta_{k, I_{k}}\\ 
+  \sum_{I_{k}  \in\mathcal{E}_k} \left(W_k \odot (1 - I_{k}) \odot \ubrv_{k-1}\right) \diamond \bbeta_{k, I_{k}}  \end{array}\right)^T \zb_k \\
+ \sum_{k=1}^{n-1}\bb_k^T \balpha_{k} + \sum_{k=1}^{n-1} \left( \sum_{I_{k}  \in\mathcal{E}_k} (W_k \odot I_{k} \odot \lbrv_{k-1}) \oblong \bbeta_{k, I_{k}} + \bbeta_{k, 1}^T (\hat{\lb}_k - \bb_{k})\right)\end{array}\right.
\end{equation*}
which corresponds to the form shown in problem \eqref{eq:dual-anderson}.

\vspace{10pt}
\section{Intermediate Bounds} \label{sec:intermediate}

A crucial quantity in both ReLU relaxations ($\mathcal{M}_k$ and $\mathcal{A}_k$) are intermediate pre-activation bounds $\hat{\lb}_k, \hat{\ub}_k$.
In practice, they are computed by solving a relaxation $\mathcal{C}_k$ (which might be $\mathcal{M}_k$, $\mathcal{A}_k$, or something looser) of \eqref{eq:noncvx_pb} over subsets of the network \citep{BunelDP20}. For $\hat{\lb}_i$, this means solving the following problem (separately, for each entry $\hat{\lb}_i[j]$):

\begin{equation}
\begin{aligned}
\min_{\xb, \xbhat, \zb}  \enskip &\xbhat_{i}[j]  \\
\text{s.t. }\enskip &\xb_0
\in \mathcal{C}\span\\
& \xbhat_{k+1} = W_{k+1} \xb_{k} + \bb_{k+1},  && k \in \left\llbracket0, i-1 \right\rrbracket,\\
& (\xb_{k}, \xbhat_{k}, \zb_{k}) \in \mathcal{C}_k  && k \in \left\llbracket1, i-1\right\rrbracket.
\end{aligned}
\label{eq:primal-intermediates}
\end{equation}

As \eqref{eq:primal-intermediates} needs to be solved twice for each neuron (lower and upper bounds, changing the sign of the last layer's weights) rather than once as in \eqref{eq:primal-anderson}, depending on the computational budget, $\mathcal{C}_k$ might be looser than the relaxation employed for the last layer bounds (in our case, $\mathcal{A}_k$). In all our experiments, we compute intermediate bounds as the tightest bounds between the method by \citet{Wong2018} and Interval Propagation~\citep{Gowal2018}.

Once pre-activation bounds are available, post-activation bounds can be simply computed as $\lb_k = \max(\hat{\lb}_k, 0), \ub_k = \max(\hat{\ub}_k, 0)$.

\vspace{10pt}
\section{Pre-activation Bounds in $\mathcal{A}_k$} \label{sec:intermediate-anderson}
We now highlight the importance of an explicit treatment of pre-activation bounds in the context of the relaxation by \citet{Anderson2020}. 
In $\S$\ref{sec:preact-example} we will show through an example that, without a separate pre-activation bounds treatment, $\mathcal{A}_k$ could be looser than the less computationally expensive $\mathcal{M}_k$ relaxation. We then ($\S$\ref{sec:preact-anderson-derivation}) justify our specific pre-activation bounds treatment by extending the original proof by \citet{Anderson2020}.

The original formulation by \citet{Anderson2020} is the following:
\begin{equation}
\left. \begin{array} {l}
\xb_k \geq W_k \xb_{k-1} + \bb_{k} \\
\xb_k \leq \left( \begin{array}{l} \left(W_k  \odot I_{k}\right)  \xb_{k-1}  + \zb_k \odot \bb_k \\
- \left(W_k  \odot I_{k} \odot \lbrv_{k-1}\right) \diamond (1 - \zb_k) \\
+ \left(W_k \odot  \left(1 - I_{k}\right) \odot \ubrv_{k-1}\right) \diamond \zb_k \end{array} \right) 
\ \forall I_{k} \in 2^{W_k}  \\
 (\xb_{k}, \xbhat_{k}, \zb_{k}) \in [\lb_k, \ub_k]  \times [\hat{\lb}_k, \hat{\ub}_k] \times [\mbf{0},\ \mbf{1}]  \end{array} \right\} = \mathcal{A}'_k.
\label{eq:orig-anderson-set}
\end{equation}
While pre-activation bounds regularly appear as lower and upper bounds to $\xbhat_k$, they do not appear in any other constraint.
Indeed, the difference with respect to $\mathcal{A}_k$ as defined in equation \eqref{eq:primal-anderson} exclusively lies in the treatment of pre-activation bounds within the exponential family. 
Set $\mathcal{A}_k$ explicitly employs generic $\hat{\lb}_k, \hat{\ub}_k$ in the constraint set through $\xb_{k} \leq \hat{\ub}_k \odot \zb_{k}$, and $\xb_{k} \leq \xbhat_{k} - \hat{\lb}_k \odot  (1 - \zb_{k})$ from $\mathcal{M}_k$.
On the other hand,
$\mathcal{A}'_k$ implicitly sets $\hat{\lb}_k, \hat{\ub}_k$ to the value dictated by interval propagation bounds \citep{Gowal2018} via the constraints in $I_k = 0$ and $I_k = 1$ from the exponential family.
In fact, setting $I_k = 0$ and $I_k = 1$, we obtain the following two constraints:
\begin{equation}
\begin{aligned}
&\xb_{k} \leq \xbhat_{k} - M^-_{k} \odot  (1 - \zb_{k}) \\
&\xb_{k} \leq M^+_{k} \odot \zb_{k} \\
 \text{where:} \qquad &M^-_{k} := \min_{\xb_{k-1} \in [\lb_{k-1}, \ub_{k-1}]} W_k^T \xb_{k-1} + \bb_k = W_k \odot \lbrv_{k-1} + \bb_k  \\
&M^+_{k} := \max_{\xb_{k-1} \in [\lb_{k-1}, \ub_{k-1}]} W_k^T \xb_{k-1} + \bb_k =W_k \odot \ubrv_{k-1} + \bb_k 
\end{aligned}	
\label{eq:anderson-loose-bigm}
\end{equation}	
which correspond to the upper bounding ReLU constraints in $\mathcal{M}_k$ if we set $\hat{\lb}_k \rightarrow M^-_{k}, \hat{\ub}_k \rightarrow M^+_{k} $. While $\hat{\lb}_k, \hat{\ub}_k$ are (potentially) computed solving an optimisation problem over the entire network (problem \ref{eq:primal-intermediates}), the optimisation for $M^-_{k}, M^+_{k}$ involves only the layer before the current. Therefore, the constraints in \eqref{eq:anderson-loose-bigm} might be much looser than those in $\mathcal{M}_k$.

In practice, the effect of $\hat{\lb}_k[i], \hat{\ub}_k[i]$ on the resulting set is so significant that $\mathcal{M}_k$ might yield better bounds than $\mathcal{A}'_k$, even on very small networks. We now provide a simple example.

\subsection{Motivating Example} \label{sec:preact-example}

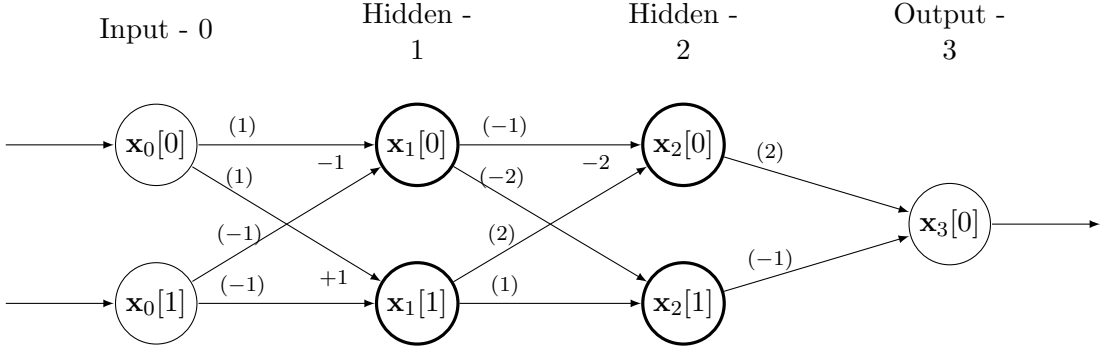
\begin{figure}[t]
	\centering
	\begin{tikzpicture}[
	clear/.style={ 
		draw=none,
		fill=none
	},
	net/.style={
		matrix of nodes,
		nodes={ draw, circle, inner sep=2pt },
		nodes in empty cells,
		column sep=1.5cm,
		row sep=-2pt,
		column 2/.style={very thick},
		column 3/.style={very thick},
	},
	>=latex
	]
	\matrix[net] (mat)
	{
		|[clear]| \parbox{1.7cm}{\centering Input - $0$} 
		& |[clear]| \parbox{1.7cm}{\centering Hidden - $1$} 
		& |[clear]| \parbox{1.7cm}{\centering Hidden - $2$} 
		& |[clear]| \parbox{1.7cm}{\centering Output - $3$} \\
		
		$\xb_{0}[0]$  & $\xb_{1}[0]$  & $\xb_{2}[0]$ & |[clear]|  \\
		|[clear]|   & |[clear]|  & |[clear]|  & $\xb_{3}[0]$ \\
		$\xb_{0}[1]$ & $\xb_{1}[1]$ & $\xb_{2}[1]$ & |[clear]|\\
	};
	\foreach \ai in {2,4}
	\draw[<-] (mat-\ai-1) -- +(-2cm,0);
	\draw[->] (mat-2-1) -- (mat-2-2) node [above=-1pt, near start, font=\fontsize{8}{0}] {$(1)$} node [below, near end, font=\fontsize{8}{0}] {$-1$};
	\draw[->] (mat-4-1) -- (mat-2-2) node [above=-1pt, near start, font=\fontsize{8}{0}] {$(-1)$};
	\draw[->] (mat-2-1) -- (mat-4-2) node [above=-1pt, near start, font=\fontsize{8}{0}] {$(1)$} node [below=3pt, near end, font=\fontsize{8}{0}] {$+1$};
	\draw[->] (mat-4-1) -- (mat-4-2) node [above=-1pt, near start, font=\fontsize{8}{0}] {$(-1)$};
	\draw[->] (mat-2-2) -- (mat-2-3) node [above=-1pt, near start, font=\fontsize{8}{0}] {$(-1)$} node [below, near end, font=\fontsize{8}{0}] {$-2$};
	\draw[->] (mat-4-2) -- (mat-2-3) node [above=-1pt, near start, font=\fontsize{8}{0}] {$(2)$};
	\draw[->] (mat-2-2) -- (mat-4-3) node [above=-1pt, near start, font=\fontsize{8}{0}] {$(-2)$};
	\draw[->] (mat-4-2) -- (mat-4-3) node [above=-1pt, near start, font=\fontsize{8}{0}] {$(1)$};
	\draw[->] (mat-2-3) -- (mat-3-4) node [above=-1pt, near start, font=\fontsize{8}{0}] {$(2)$};
	\draw[->] (mat-4-3) -- (mat-3-4) node [above=-1pt, near start, font=\fontsize{8}{0}] {$(-1)$};
	
	\draw[->] (mat-3-4) -- +(2cm,0);
	\end{tikzpicture}
	\caption{Example network architecture in which $\mathcal{M}_k \subset \mathcal{A}'_k$, with pre-activation bounds computed with $\mathcal{C}_k=\mathcal{M}_k$. For the bold nodes (the two hidden layers) a ReLU activation follows the linear function. The numbers between parentheses indicate multiplicative weights, the others additive biases (if any).}
	\label{fig:example}
\end{figure}

Figure \ref{fig:example} illustrates the network architecture. The size of the network is the minimal required to reproduce the phenomenon. $\mathcal{M}_k$ and $\mathcal{A}_k$ coincide for single-neuron layers \citep{Anderson2020}, and $\hat{\lb}_k =M^-_{k}, \hat{\ub}_k = M^+_{k}$ on the first hidden layer (hence, a second layer is needed).

Let us write the example network as a (not yet relaxed, as in problem \eqref{eq:noncvx_pb}) optimization problem for the lower bound on the output node $\xb_{3}$.

\begin{subequations}
	\begin{alignat}{2}
	\lb_3 = \text{arg min }_{\xb, \xbhat}\quad& \begin{bmatrix}2 & -1\end{bmatrix} \xb_{2} \span\\[5px]
	\text{s.t. }\quad& \xb_0 \in [-1, 1]^2\span\\[5px]
	& \xbhat_{1} =\begin{bmatrix}1 & -1 \\ 1 & -1 \end{bmatrix}  \xb_{0} + \begin{bmatrix}-1 \\ 1\end{bmatrix} \qquad \xb_1 = \max(0, \xbhat_{1})  \\[5px]
	& \xbhat_{2} =\begin{bmatrix}-1 & 2 \\ -2 & 1 \end{bmatrix}  \xb_{1} + \begin{bmatrix}-2 \\ 0\end{bmatrix} \qquad \xb_2 = \max(0,  \xbhat_{2}) \\[5px]
	& \xb_{3} = \begin{bmatrix}2 & -1\end{bmatrix}  \xb_{2}
	\end{alignat}
\end{subequations}

Let us compute pre-activation bounds with $\mathcal{C}_k=\mathcal{M}_k$ (see problem \eqref{eq:primal-intermediates}).
For this network, the final output lower bound is tighter if the employed relaxation is $\mathcal{M}_k$ rather than $\mathcal{A}_k$ (hence, in this case, $\mathcal{M}_k \subset \mathcal{A}'_k$). Specifically: $\hat{\lb}_{3, \mathcal{A}'_k} = -1.2857$, $\hat{\lb}_{3, \mathcal{M}_k} = -1.2273$. In fact:

\begin{itemize}
	\item In order to compute $\lb_1$ and $\ub_1$, the post-activation bounds of the first-layer, it suffices to solve a box-constrained linear program for $\hat{\lb}_1$ and $\hat{\ub}_1$, which at this layer coincide with interval propagation bounds, and to clip them to be non-negative. This yields $\lb_1 = \begin{bmatrix}0 & 0\end{bmatrix}^T$, $\ub_1 = \begin{bmatrix}1 & 3\end{bmatrix}^T$.
	\item  Computing $M^+_{2}[1] = \max_{\xb_{1} \in [\lb_{1}, \ub_{1}]} \begin{bmatrix}-2 & 1 \end{bmatrix}  \xb_{1} = 3$ we are assuming that $\xb_{1}[0] = \lb_{1}[0]$ and $\xb_{1}[1] = \ub_{1}[1]$. These two assignments are in practice conflicting, as they imply different values for $\xb_{0}$.  Specifically, $\xb_{1}[1] = \ub_{1}[1]$ requires $\xb_{0} =\begin{bmatrix}\ub_{0}[0] & \lb_{0}[1] \end{bmatrix} = \begin{bmatrix}1 & -1 \end{bmatrix}$, but this would also imply $\xb_{1}[0] = \ub_{1}[0]$, yielding $\xbhat_{2}[1]=1 \neq 3$.
	
	Therefore, explicitly solving a LP relaxation of the network for the value of $\hat{\ub}_2[1]$ will tighten the bound. Using $\mathcal{M}_k$, the LP for this intermediate pre-activation bound is:
	\begin{subequations}
		\begin{alignat}{2}
		\hat{\ub}_2[1] = \text{arg min }_{\xb, \xbhat, \zb}\quad& \begin{bmatrix}-2 & 1 \end{bmatrix} \xb_{1} \span\\[5px]
		\text{s.t. }\quad& \xb_0 \in [-1, 1]^2 \text{, } \zb_1 \in [0, 1]^2\text{, }  \xb_{1} \in \mathbb{R}_{\geq 0}^2 \span\\[5px]
		& \xbhat_{1} =\begin{bmatrix}1 & -1 \\ 1 & -1 \end{bmatrix}  \xb_{0} + \begin{bmatrix}-1 \\ 1\end{bmatrix} \\
		& \xb_{1} \geq \xbhat_{1} \\
		& \xb_{1} \leq \hat{\ub}_1 \odot \zb_{1} = \begin{bmatrix}1 \\ 3\end{bmatrix} \odot \zb_{1} \\
		& \xb_{1} \leq \xbhat_{1} - \hat{\lb}_1 \odot (1 - \zb_{1}) = \xbhat_{1} -  \begin{bmatrix}-3 \\ -1\end{bmatrix}  \odot (1 - \zb_{1})
		\end{alignat}
	\end{subequations}
	Yielding $\hat{\ub}_2[1] = 2.25 < 3 = M^+_{2}[1] $. An analogous reasoning holds for $M^-_{2}[1]$ and $\hat{\lb}_2[1]$.
	\item In $\mathcal{M}_k$, we therefore added the following two constraints: 
	\begin{equation}
	\begin{split}
	& \xb_{2}[1] \leq \xbhat_{2}[1] - \hat{\lb}_2[1] (1-\zb_{2}[1]) \\
	& \xb_{2}[1] \leq \hat{\ub}_2[1]  \zb_{2}[1]
	\end{split}
	\label{eq:bigmconstraints}
	\end{equation}	
	that in $\mathcal{A}'_k$ correspond to the weaker:
	\begin{equation}
	\begin{split}
	& \xb_{2}[1] \leq \xbhat_{2}[1] - M^-_{2}[1] (1-\zb_{2}[1]) \\
	& \xb_{2}[1] \leq M^+_{2}[1] \zb_{2}[1]
	\end{split}
	\label{eq:andersonbigmconstraints}
	\end{equation}
	
	As the last layer weight corresponding to $\xb_2[1]$ is negative ($W_3[0, 1] = -1$), these constraints are going to influence the computation of $\hat{\lb}_{3}$.
	
	\item In fact, the constraints in \eqref{eq:bigmconstraints} are both active when optimizing for $\hat{\lb}_{3, \mathcal{M}_k} $, whereas their counterparts for $\hat{\lb}_{3, \mathcal{A}'_k} $ in \eqref{eq:andersonbigmconstraints} are not.
	The only active upper constraint at neuron $\xb_{2}[1]$ for the Anderson relaxation is $\xb_{2}[1] \leq \xb_{1}[1]$, corresponding to the constraint from $\mathcal{A}'_2$ with $I_2[1, \cdot]=
	[0\ 1]$. Evidently, its effect is not sufficient to counter-balance the effect of the tighter constraints \eqref{eq:bigmconstraints} for $I_2[1, \cdot]=
	[1\ 1]$ and $I_2[1, \cdot]=
	[0\ 0]$, yielding a weaker lower bound for the network output.
	
\end{itemize}

\subsection{Derivation of $\mathcal{A}_k$} \label{sec:preact-anderson-derivation}

Having motivated an explicit pre-activation bounds treatment for the relaxation by \citet{Anderson2020}, we now extend the original proof for $\mathcal{A}'_k$ (equation \eqref{eq:orig-anderson-set}) to obtain our formulation $\mathcal{A}_k$ (as defined in equation \eqref{eq:primal-anderson}). For simplicity, we will operate on a single neuron $\xb_{k}[i]$.

A self-contained way to derive $\mathcal{A}'_k$ is by applying Fourier-Motzkin elimination on a standard MIP formulation referred to as the \emph{multiple choice} formulation \citep{Anderson2019}, which is defined as follows:
\begin{equation}
	\left. \begin{array}{l}
	 (\xb_{k-1}, \xb_{k}[i]) =  (\xb_{k-1}^0, \xb_{k}^0[i]) + (\xb_{k-1}^1, \xb_{k}^1[i])\\
	 \xb_{k}^0[i] = 0 \geq \boldsymbol{w}_{i, k}^T \xb_{k-1}^0 + \bb_{k}[i]  (1 - \zb_{k}[i]) \\
	 \xb_{k}^1[i] =  \boldsymbol{w}_{i, k}^T \xb_{k-1}^1 + \bb_{k} [i] \zb_{k}[i] \geq 0 \\
	  \lb_{k-1} (1 - \zb_{k}[i]) \leq \xb_{k-1}^0  \leq \ub_{k-1}  (1 - \zb_{k}[i]) \\
	  \lb_{k-1}\zb_{k}[i] \leq \xb_{k-1}^1  \leq \ub_{k-1} \zb_{k}[i] \\
	 \zb_k[i] \in [0,\ 1]
	\end{array} \right\} = \mathcal{S}'_{k, i}
\label{eq:mc}
\end{equation}	
Where $\boldsymbol{w}_{i, k}$ denotes the $i$-th row of $W_k$, and $\xb_{k-1}^1$ and $\xb_{k-1}^0$ are copies of the previous layer variables. Applying \eqref{eq:mc} to the entire neural network results in a quadratic number of variables (relative to the number of neurons). 
The formulation can be obtained from well-known techniques from the MIP literature \citep{Jeroslow} (it is the union of the two polyhedra for a passing and a blocking ReLU, operating in the space of $\xb_{k-1}$). \citet{Anderson2019} show that $\mathcal{A}'_{k} = \text{Proj}_{\xb_{k-1}, \xb_{k}, \zb_{k}} (\mathcal{S}'_{k})$.

If pre-activation bounds $\hat{\lb}_k, \hat{\ub}_k$ (computed as described in Section \ref{sec:intermediate}) are available, we can naturally add them to  \eqref{eq:mc} as follows:

\begin{equation}	
\begin{split}
	\left. \begin{array}{l}
	(\xb_{k-1}, \xb_{k}[i], \zb_{k}[i]) \in \mathcal{S}'_{k,i} \\
	\hat{\lb}_k[i](1 - \zb_{k}[i]) \leq \boldsymbol{w}_{i, k}^T \xb_{k-1}^0 + \bb_{k}[i](1 - \zb_{k}[i])  \leq \hat{\ub}_k[i](1 - \zb_{k}[i]) \\
	\hat{\lb}_k[i] \odot \zb_{k}[i] \leq  \boldsymbol{w}_{i, k}^T \xb_{k-1}^1 + \bb_{k}[i]\zb_{k}[i] \leq \hat{\ub}_k[i]\zb_{k}[i] 
	\end{array} \right\} = \mathcal{S}_{k,i} 
\end{split}
\label{eq:mc-preact}
\end{equation}	

We now prove that this formulation yields $\mathcal{A}_{k}$ when projecting out the copies of the activations.
\\
\begin{proposition}
	Sets $\mathcal{S}_{k}$ from equation \eqref{eq:mc-preact} and $\mathcal{A}_{k}$ from problem \eqref{eq:primal-anderson} are equivalent, in the sense that $\mathcal{A}_{k} = \text{Proj}_{\xb_{k-1}, \xb_{k}, \zb_{k}} (\mathcal{S}_{k})$.
\end{proposition}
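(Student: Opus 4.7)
The plan is to lift the original result $\mathcal{A}'_k = \text{Proj}_{\xb_{k-1},\xb_k,\zb_k}(\mathcal{S}'_k)$ by \citet{Anderson2019} to our augmented formulation, exploiting the fact that $\mathcal{S}_k$ differs from $\mathcal{S}'_k$ only by four inequalities on $\xb_{k-1}^0,\xb_{k-1}^1$ involving $\hat{\lb}_k[i],\hat{\ub}_k[i]$, while $\mathcal{A}_k$ differs from $\mathcal{A}'_k$ only by replacing the $I_k \in \{0,\mathbf{1}\}$ rows of the exponential family with the Big-M inequalities of $\mathcal{M}_k$. Since both formulations separate over the neurons of layer $k$, I would first reduce the statement to a single neuron $\xb_k[i]$ and work in the variables $(\xb_{k-1},\xb_k[i],\zb_k[i])$.

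For the forward inclusion $\text{Proj}(\mathcal{S}_k) \subseteq \mathcal{A}_k$, I would note that $\mathcal{S}_k \subseteq \mathcal{S}'_k$, so projecting already delivers membership in $\mathcal{A}'_k$ and thus all the exponential-family constraints indexed by $I_k \in \mathcal{E}_k$. It then remains to recover the constraints from $\mathcal{M}_k$ from the new inequalities. Using the identities $\xb_k[i]=\boldsymbol{w}_{i,k}^T\xb_{k-1}^1+\bb_k[i]\zb_k[i]$ and $\boldsymbol{w}_{i,k}^T\xb_{k-1}^0+\bb_k[i](1-\zb_k[i]) = \xbhat_k[i]-\xb_k[i]$, the four added inequalities translate directly into $\hat{\lb}_k[i]\zb_k[i] \leq \xb_k[i] \leq \hat{\ub}_k[i]\zb_k[i]$ and $\hat{\lb}_k[i](1-\zb_k[i]) \leq \xbhat_k[i]-\xb_k[i] \leq \hat{\ub}_k[i](1-\zb_k[i])$; together with $\xb_k[i]\geq \xbhat_k[i]$ and $\xb_k[i]\geq 0$ (which come from $\mathcal{S}'_k$), these are precisely the Big-M constraints defining $\mathcal{M}_k$ on neuron $i$.

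For the reverse inclusion $\mathcal{A}_k \subseteq \text{Proj}(\mathcal{S}_k)$, I would take $(\xb_{k-1},\xb_k[i],\zb_k[i]) \in \mathcal{A}_k$ and reuse the Anderson lifting: choose $\xb_{k-1}^1$ so that $\boldsymbol{w}_{i,k}^T\xb_{k-1}^1 + \bb_k[i]\zb_k[i]=\xb_k[i]$ while preserving the per-coordinate box constraints on $\xb_{k-1}^1$ (this is where the separation oracle structure and the definitions of $\lbrv_{k-1},\ubrv_{k-1}$ kick in, and where the existing result for $\mathcal{A}'_k$ already guarantees feasibility in $\mathcal{S}'_k$). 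I then set $\xb_{k-1}^0 = \xb_{k-1}-\xb_{k-1}^1$, $\xb_k^1[i] = \xb_k[i]$, $\xb_k^0[i]=0$, and verify the four new inequalities one by one. Each reduces, via the same algebraic identities used above, to one of: $\xb_k[i] \leq \hat{\ub}_k[i]\zb_k[i]$, $\xb_k[i] \geq \hat{\lb}_k[i]\zb_k[i]$, $\xb_k[i] \leq \xbhat_k[i]-\hat{\lb}_k[i](1-\zb_k[i])$, $\xb_k[i] \geq \xbhat_k[i]-\hat{\ub}_k[i](1-\zb_k[i])$. The first and third are explicit Big-M inequalities in $\mathcal{M}_k \subseteq \mathcal{A}_k$; the second follows from $\xb_k[i] \geq \max(0,\hat{\lb}_k[i]) \geq \hat{\lb}_k[i]\zb_k[i]$ together with $\zb_k[i]\in[0,1]$; and the fourth follows from $\xb_k[i]\geq \xbhat_k[i]$ and $\hat{\ub}_k[i]\geq 0$.

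The main obstacle is the reverse inclusion: specifically, showing that the Anderson lifting for $\mathcal{A}'_k$ still produces $\xb_{k-1}^0,\xb_{k-1}^1$ feasible for the additional pre-activation inequalities in $\mathcal{S}_k$. The verification above shows this reduces to a short case analysis on the sign of $\hat{\lb}_k[i]$ (ambiguous vs.\ passing ReLU) combined with $\mathcal{M}_k$-membership; no new construction is required, the Anderson lifting is reused verbatim.
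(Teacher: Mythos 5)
Your proof is correct, but it follows a genuinely different route from the paper's. The paper re-runs the Fourier--Motzkin elimination of \citet{Anderson2019} on the augmented system $\mathcal{S}_k$, tracking how the four new inequalities combine with the old ones during the elimination of $\xb_{k-1}^0[0]$ and checking which of the resulting constraints survive as non-redundant. You instead treat $\mathcal{A}'_k = \text{Proj}(\mathcal{S}'_k)$ as a black box and exploit the key structural observation that, once the equalities of $\mathcal{S}'_k$ are substituted (via $\boldsymbol{w}_{i,k}^T\xb_{k-1}^1 + \bb_k[i]\zb_k[i]=\xb_k[i]$ and $\boldsymbol{w}_{i,k}^T\xb_{k-1}^0 + \bb_k[i](1-\zb_k[i])=\xbhat_k[i]-\xb_k[i]$), the four added inequalities depend only on the projected variables $(\xb_{k-1},\xb_k[i],\zb_k[i])$; hence $\mathcal{S}_k$ is the intersection of $\mathcal{S}'_k$ with a cylinder over the projected space, projection commutes with such an intersection, and $\text{Proj}(\mathcal{S}_k)=\mathcal{A}'_k\cap P=\mathcal{A}_k$ (the last equality because the $I_k\in\{0,\mathbf{1}\}$ members of the exponential family are dominated by the $\mathcal{M}_k$ constraints, given that $\hat{\lb}_k\geq M_k^-$ and $\hat{\ub}_k\leq M_k^+$). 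Your approach is more modular and avoids redoing the elimination; the paper's approach is self-contained at the level of the linear algebra and makes the redundancy of \eqref{subeq:superfluous-one} and \eqref{subeq:superfluous-two} explicit as a by-product. In fact your own substitution shows that the lifting feasibility worry you raise at the end is vacuous: \emph{any} preimage in $\mathcal{S}'_k$ of a point satisfying the $\mathcal{M}_k$ constraints satisfies the new inequalities, so no property of the specific Anderson lifting is needed. Two small corrections: your justification of $\xb_k[i]\geq\xbhat_k[i]-\hat{\ub}_k[i](1-\zb_k[i])$ via ``$\xb_k[i]\geq\xbhat_k[i]$ and $\hat{\ub}_k[i]\geq 0$'' misses the blocking case $\hat{\ub}_k[i]<0$, where one argues instead from $\xb_k[i]\geq 0$ and $\xbhat_k[i]\leq\hat{\ub}_k[i]$ (the paper handles exactly this split for its constraint \eqref{subeq:superfluous-two}); and the residual case analysis is on the sign of $\hat{\ub}_k[i]$ as well as $\hat{\lb}_k[i]$, not on $\hat{\lb}_k[i]$ alone.
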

\begin{proof}
	In order to prove the equivalence, we will rely on Fourier-Motzkin elimination as in the original Anderson relaxation proof \citep{Anderson2019}.
	Going along the lines of the original proof, we start from \eqref{eq:mc} and eliminate $\xb_{k-1}^1$, $\xb_{k}^0[i]$ and $\xb_{k}^1[i]$ exploiting the equalities. We then re-write all the inequalities as upper or lower bounds on $\xb_{k-1}^0[0]$ in order to eliminate this variable. As \citet{Anderson2019}, we assume $\boldsymbol{w}_{i, k}[0] > 0$. The proof generalizes by using $\lbrv$ and $\ubrv$ for $\boldsymbol{w}_{i, k}[0] < 0$, whereas if the coefficient is $0$ the variable is easily eliminated. 
	We get the following system:
	\begin{subequations}
		\begin{alignat}{2}
		& \xelim = \frac{1}{\welim} \left( \wcurr^T \xb_{k-1} - \wxrem + \boldsymbol{b}_k[i] \zb_{k}[i] - \xb_{k}[i] \right) \label{subeq:equality} \\
		& \xelim \leq - \frac{1}{\welim} \left( \wxrem + \boldsymbol{b}_k[i] (1 - \zb_{k}[i]) \right) \label{subeq:one} \\
		& \xelim \leq \frac{1}{\welim} \left( \wcurr^T \xb_{k-1} - \wxrem + \boldsymbol{b}_k[i] \zb_{k}[i]  \right) \\
		& \lb_{k-1} [0] (1 - \zb_{k}[i]) \leq \xelim \leq \ub_{k-1} [0] (1 - \zb_{k}[i]) \label{subeq:three-four} \\
		& \xelim \leq \xb_{k-1}[0] -  \lb_{k-1} [0] \zb_{k}[i] \\
		& \xelim \geq \xb_{k-1}[0] -  \ub_{k-1} [0] \zb_{k}[i] \label{subeq:six}  \\
		& \xelim \leq \frac{1}{\welim} \left( \wcurr^T \xb_{k-1} - \wxrem + (\boldsymbol{b}_k[i] - \hat{\lb}_k[i]) \zb_{k}[i] \right) \label{subeq:alpha} \\
		& \xelim \geq \frac{1}{\welim} \left( \wcurr^T \xb_{k-1} - \wxrem + (\boldsymbol{b}_k[i] - \hat{\ub}_k[i]) \zb_{k}[i] \right) \label{subeq:beta}  \\
		& \xelim \geq \frac{1}{\welim} \left( (\hat{\lb}_k[i] - \boldsymbol{b}_k[i]) (1 -\zb_{k}[i]) - \wxrem  \right) \label{subeq:gamma}  \\
		& \xelim \leq \frac{1}{\welim} \left( (\hat{\ub}_k[i] - \boldsymbol{b}_k[i]) (1 -\zb_{k}[i]) - \wxrem  \right) \label{subeq:delta}
		\end{alignat}
	\end{subequations}
	
	where only inequalities \eqref{subeq:alpha} to \eqref{subeq:delta} are not present in the original proof. We therefore focus on the part of the Fourier-Motzkin elimination that deals with them, and invite the reader to refer to \citet{Anderson2019} for the others. The combination of these new inequalities yields trivial constraints. For instance: 
	\begin{equation}
	\eqref{subeq:gamma} + \eqref{subeq:alpha} \implies \hat{\lb}_k[i] \leq \wcurr^T \xb_{k-1} + \boldsymbol{b}_k[i] = \hat{\xb}_k[i]
	\end{equation}
	which holds by the definition of pre-activation bounds.
	
	Let us recall that $\xb_k[i] \geq 0$ and $\xb_k[i] \geq \hat{\xb}_k[i]$, the latter constraint resulting from $\eqref{subeq:equality} + \eqref{subeq:one}$. Then, it can be easily verified that the only combinations of interest (i.e., those that do not result in constraints that are obvious by definition or are implied by other constraints) are those containing the equality \eqref{subeq:equality}. In particular, combining inequalities \eqref{subeq:alpha} to \eqref{subeq:delta} with inequalities \eqref{subeq:three-four} to \eqref{subeq:six} generates constraints that are (after algebraic manipulations) superfluous with respect to those in \eqref{eq:fin-system}. We are now ready to show the system resulting from the elimination:
	
	\begin{subequations}
		\begin{alignat}{2}
		& \xb_{k}[i] \geq 0 \label{subeq:first-old} \\
		& \xb_{k}[i] \geq \xbhat_{k}[i] \label{subeq:geq-xhat} \\
		& \xb_{k}[i] \leq \welim \xb_{k-1}[0] - \welim \lb_{k-1}[0] (1 -\zb_{k}[i]) + \wxrem + \boldsymbol{b}_k[i] \zb_{k}[i]\\
		& \xb_{k}[i] \leq  \welim \ub_{k-1}[0] \zb_{k}[i] + \wxrem+ \boldsymbol{b}_k[i] \zb_{k}[i] \\
		& \xb_{k}[i] \geq \welim \xb_{k-1}[0] - \welim \ub_{k-1}[0] (1 -\zb_{k}[i]) + \wxrem + \boldsymbol{b}_k[i] \zb_{k}[i] \\
		& \xb_{k}[i] \geq  \welim \lb_{k-1}[0] \zb_{k}[i] + \wxrem+ \boldsymbol{b}_k[i] \zb_{k}[i] \\
		& \lb_{k-1}[0]  \leq \xb_{k}[i] \leq \ub_{k-1}[0] \label{subeq:last-old} \\
		& \xb_{k}[i] \geq \hat{\lb}_k[i] \zb_{k}[i] \label{subeq:superfluous-one} \\
		& \xb_{k}[i] \leq \hat{\ub}_k[i] \zb_{k}[i] \label{subeq:superfluous-three} \\
		& \xb_{k}[i] \leq \xbhat_{k}[i]  - \hat{\lb}_k[i] (1 -\zb_{k}[i]) \label{subeq:superfluous-four} \\
		& \xb_{k}[i] \geq \xbhat_{k}[i]  - \hat{\ub}_k[i] (1 -\zb_{k}[i]) \label{subeq:superfluous-two}
		\end{alignat}
		\label{eq:fin-system}
	\end{subequations}
	
	Constraints from \eqref{subeq:first-old} to \eqref{subeq:last-old} are those resulting from the original derivation of $\mathcal{A}'_k$ (see \citep{Anderson2019}). The others result from the inclusion of pre-activation bounds in \eqref{eq:mc-preact}. Of these, \eqref{subeq:superfluous-one} is implied by \eqref{subeq:first-old} if $\hat{\lb}_k[i] \leq 0$ and by the definition of pre-activation bounds (together with \eqref{subeq:geq-xhat}) if $\hat{\lb}_k[i] > 0$.  
	Analogously, \eqref{subeq:superfluous-two} is implied by \eqref{subeq:geq-xhat} if $\hat{\ub}_k[i] \geq 0$ and by \eqref{subeq:first-old} otherwise.
	
	By noting that no auxiliary variable is left in \eqref{subeq:superfluous-three} and in \eqref{subeq:superfluous-four}, we can conclude that these will not be affected by the remaining part of the elimination procedure. Therefore, the rest of the proof (the elimination of $\xb_{k-1}^0[1]$, $\xb_{k-1}^0[2]$, $\dots$) proceeds as in \citep{Anderson2019}, leading to $\mathcal{A}_{k, i}$. 
	Repeating the proof for each neuron $i$ at layer $k$, we get $\mathcal{A}_{k} = \text{Proj}_{\xb_{k-1}, \xb_{k}, \zb_{k}} (\mathcal{S}_{k})$.
	
\end{proof}

\section{Masked Forward and Backward Passes} \label{sec:masked-passes}

Crucial to the practical efficiency of our solvers is to represent the various operations as standard forward/backward passes over a neural network. This way, we can leverage the engineering efforts behind popular deep learning frameworks such as PyTorch~\citep{Paszke2017}. 
While this can be trivially done for the Big-M solver (appendix \ref{sec:dual-init}), both the Active Set ($\S$\ref{sec:active}) and Saddle Point ($\S$\ref{sec:sp}) solvers require a specialized operator that we call ``masked" forward/backward pass.
We now provide the details to our implementation.

\vspace{7pt}
As a reminder from $\S$\ref{sec:implementation-details}, masked forward and backward passes respectively take the following forms (writing convolutional operators via their fully connected equivalents): 
\begin{equation*}
	\left(W_k \odot I_{k}\right) \mbf{a}_{k}, \quad \left(W_k \odot I_{k}\right)^T \mbf{a}_{k+1},
\end{equation*}
where $\mbf{a}_{k} \in \mathbb{R}^{n_k}, \mbf{a}_{k+1} \in \mathbb{R}^{n_{k+1}}$. They are needed when dealing with the exponential family of constraints from the relaxation by~\citet{Anderson2020}.

\vspace{7pt}
Masked operators are straightforward to implement for fully connected layers (via element-wise products). We instead need to be more careful when handling convolutional layers. Standard convolution relies on re-applying the same weights (kernel) to many different parts of the image.
A masked pass, instead, implies that the convolutional kernel is dynamically changing while it is being slided through the image.
A naive solution is to convert convolutions into equivalent linear operators, but this has a high cost in terms of performance, as it involves much redundancy. Our implementation relies on an alternative view of convolutional layers, which we outline next.

\subsection{Convolution as Matrix-matrix Multiplication}
A convolutional operator can be represented via a matrix-matrix multiplication if the input is \emph{unfolded} and the filter is appropriately reshaped. The multiplication output can then be reshaped to the correct convolutional output shape. Given a filter $w \in \mathbb{R}^{c \times k_1 \times k_2}$, an input $\xb \in \mathbb{R}^{i_1 \times i_2 \times i_3}$ and the convolutional output $\texttt{conv}_{w}(\xb) = \yb \in \mathbb{R}^{c \times o_2 \times o_3}$, we need the following definitions:
\begin{equation}
\begin{split}
[\cdot]_{\mathcal{I}, \mathcal{O}} &:  \mathcal{I} \to \mathcal{O}\\
\{\cdot\}_{j} &:  \mathbb{R}^{d_1 \times \dots \times d_n} \to \mathbb{R}^{d_1 \times \dots \times d_{j-1} \times d_{j+1} \times \dots \times  d_n}  \\
\texttt{unfold}_w(\cdot) &: \mathbb{R}^{i_1 \times i_2 \times i_3} \to \mathbb{R}^{k_1 k_2 \times o_2 o_3} \\
\texttt{fold}_w(\cdot) &: \mathbb{R}^{k_1 k_2 \times o_2 o_3} \to \mathbb{R}^{i_1 \times i_2 \times i_3} 
\end{split}
\end{equation}

where the brackets simply reshape the vector from shape $\mathcal{I}$ to $\mathcal{O}$, while the braces sum over the $j$-th dimension.
 $\texttt{unfold}$ decomposes the input image into the (possibly overlapping) $o_2 o_3$ blocks the sliding kernel operates on, taking padding and striding into account. $\texttt{fold}$ brings the output of $\texttt{unfold}$ to the original input space.

Let us define the following reshaped versions of the filter and the convolutional output:
\begin{equation*}
	\begin{gathered}
		W_R = [w]_{\mathbb{R}^{c \times k_1 \times k_2}, \mathbb{R}^{c \times k_1  k_2}} \\
		\yb_R = [\yb]_{\mathbb{R}^{c \times o_2 \times o_3}, \mathbb{R}^{c \times o_2 o_3}}
	\end{gathered}
\end{equation*}

The standard forward/backward convolution (neglecting the convolutional bias, which can be added at the end of the forward pass) can then be written as:
\begin{equation}
\begin{split}
\texttt{conv}_{w}(\xb) &= [W_R \ \texttt{unfold}_{w}(\xb)]_{\mathbb{R}^{c\times o_2 o_3}, \mathbb{R}^{c \times o_2 \times o_3}}  \\
\texttt{back\_conv}_{w}(\yb) &= \texttt{fold}_w (W_R  ^T \  \yb_R ).
\end{split}
\label{eq:unfold-conv}
\end{equation}

\subsection{Masked Convolution as Matrix-matrix Multiplication}
We need to mask the convolution with a different scalar for each input-output pair. 
Therefore, we employ a mask $I \in \mathbb{R}^{c \times k_1  k_2 \times o_2 o_3}$, whose additional dimension with respect to $W_R$ is associated to the output space of the convolution.
Assuming vectors are broadcast to the correct output shape\footnote{if we want to perform an element-wise product $\mbf{a} \odot \mbf{b}$ between $\mbf{a} \in \mathbb{R}^{d_1 \times d_2 \times d_3}$ and $\mbf{b} \in \mathbb{R}^{d_1 \times d_3}$, the operation is implicitly performed as $\mbf{a} \odot \mbf{b}'$, where $\mbf{b}' \in \mathbb{R}^{d_1 \times d_2 \times d_3}$ is an extended version of $\mbf{b}$ obtained by copying along the missing dimension.}
, we can write the masked forward and backward passes by adapting equation \eqref{eq:unfold-conv} as follows:
\begin{equation}
\begin{gathered}
\texttt{conv}_{w, I}(\xb) = \left[\left\{W_R \ \odot I \odot \texttt{unfold}_{w}(\xb)\right\}_2 \right]_{\mathbb{R}^{c\times o_2 o_3}, \mathbb{R}^{c \times o_2 \times o_3}}  \\
\texttt{back\_conv}_{w, I}(\yb) = \texttt{fold}_w (\left\{W_R \ \odot I \odot \yb_R \right\}_1 ).
\end{gathered}
\end{equation}

Owing to the avoided redundancy with respect to the equivalent linear operation (e.g., copying of the kernel matrix, zero-padding in the linear weight matrix), this implementation of the masked forward/backward pass reduces both the memory footprint and the number of floating point operations (FLOPs) associated to the passes computations by a factor $(i_1 i_2 i_3) /(k_1 k_2)$. In practice, this ratio might be significant: on the incomplete verification networks ($\S$\ref{sec:incomp_verif}) it ranges from $16$ to $64$ depending on the layer.  \vspace{30pt}

\begin{figure*}[b!]
	\centering
	\begin{subfigure}{\textwidth}
		\includegraphics[width=1\textwidth]{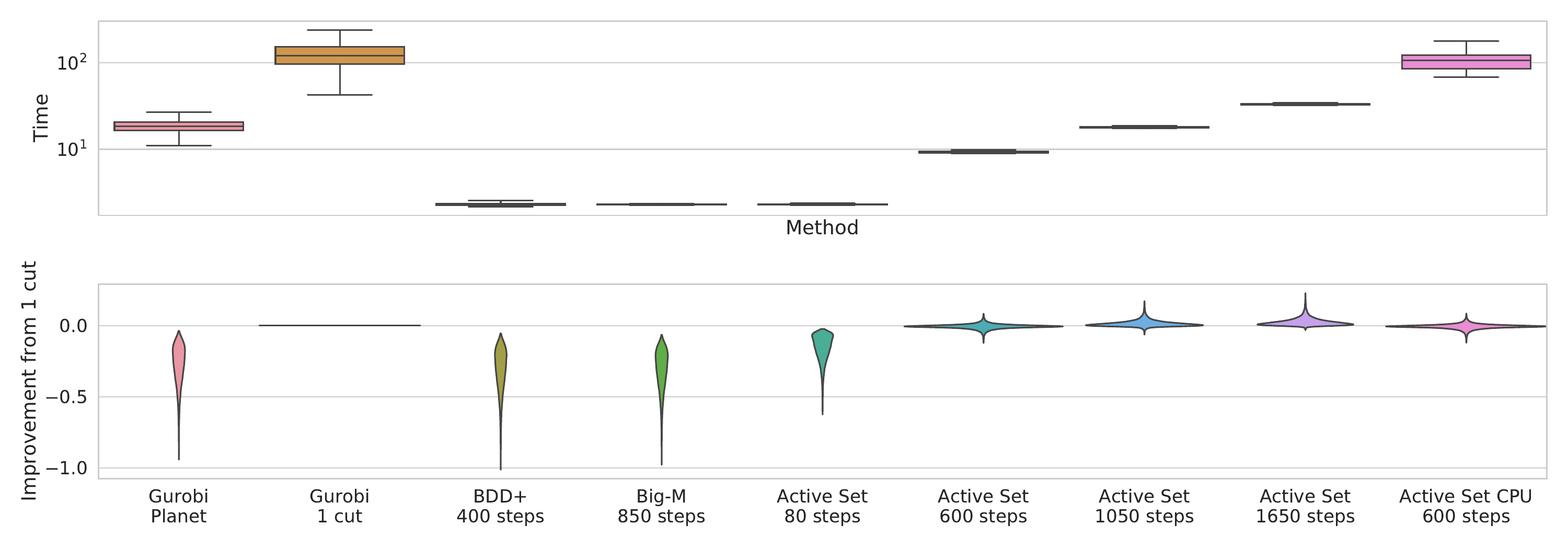}
		\vspace{-20pt}
		\subcaption{Speed-accuracy trade-offs of Active Set for different iteration ranges and computing devices.}
		\vspace{5pt}
	\end{subfigure}
	\begin{subfigure}{\textwidth}
		\includegraphics[width=1\textwidth]{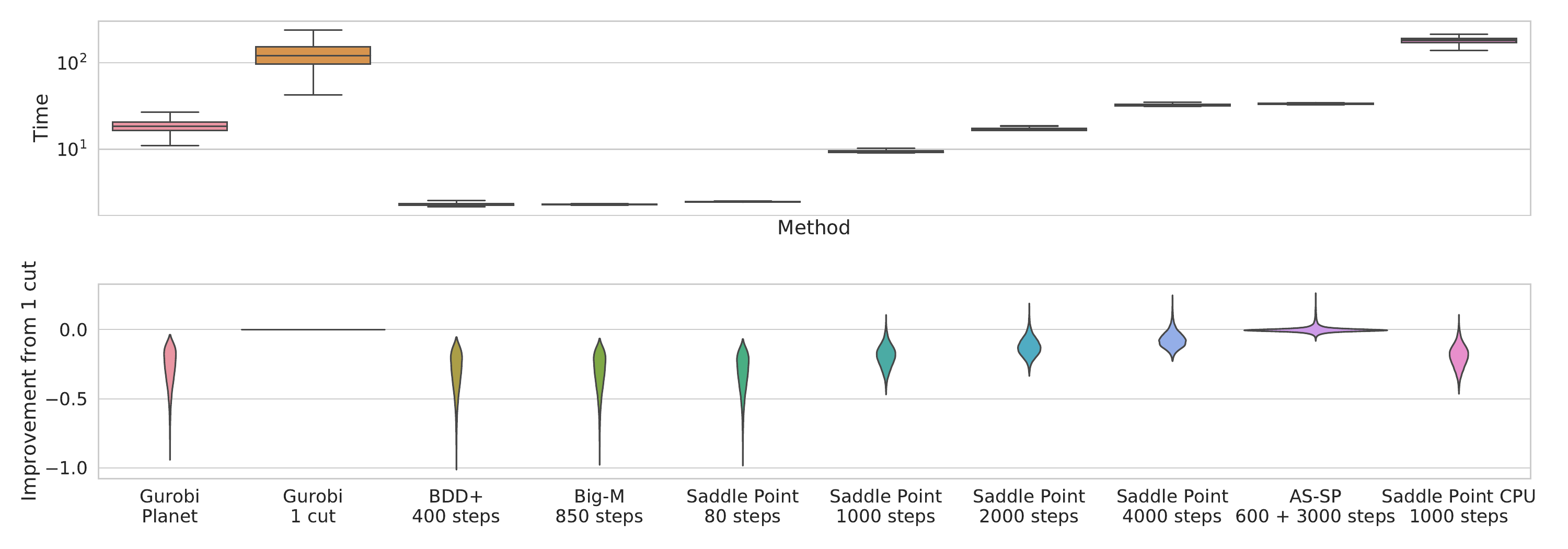}
		\vspace{-20pt}
		\subcaption{Speed-accuracy trade-offs of Saddle Point for different iteration ranges and computing devices.}
	\end{subfigure}
	\caption{\label{fig:madry8} 
		Upper bounds to the adversarial vulnerability for the network adversarially trained with the method by ~\citet{Madry2018}, from \citet{BunelDP20}.
		Box plots: distribution of runtime in seconds.
		Violin plots: difference with the bounds obtained by Gurobi with a cut from $\mathcal{A}_k$ per neuron; higher is better, the width at a given value represents the
		proportion of problems for which this is the result. }
\end{figure*}

\section{Experimental Appendix} \label{sec:exp-appendix}

We conclude the appendix by presenting supplementary experiments with respect to the presentation in the main paper.

\subsection{Adversarially-Trained CIFAR-10 Incomplete Verification} \label{sec:incomplete-madry}

\begin{figure*}[t!]
	\vspace{-8pt}
	\begin{subfigure}{.99\textwidth}
		\begin{minipage}{.48\textwidth}
			\centering
			\includegraphics[width=\textwidth]{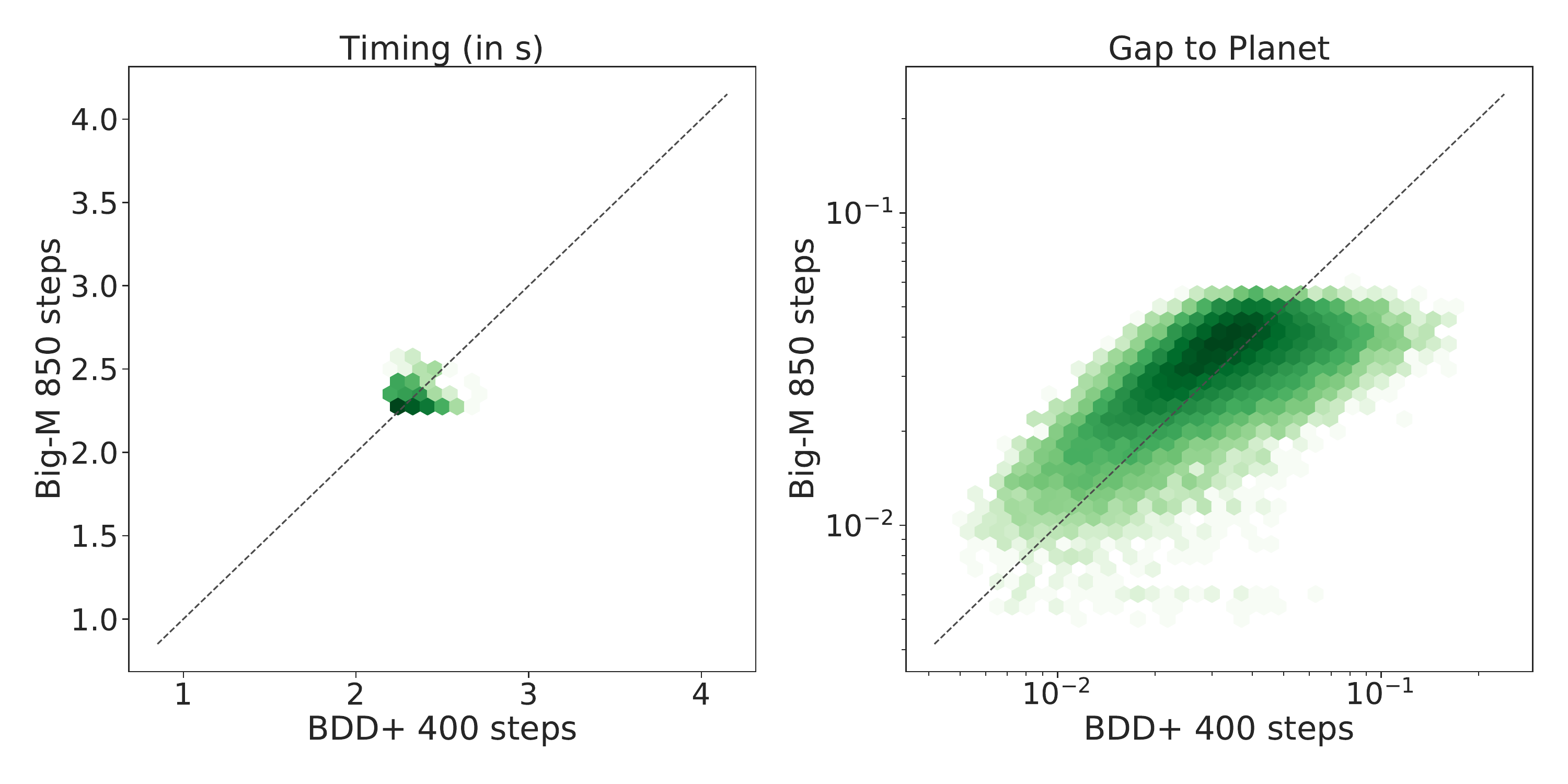}
			\vspace{-20pt}
		\end{minipage}
		\hspace{5pt}
		\begin{minipage}{.4\textwidth}
			\subcaption{\small Comparison of runtime (left) and gap to Gurobi Planet bounds (right). For the latter, lower is better.}
		\end{minipage}
		\vspace{8pt}
	\end{subfigure}
	\label{fig:madry-planet-gap}
	\begin{subfigure}{.99\textwidth}
		\begin{minipage}{.48\textwidth}
			\centering
			\includegraphics[width=\textwidth]{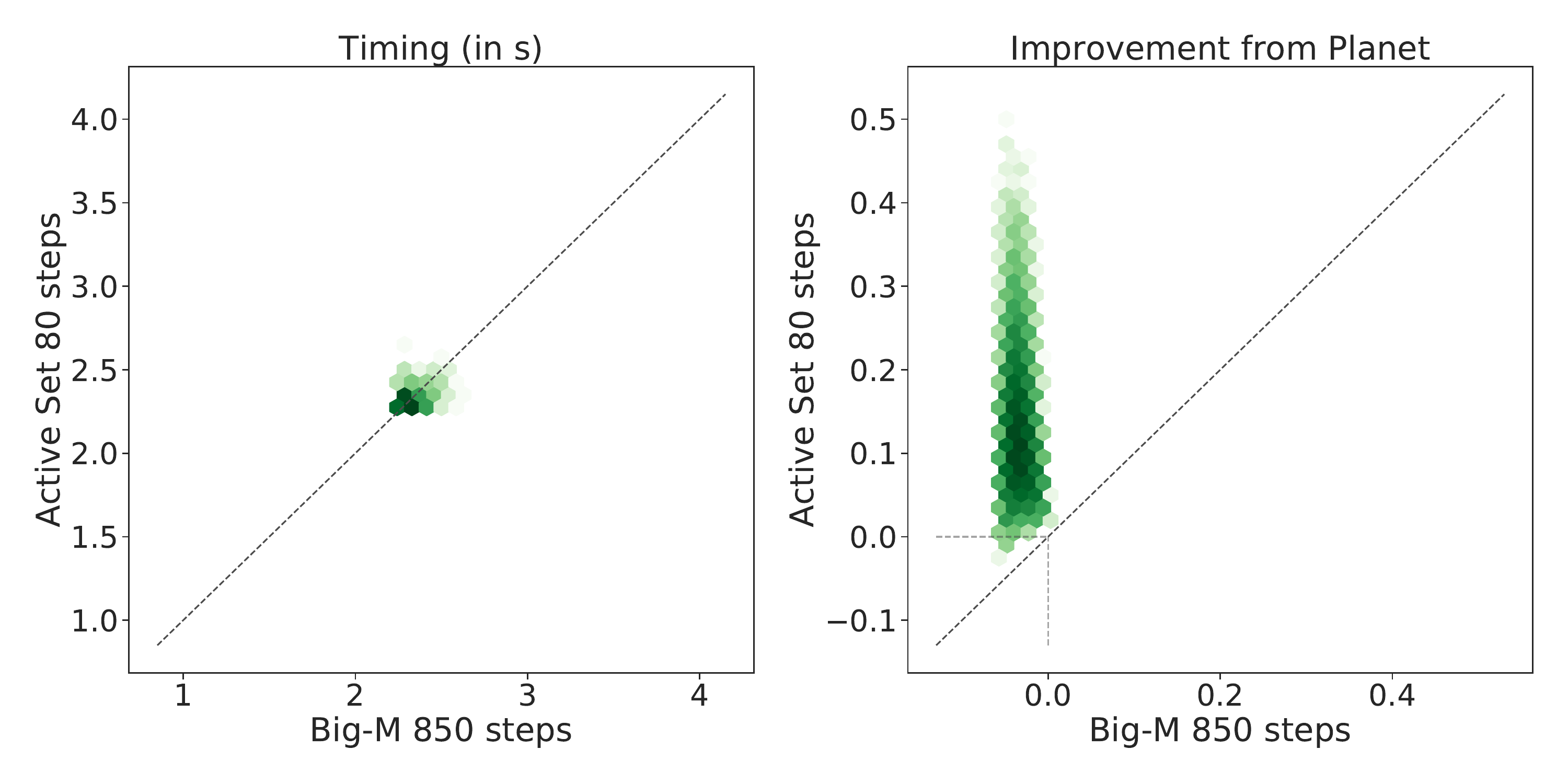}
		\end{minipage}
		\hspace{5pt}
		\begin{minipage}{.48\textwidth}
			\centering
			\includegraphics[width=\textwidth]{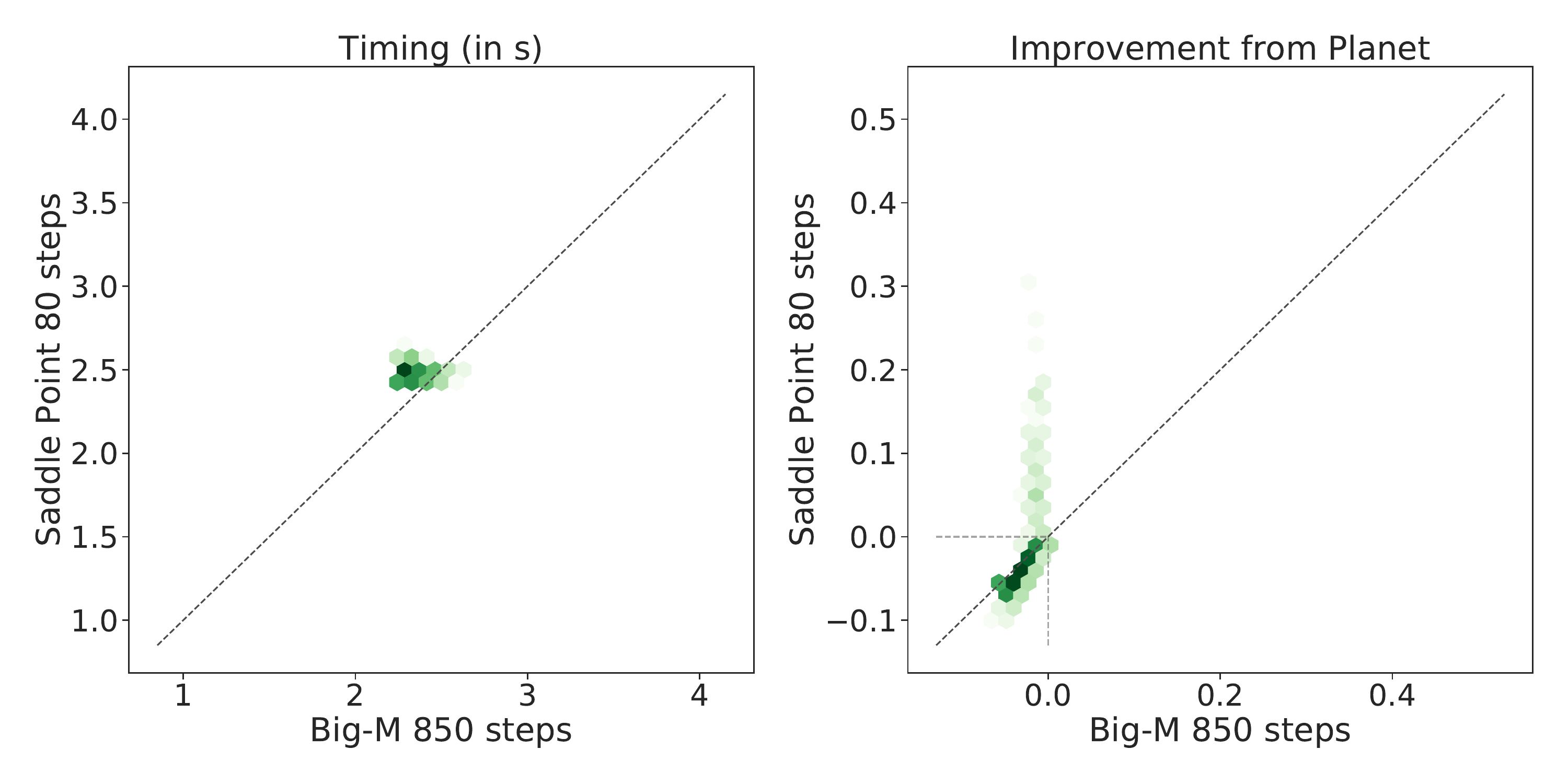}
		\end{minipage}
		\vspace{-8pt}
		\subcaption{Comparison of runtime (Timing) and difference with the Gurobi Planet bounds (Improvement from Planet). For the latter, higher is better.}
		\label{fig:madry-planet-improvement}
	\end{subfigure}
	\vspace{-5pt}
	\caption{Pointwise comparison for a subset of the methods on the data presented in figure~\ref{fig:sgd8}. Darker colour shades mean higher point density (on a logarithmic scale). The oblique dotted line corresponds to the equality.}
	\label{fig:madry8-pointwise}
\end{figure*}
\begin{figure*}[t!]
	\centering
	\begin{minipage}{.48\textwidth}
		\centering
		\includegraphics[width=\textwidth]{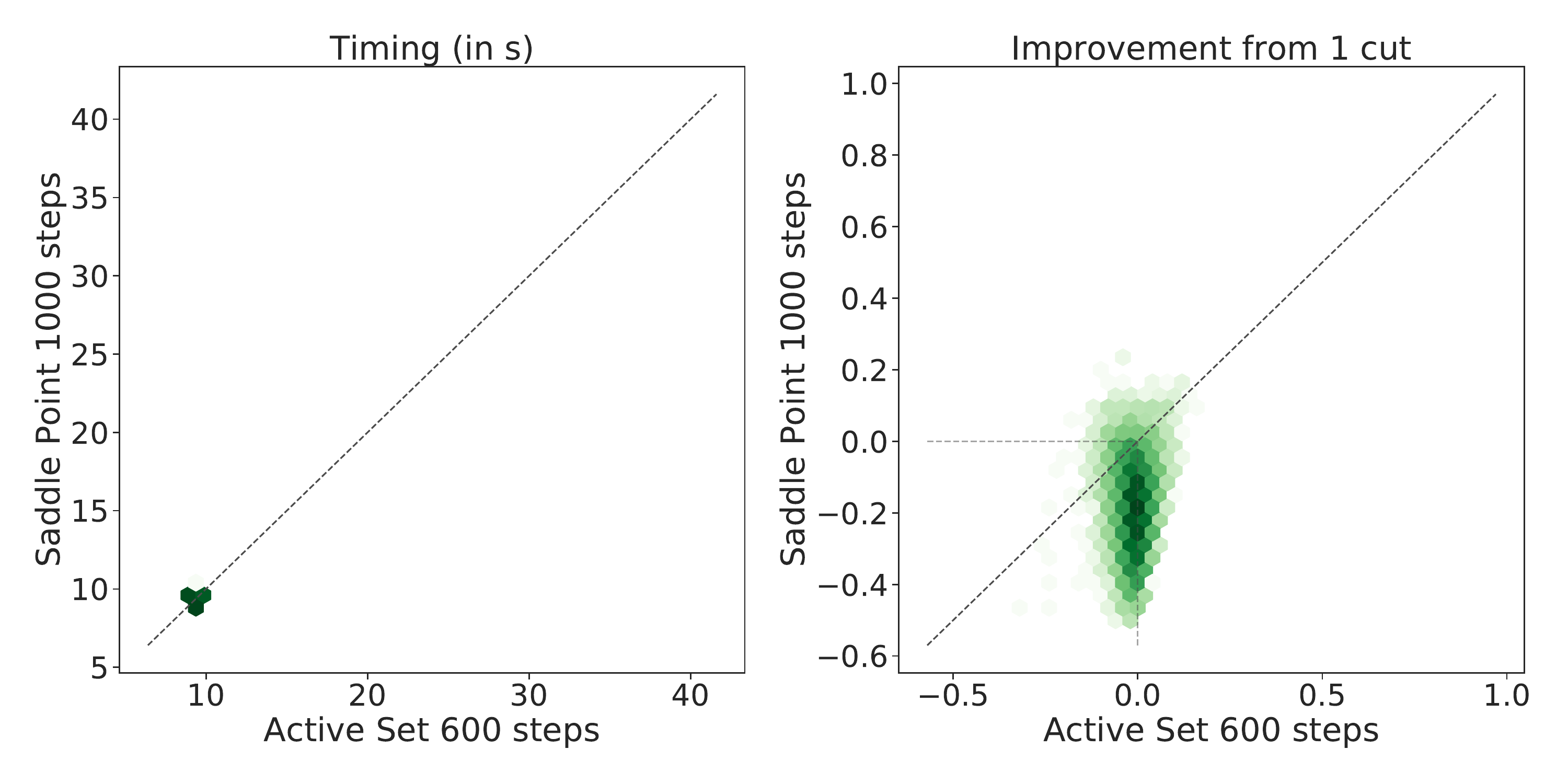}
	\end{minipage}
	\hspace{5pt}
	\begin{minipage}{.48\textwidth}
		\centering
		\includegraphics[width=\textwidth]{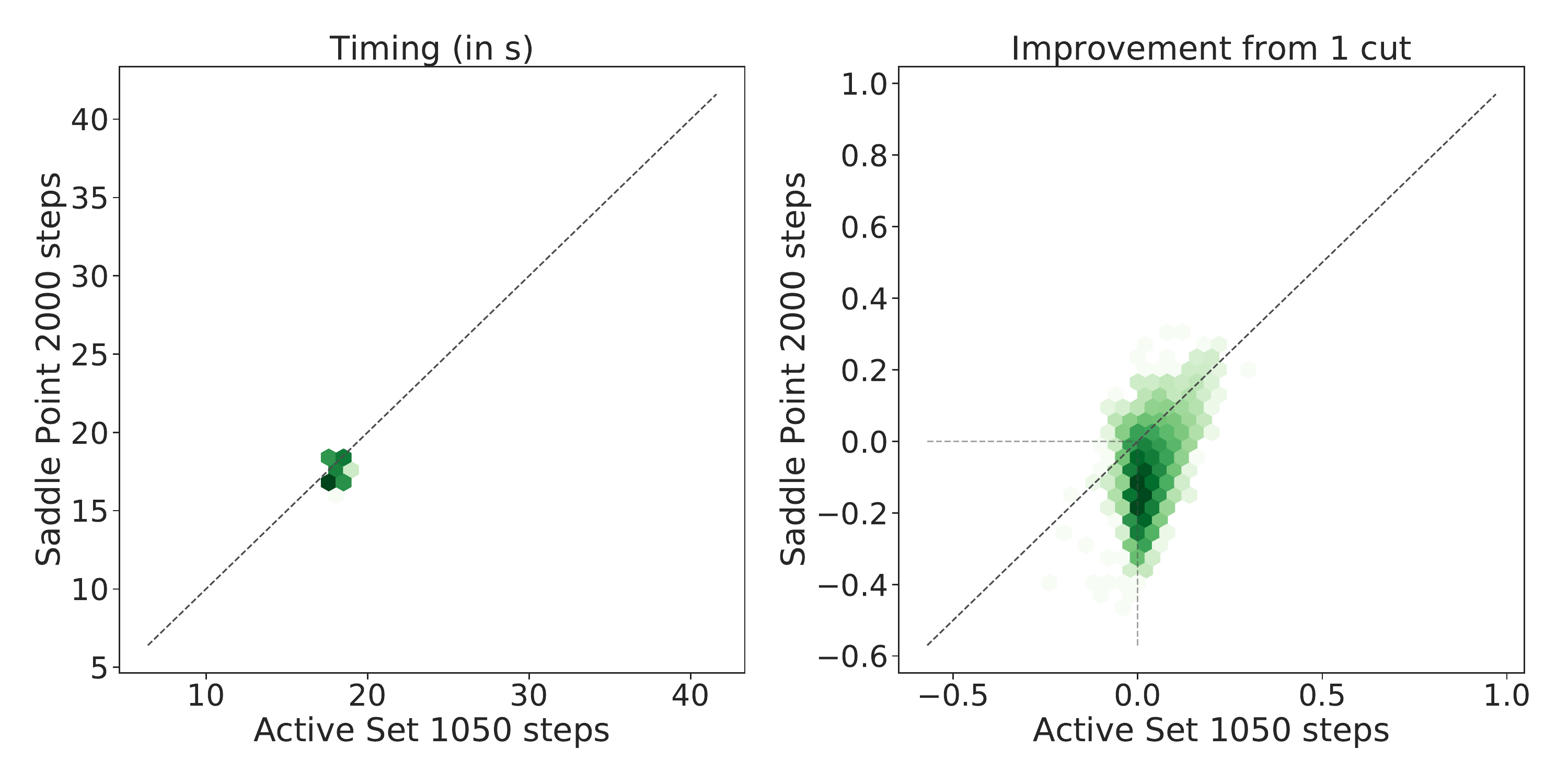}
	\end{minipage}
	\vspace{5pt}
	\begin{minipage}{.48\textwidth}
		\centering
		\includegraphics[width=\textwidth]{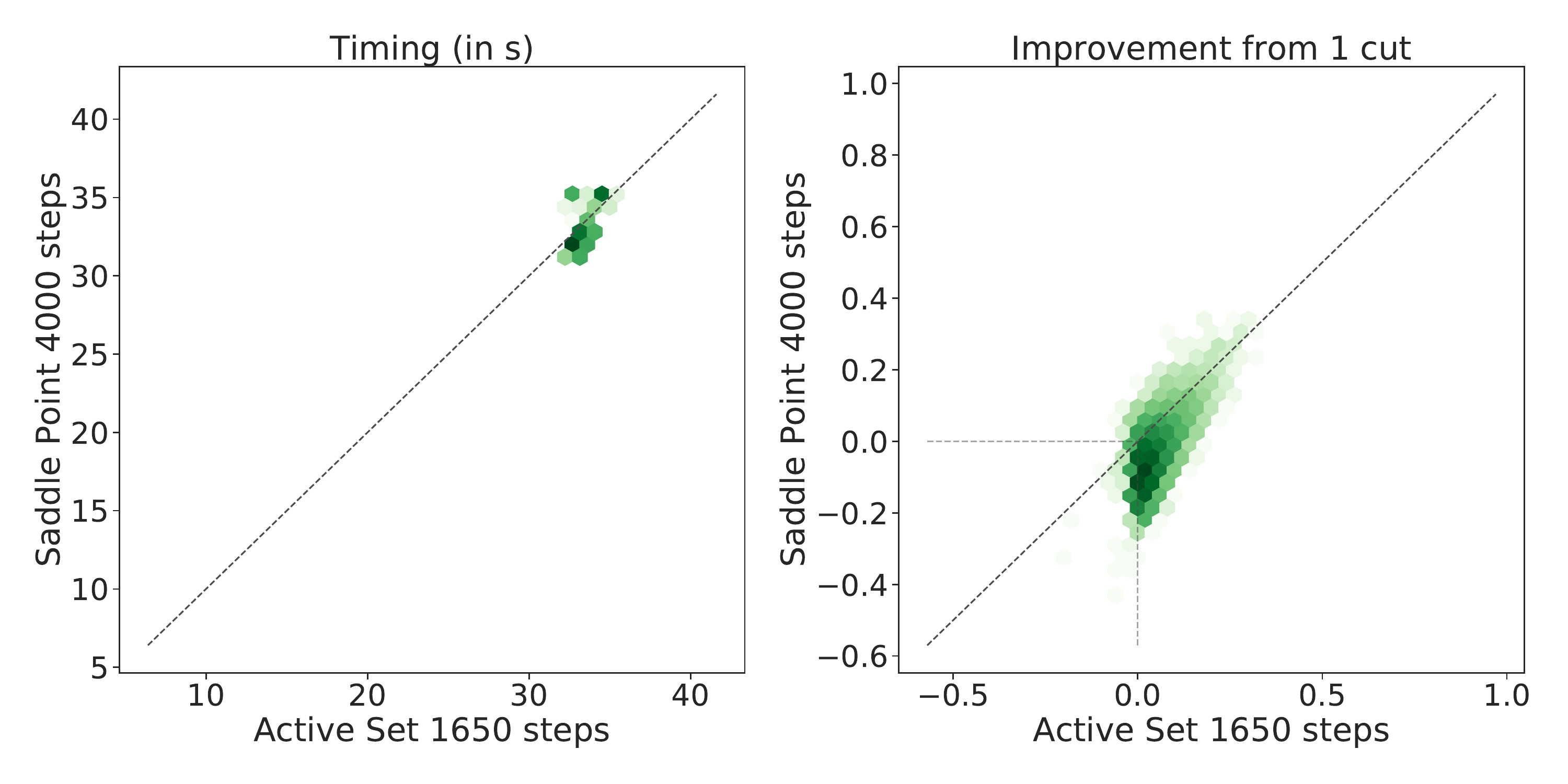}
	\end{minipage}
	\hspace{5pt}
	\begin{minipage}{.48\textwidth}
		\centering
		\includegraphics[width=\textwidth]{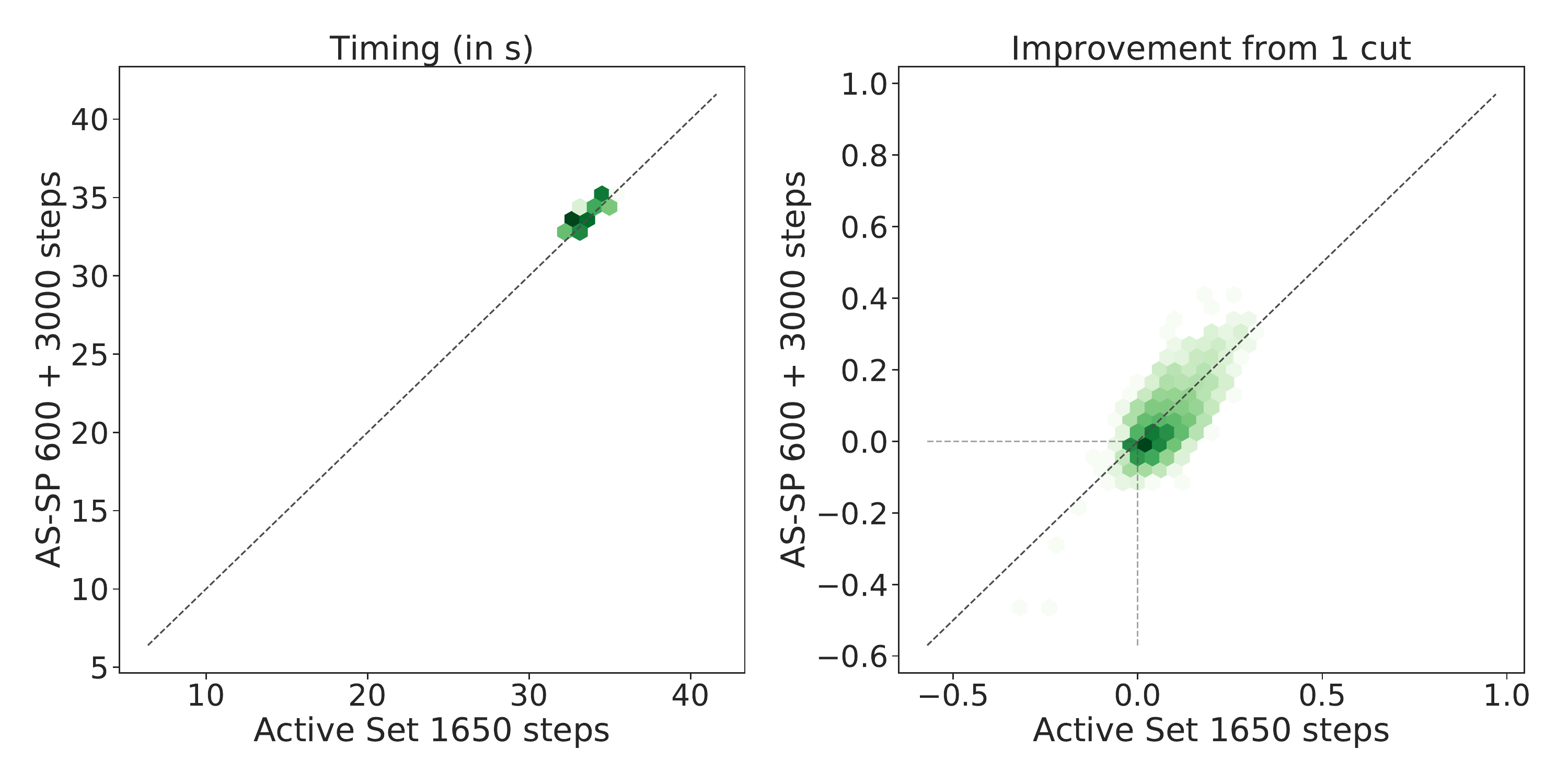}
	\end{minipage}
	\vspace{-15pt}
	\caption{Pointwise comparison between our proposed solvers on the data presented in figure \ref{fig:madry8}. Darker colour shades mean higher point density (on a logarithmic scale). The oblique dotted line corresponds to the equality. \label{fig:madry-as-vs-spfw}}
\end{figure*}

In addition to the SGD-trained network in $\S$\ref{sec:incomp_verif}, we now present results relative to the same architecture, trained with the adversarial training method by ~\citet{Madry2018} for robustness to perturbations of $\epsilon_{train} = 2/255$. Each adversarial sample for the training was obtained using 50 steps of projected gradient descent. 
For this network, we upper bound the vulnerability to perturbations with $\epsilon_{ver} = 2.7/255$.
Hyper-parameters are kept to the values tuned on the SGD-trained network from Section \ref{sec:incomp_verif}.

Figures \ref{fig:madry8}, \ref{fig:madry8-pointwise}, \ref{fig:madry-as-vs-spfw} confirm most of the observations carried out for the SGD-trained network in $\S$\ref{sec:incomp_verif}, with fewer variability around the bounds returned by Gurobi cut. 
Big-M is competitive with BDD+, and switching to Active Set after $500$ iterations results in much better bounds in the same time.
Increasing the computational budget for Active Set still results in better bounds than Gurobi cut in a fraction of its running time, even though the performance gap is on average smaller than on the SGD-trained network. As in Section  \ref{sec:incomp_verif} the gap between Saddle Point and Active Set, though larger here on average, decreases with the computational budget, and is further reduced when initializing with a few Active Set~iterations.
\begin{figure*}[t!]
	\vspace{-10pt}
	\centering
	\includegraphics[width=1\textwidth]{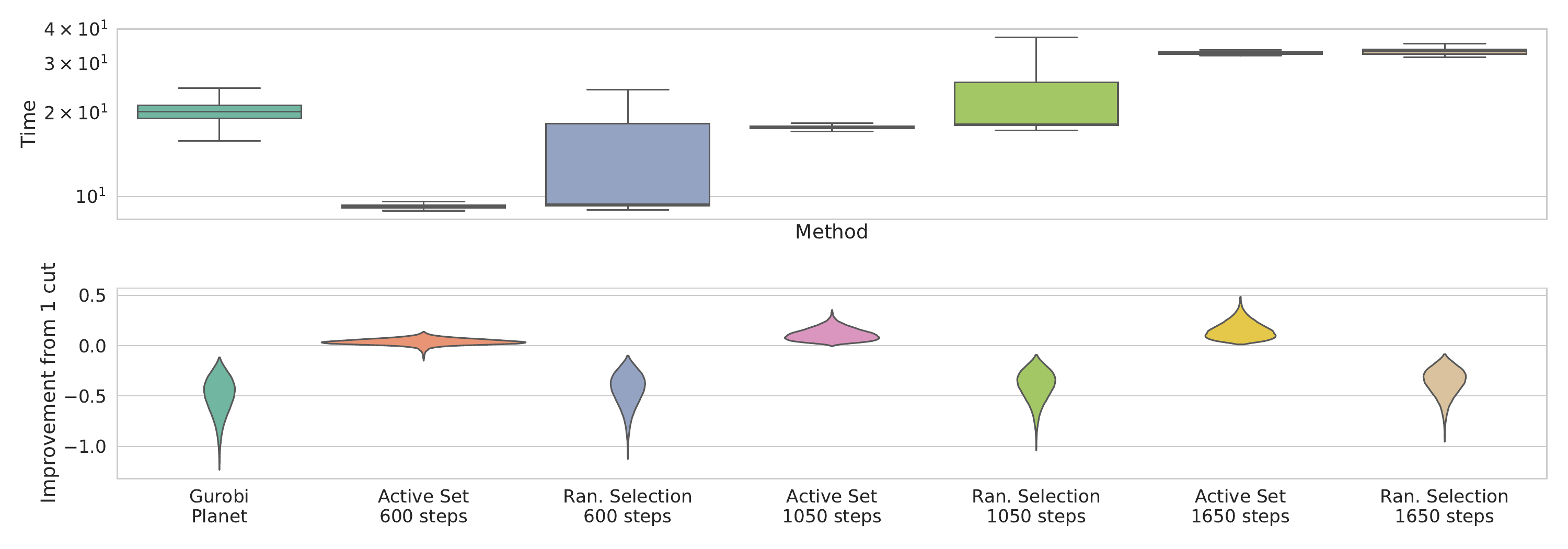}
	\vspace{-20pt}
	\caption{\label{fig:sgd8-selectionc} 
		Upper plot: distribution of runtime in seconds.
		Lower plot: difference with the bounds obtained by Gurobi with a cut from $\mathcal{A}_k$ per neuron; higher is better. Results for the SGD-trained network from \citet{BunelDP20}.  Sensitivity of Active Set to selection criterion (see $\S$\ref{sec:active-set-descr}).}
\end{figure*}
\begin{figure*}[b!]
	\centering
	\includegraphics[width=1\textwidth]{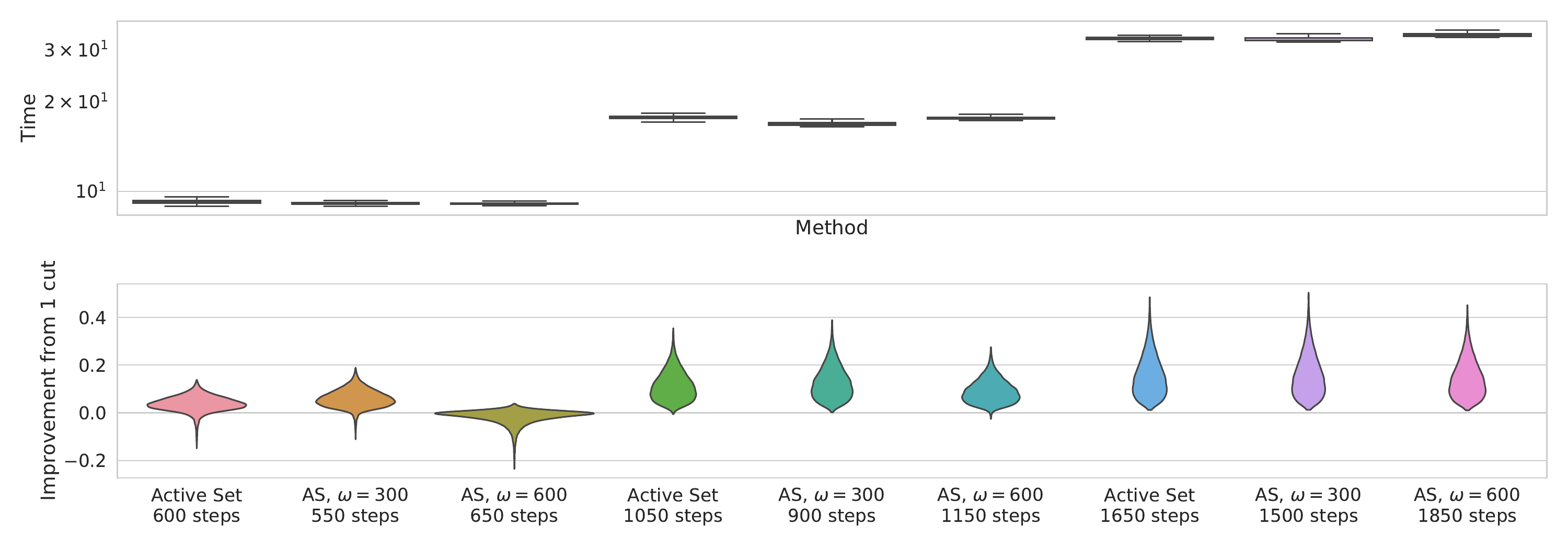}
	\vspace{-20pt}
	\caption{\label{fig:sgd8-selectioncf} 
		Upper plot: distribution of runtime in seconds.
		Lower plot: difference with the bounds obtained by Gurobi with a cut from $\mathcal{A}_k$ per neuron; higher is better. Results for the SGD-trained network from \citet{BunelDP20}. Sensitivity of Active Set to variable addition frequency $\omega$, with the selection criterion presented in $\S$\ref{sec:active-set-descr}.}
\end{figure*}

\subsection{Sensitivity of Active Set to Selection Criterion and Frequency}

In Section \ref{sec:active-set-descr}, we describe how to iteratively modify $\mathcal{B}$, the active set of dual variables on which our Active Set solver operates.  
In short, Active Set adds the variables corresponding to the output of oracle \eqref{eq:oracle} invoked at the primal minimiser of $\mathcal{L}_{\mathcal{B}}(\xb, \zb, \balpha, \bbeta_{\mathcal{B}})$, at a fixed frequency $\omega$.
We now investigate the empirical sensitivity of Active Set to both the selection criterion and the frequency of addition.

\begin{figure*}[t!]
	\centering
	\begin{subfigure}{\textwidth}
		\includegraphics[width=1\textwidth]{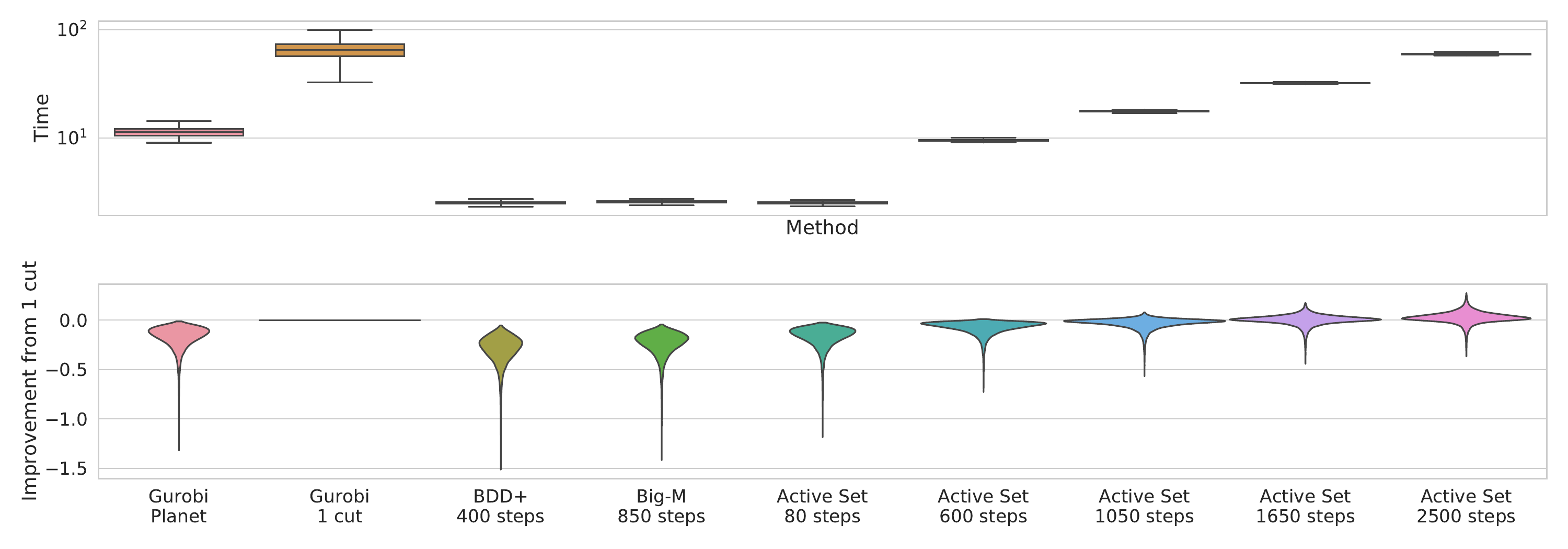}
		\vspace{-20pt}
		\subcaption{Speed-accuracy trade-offs of Active Set for different iteration ranges and computing devices.}
		\vspace{5pt}
	\end{subfigure}
	\begin{subfigure}{\textwidth}
		\includegraphics[width=1\textwidth]{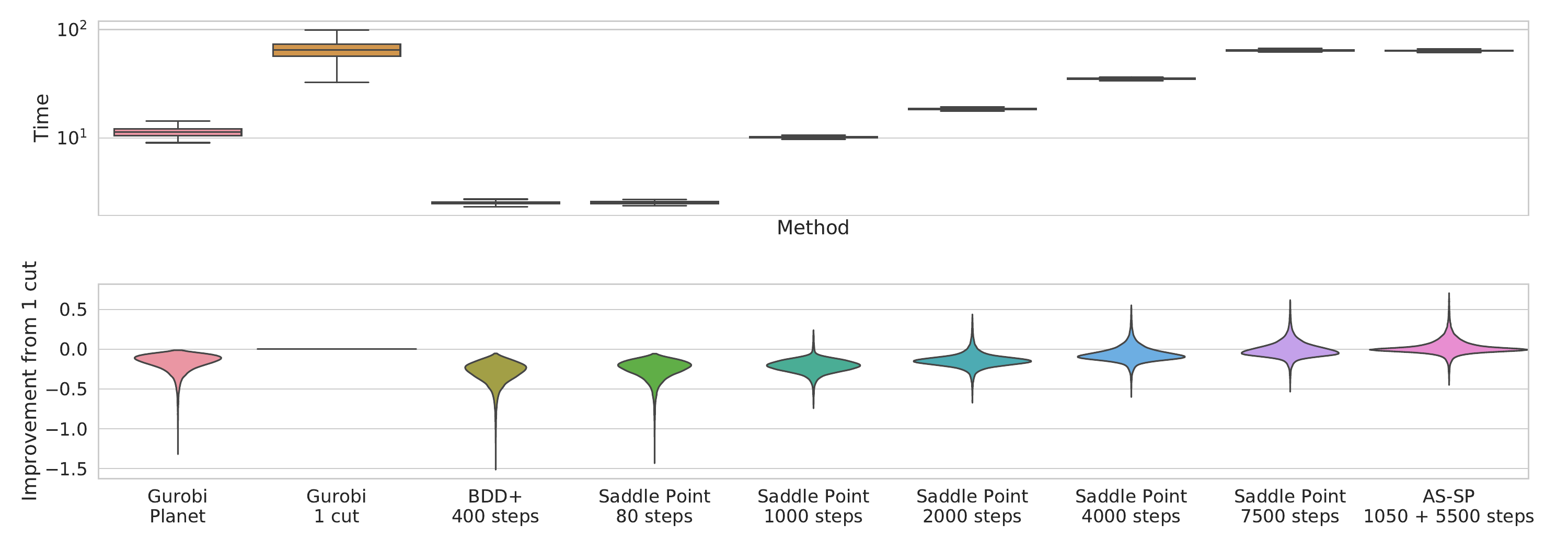}
		\vspace{-20pt}
		\subcaption{Speed-accuracy trade-offs of Saddle Point for different iteration ranges and computing devices.}
	\end{subfigure}
	\caption{\label{fig:mnistkw} 
		Upper bounds to the adversarial vulnerability for the MNIST network trained with the verified training algorithm by~\citet{Wong2018}, from \citet{Lu2020Neural}.
		Box plots: distribution of runtime in seconds.
		Violin plots: difference with the bounds obtained by Gurobi with a cut from $\mathcal{A}_k$ per neuron; higher is better. }
\end{figure*}

We test against \textbf{Ran. Selection}, a version of Active Set for which the variables to add are selected at random by uniformly sampling from the binary $I_k$ masks. 
As expected, Figure \ref{fig:sgd8-selectionc} shows that a good selection criterion is key to the efficiency of Active Set. In fact, random variable selection only marginally improves upon the Planet relaxation bounds, whereas the improvement becomes significant with our selection criterion from $\S$\ref{sec:active-set-descr}.

In addition, we investigate the sensitivity of Active Set (AS) to variable addition frequency $\omega$. In order to do so, we cap the maximum number of cuts at $7$ for all runs, and vary $\omega$ while keeping the time budget fixed (we test on three different time budgets). Figure \ref{fig:sgd8-selectioncf} compares the results for $\omega=450$ (Active Set), which were presented in $\S$\ref{sec:incomp_verif}, with the bounds obtained by setting $\omega=300$ and $\omega=600$ (respectively \textbf{AS $\boldsymbol{\omega=300}$} and \textbf{AS $\boldsymbol{\omega=600}$}). Our solver proves to be relatively robust to $\omega$ across all the three budgets, with the difference in obtained bounds decreasing with the number of iterations. Moreover, early cut addition tends to yield better bounds in the same time, suggesting that our selection criterion is effective before subproblem convergence.

\begin{figure*}[b!]
	\centering
	\begin{minipage}{.48\textwidth}
		\centering
		\includegraphics[width=\textwidth]{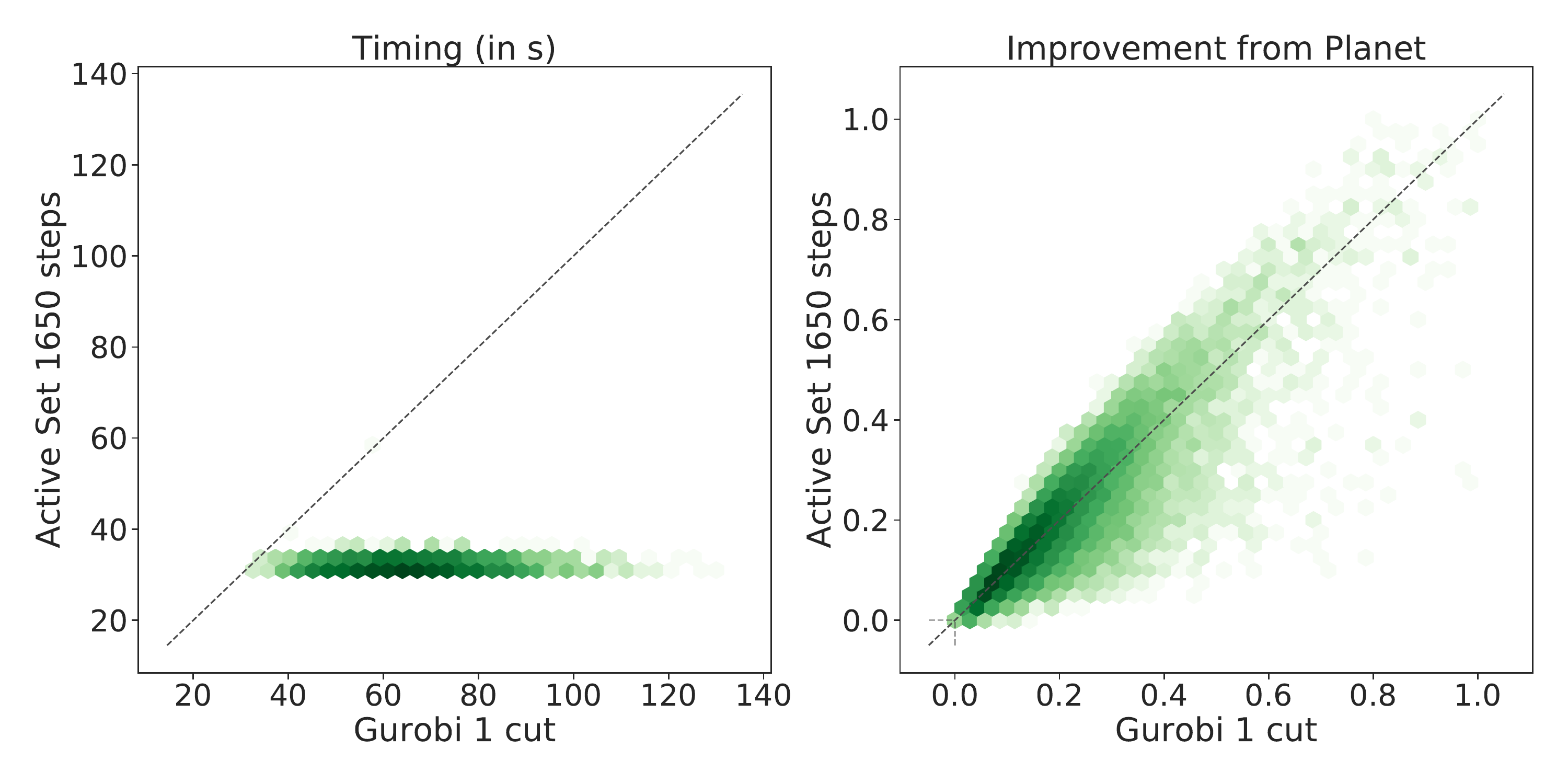}
	\end{minipage}
	\hspace{5pt}
	\begin{minipage}{.48\textwidth}
		\centering
		\includegraphics[width=\textwidth]{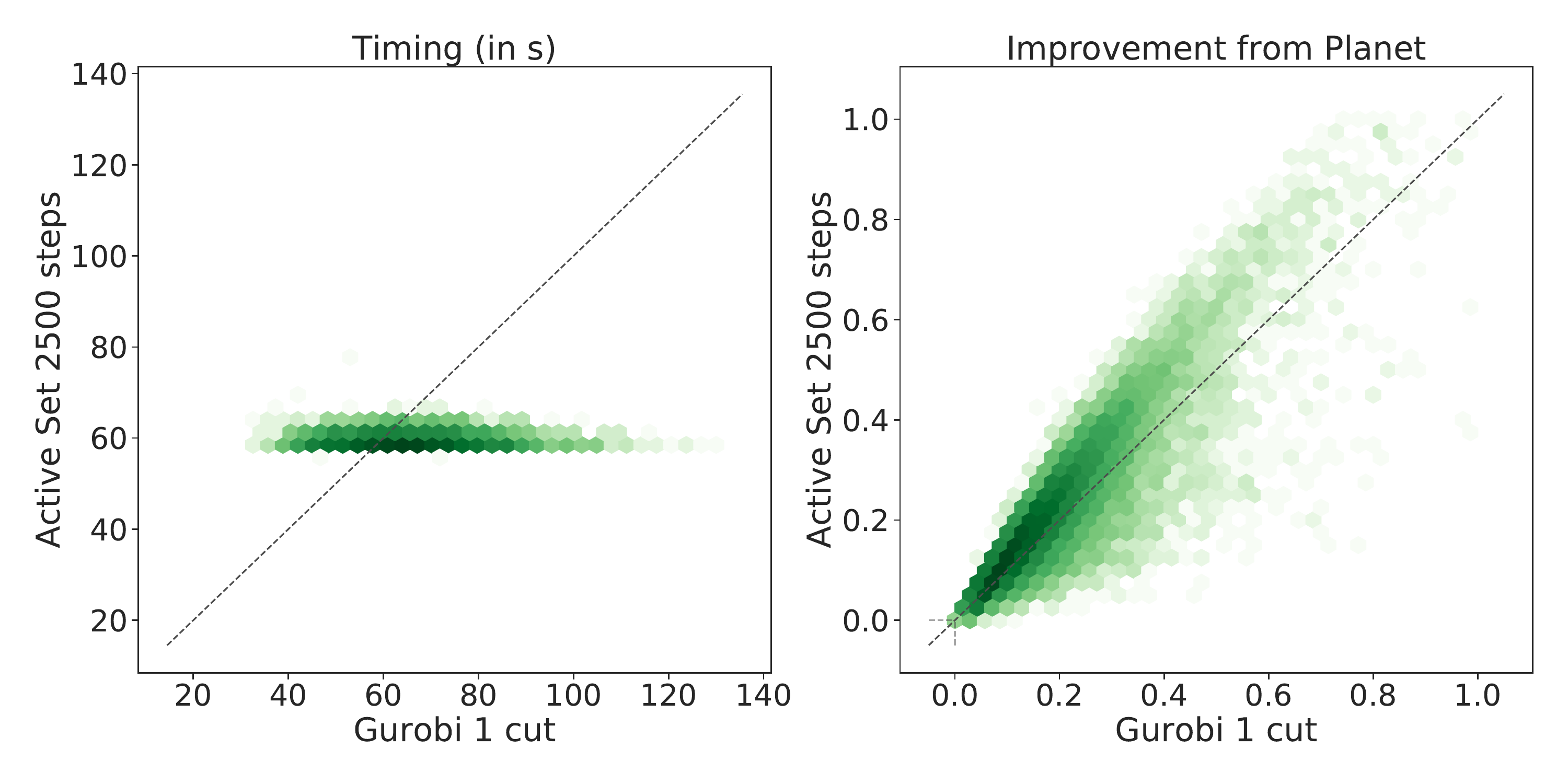}
	\end{minipage}
	\vspace{5pt}
	\begin{minipage}{.48\textwidth}
		\centering
		\includegraphics[width=\textwidth]{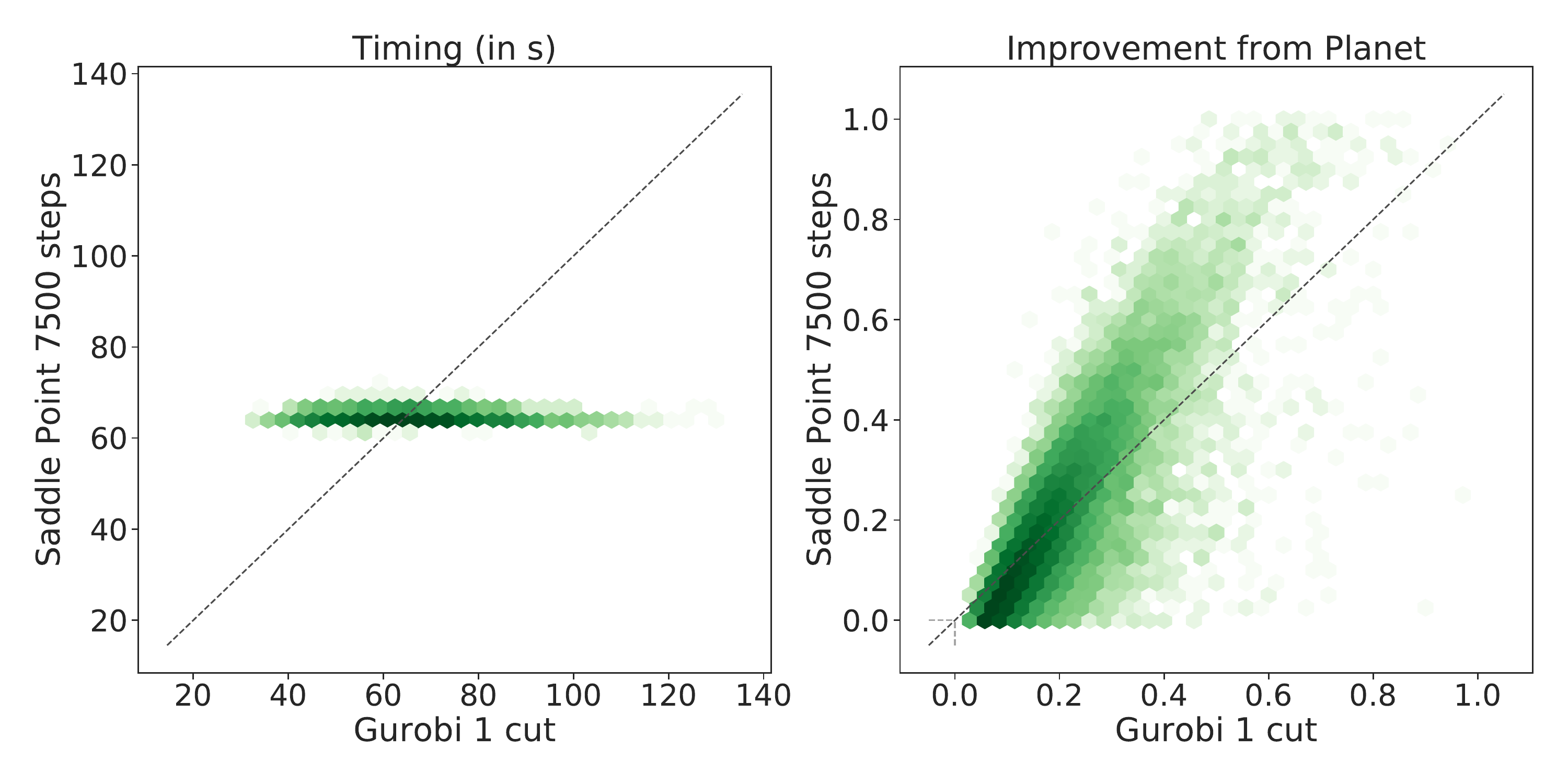}
	\end{minipage}
	\hspace{5pt}
	\begin{minipage}{.48\textwidth}
		\centering
		\includegraphics[width=\textwidth]{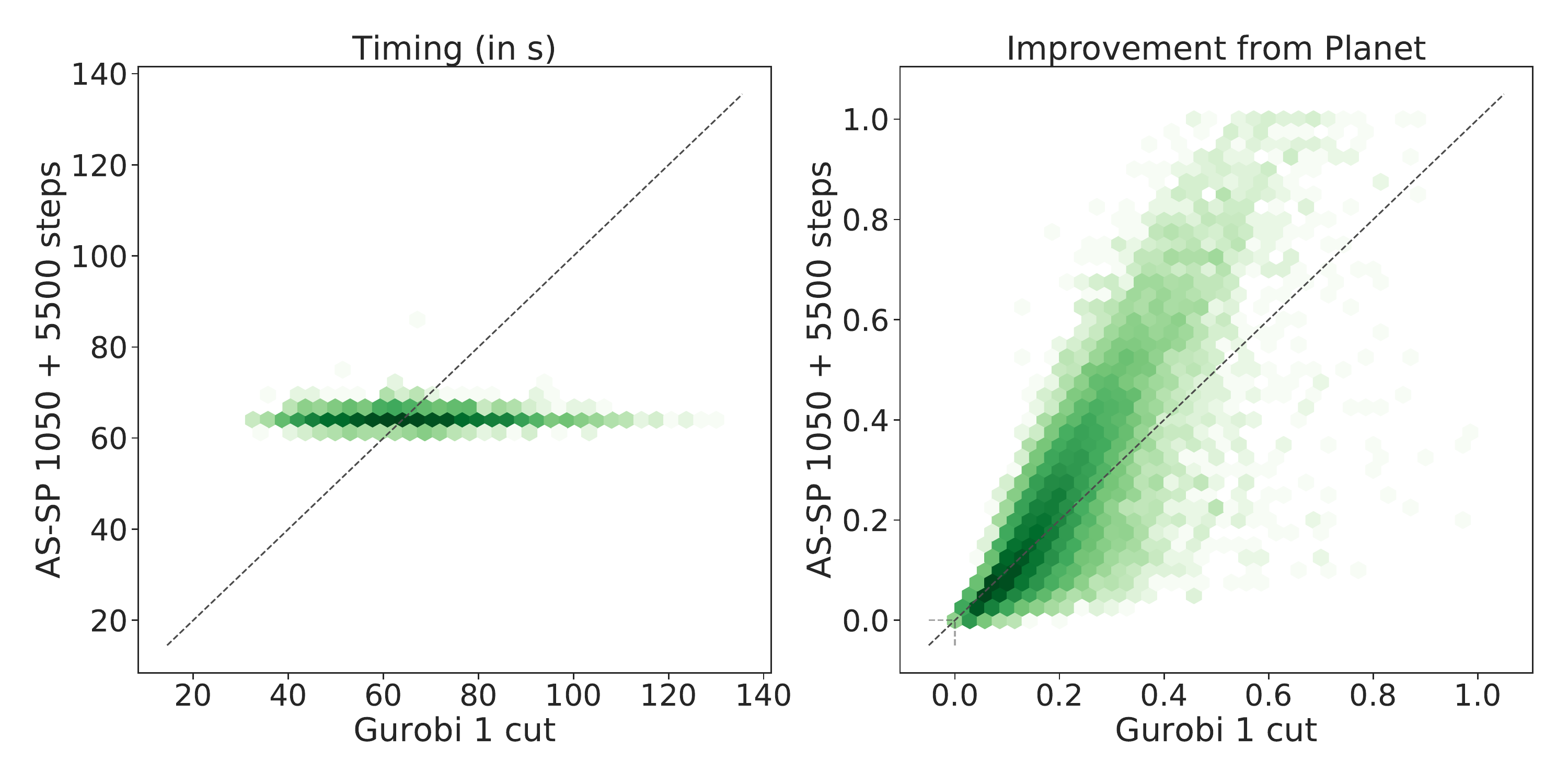}
	\end{minipage}
	\vspace{-15pt}
	\caption{Pointwise comparison for a subset of the methods on the data presented in Figure \ref{fig:mnistkw}. Comparison of runtime (left) and improvement from the Gurobi Planet bounds. For the latter, higher is better. Darker colour shades mean higher point density (on a logarithmic scale). The oblique dotted line corresponds to the equality. }
	\label{fig:mnist-pointwise}
	\vspace{-5pt}
\end{figure*}

\subsection{MNIST Incomplete Verification} \label{sec:incomplete-mnist}

We conclude the experimental appendix by presenting incomplete verification results (the experimental setup mirrors the one employed in Section~\ref{sec:incomp_verif} and appendix \ref{sec:incomplete-madry}) on the MNIST dataset~\citep{LeCun1998}. 

We report results on the ``wide" MNIST network from \citet{Lu2020Neural}, whose architecture is identical to the ``wide" network in Table \ref{tab:problem_size} except for the first layer, which has only one input channel to reflect the MNIST specification (the total number of ReLU activation units is $4804$).
As those employed for the complete verification experiments ($\S$\ref{sec:comp_verif}), and differently from the incomplete verification experiments in Section~\ref{sec:incomp_verif} and appendix \ref{sec:incomplete-madry}, the network was trained with the verified training method by \citet{Wong2018}. We compute the vulnerability to $\epsilon_{ver}=0.15$ on the first $820$ images of the MNIST test set.
All hyper-parameters are kept to the values employed for the CIFAR-10 networks, except the Big-M step size, which was linearly decreased from $10^{-1}$ to $10^{-3}$, and the weight of the proximal terms for BDD+, which was linearly increased from $1$ to $50$.

As seen on the CIFAR-10 networks, Figures \ref{fig:mnistkw}, \ref{fig:mnist-pointwise} show that our solvers for problem \eqref{eq:primal-anderson} (Active Set and Saddle Point) yield comparable or better bounds than Gurobi 1 cut in less average runtime. However, more iterations are required to reach the same relative bound improvement over Gurobi 1 cut (for Active Set, $2500$ as opposed to $600$ in Figures \ref{fig:sgd8}, \ref{fig:madry8}). 
Finally, the smaller average gap between the bounds of Gurobi Planet and Gurobi 1 cut (especially with respect to Figure \ref{fig:sgd8}) suggests that the relaxation by \citet{Anderson2020} is less effective on this MNIST benchmark.

\clearpage 
\bibliography{standardstrings,oval,explp}

\end{document}